\definecolor{darkblue}{rgb}{0.0,0.0,0.65}
\definecolor{darkred}{rgb}{0.68,0.05,0.0}
\definecolor{darkgreen}{rgb}{0.0,0.29,0.29}
\definecolor{darkpurple}{rgb}{0.47,0.09,0.29}
\definecolor{darkpink}{rgb}{0.9,0.17,0.31}
\newcommand{\real}{\mathbb{R}}
\newcommand{\E}{\mathbb{E}}
\newcommand{\calA}{\mathcal{A}}
\newcommand{\calO}{\mathcal{O}}
\newcommand{\calS}{\mathcal{S}}
\newcommand{\calE}{\mathcal{E}}
\newcommand{\regret}{\mathfrak{R}}
\DeclareMathOperator{\rd}{RD}
\DeclareMathOperator{\id}{ID}
\newcommand{\inner}[2]{\langle #1 \,, #2 \rangle}
\DeclareMathOperator*{\argmin}{arg\,min}
\newcommand{\btheta}{\bm{\theta}}
\newcommand{\frakD}{\mathfrak{D}}
\newcommand{\Learn}{\textsc{Learn}}
\newcommand{\conC}{\psi}
\newcommand{\conD}{\phi}
\newcommand{\conE}{\kappa}
\newcommand{\conF}{\nu}
\newcommand{\conB}{\xi}
\theoremstyle{plain}
\newtheorem{theorem}{Theorem}[section]
\newtheorem{lemma}[theorem]{Lemma}
\newtheorem{corollary}[theorem]{Corollary}
\theoremstyle{definition}
\newtheorem{definition}[theorem]{Definition}
\newtheorem{assumption}[theorem]{Assumption}
\theoremstyle{remark}
\newtheorem{remark}[theorem]{Remark}
\title{{\sc LEARN}: An Invex Loss for Outlier Oblivious Robust Online Optimization}
\author{%
  Adarsh Barik\\
  Institute of Data Science\\
  National University of Singapore\\
  Singapore, 117602 \\
  \texttt{abarik@nus.edu.sg} \\
  \And
  Anand Krishna\\
  Department of Electrical and Computer Engineering\\
  National University of Singapore\\
  Singapore, 117583 \\
  \texttt{akr@nus.edu.sg} \\
  \And
  Vincent Y. F. Tan\\
  Department of Mathematics\\
  Department of Electrical and Computer Engineering\\
  National University of Singapore\\
  Singapore, 119077 \\
  \texttt{vtan@nus.edu.sg} \\
}
\begin{document}

\maketitle

\begin{abstract}
We study a robust online convex optimization framework, where an adversary can introduce outliers by corrupting loss functions in an arbitrary number of rounds $k$, unknown to the learner. Our focus is on a novel setting allowing unbounded domains and large gradients for the losses without relying on a Lipschitz assumption. We introduce the Log Exponential Adjusted Robust and iNvex  (\Learn{}) loss, a non-convex (invex) robust loss function to mitigate the effects of outliers and develop a robust variant of the online gradient descent algorithm by leveraging the \Learn{} loss. We establish tight regret guarantees (up to constants), in a dynamic setting,  with respect to the uncorrupted rounds and conduct experiments to validate our theory. Furthermore, we present a unified analysis framework for developing online optimization algorithms for non-convex (invex) losses, utilizing it to provide regret bounds with respect to the \Learn{} loss, which may be of independent interest.
\end{abstract}

\section{Introduction}
\label{sec:intro}

In the mercurial era of digital information, the proliferation of online data streams has become ubiquitous, with applications spanning from financial markets and healthcare to e-commerce and social media~\citep{hendricks2016detecting,zhang2015health,fedor2019realtime,kejariwal2015analytics}. The surge in real-time data generation across these diverse fields has underscored the pressing need for adaptive optimization strategies. These strategies need to navigate dynamic and uncertain environments, especially as organizations rely more on online decision-making processes. In this context, the demand for efficient and robust optimization techniques in the presence of outliers has become more pronounced.
\paragraph{Outlier robustness} Pioneering works in robust statistics~\citep{huber1964robust,tukey1974fitting} emphasized mitigating outliers' impact on algorithm performance. In online optimization originally designed for uncorrupted streams, outliers pose challenges leading to suboptimal solutions~\citep{feng2017outlier}. This motivates developing outlier-robust online methodologies, an area of recent focus~\citep{diakonikolas2022streaming,van2021robust,bhaskara2020robust,feng2017outlier}.
\paragraph{Online convex optimization (OCO)}
The OCO framework~\citep{hazan2023introduction,orabona2023modern} offers a useful mathematical framework for studying sequential predictions, especially in the presence of adversarial interventions. The adversary chooses a deterministic sequence of convex losses $f_t$ for each $t\in \{ 1,2,\ldots, T \}$, where $T$ denotes the time horizon. In every round, the learner picks an action $\theta_t$ in a convex set $\Theta$ for which she suffers the loss $f_t(\theta_t)$ for the round $t$ and additionally gets to observe $f_t$. The learner's goal is to minimize the cumulative loss, i.e., $\sum_{t=1}^Tf_t(\theta_t)$, and her performance is evaluated using the notion of regret.
\paragraph{OCO with robustness} The OCO framework serves as a valuable starting point for studying algorithms that dynamically adapt to incoming data points while considering potential outliers' influence.
Drawing inspiration from the model proposed by \citet{van2021robust}, this work considers the OCO framework where the adversary is allowed to choose any $\calS\subseteq [T]$ of rounds to be `clean' and inject outliers into the remaining rounds. This adversarial influence is exerted by tampering with the side information $s_t$ integral to constructing the loss function $f_t$. Considering the example of the online ridge regression problem, where $s_t=(x_t, y_t)$ represents the incoming data point for round $t$, and the loss is defined as $f_t(s_t,\theta_t)=\frac{1}{2}(\inner{\theta_t}{x_t}-y_t)^2$, the adversary can corrupt either $x_t$, $y_t$ or both to corrupt the round $t$. The corrupted round is considered an outlier. We use $c_t$ to denote the uncorrupted side information and $o_t$ for the corrupted side information. The learner's performance is measured using regret with respect to only the clean rounds, and we refer to this as clean regret. In this paper, we explore the notion of dynamic regret -- a performance measure suitable for dynamic environments characterized by evolving data and shifting optimal decision parameters. Dynamic regret entails adapting strategies to compete effectively with evolving comparators, thereby mitigating the impact of changing environments on decision-making processes.  Let $\btheta \coloneqq (\theta_t)_{t\in [T]}$ be a sequence of learner's actions and  $(\omega^*_t)_{t\in [T]}$ be a sequence of benchmark points such that $\omega_t^*=\argmin_{\theta \in \Theta} f_t(s_t, \theta)$. We also define $\theta_t^*=\argmin_{\theta \in \Theta} f_t(c_t, \theta)$. Clearly, $\theta_t^* = \omega_t^*$ for clean rounds. Then, the clean dynamic regret is defined to be
 \begin{align}
 \label{eq: robust dynamic regret}
 \regret_{\rd}^T(\btheta, \calS) \coloneqq \sum_{t\in \calS} \big( f_t(s_t, \theta_t)-f_t(s_t, \theta^*_t) \big)~.
 \end{align}
 Our goal is to develop robust online algorithms to achieve sublinear regret with respect to $T$ and achieve optimal dependency on the number of outliers $k$, which is unknown.

\begin{figure}[!ht]
    \centering
    \begin{subfigure}{0.5\textwidth}
    \centering
    \includegraphics[scale=0.7]{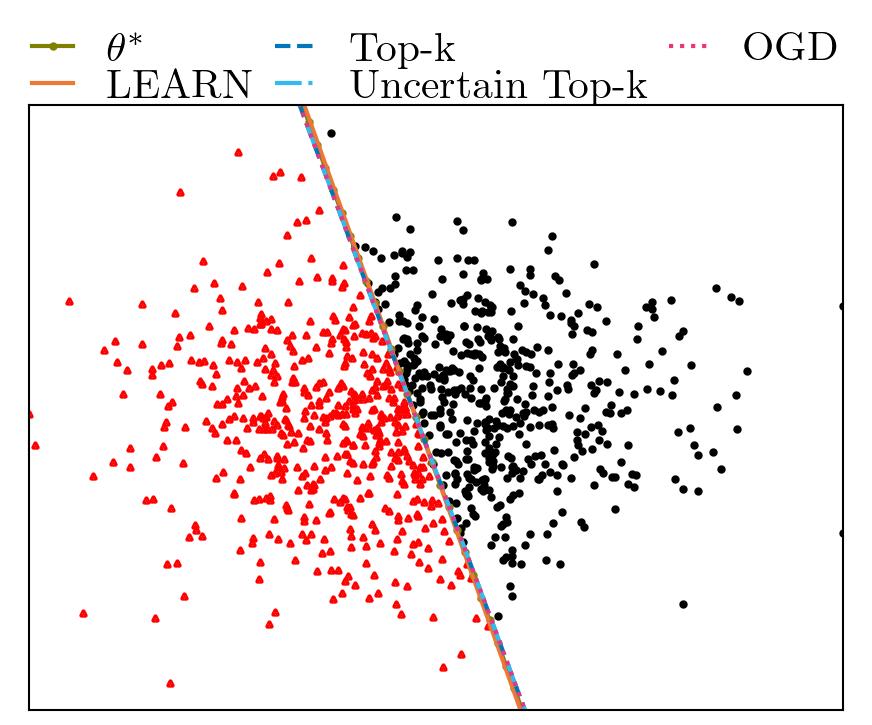}
        \caption{\label{fig: svm clean decision} $k = 0$ }
    \end{subfigure}%
    \begin{subfigure}{0.5\textwidth}
    \centering
    \includegraphics[scale=0.7]{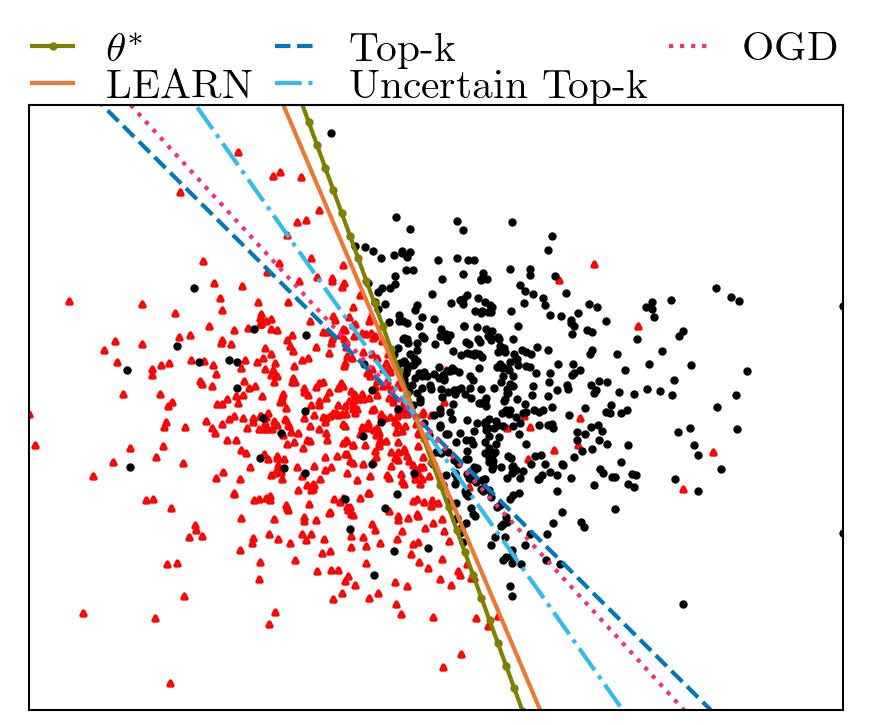}
        \caption{\label{fig: svm ktwothird decision} $k = T^{\frac{2}{3}}$ }
    \end{subfigure}%
    \caption{\label{fig: svm decision boundary}
    We compared \Learn{} (Algorithm~\ref{alg:outlier robust OGD}) and baselines (Section~\ref{sec: experimental validation main}) on binary classification with hinge loss and sequential data. Figure~\ref{fig: svm clean decision} shows the true decision boundary with no outliers and all the methods recover it exactly. Figure~\ref{fig: svm ktwothird decision} shows how decision boundaries for baseline methods deviate from the ground truth in the presence of outliers while \Learn{} remains robust to outliers.
    }
\end{figure}


This work focuses on $m$-strongly convex losses. Diverging from \citet{van2021robust}'s static model, we introduce novel OCO techniques for outlier scenarios when the number of outliers $k$ is unknown. Unlike their work assuming known $k$, and in contrast to common bounded domain/Lipschitz loss assumptions \citep{hazan2023introduction,orabona2023modern}, our approach allows unbounded domains and non-Lipschitz losses. Following \citet{jacobsen2023unconstrained}, we permit arbitrarily large gradients away from $f_t$'s minimizer, extending applicability to outlier scenarios.

\begin{table*}[!ht]
  \caption{Best known upper bounds on the dynamic regret of strongly convex loss functions $f_t$ in the presence of outliers (with respect to time-horizon $T$ and number of outliers $k$). We use $V_T \coloneqq \sum_{t=1}^T \| \theta_t^* - \theta_{t+1}^* \|$ to denote the path-length of benchmark points.  
  }
  \label{tab:SOTA results}
  \centering
  \begin{tabular}{lll}
    \toprule
         Setting & Best-known upper bound & Our upper bound \\
    \midrule
    Bounded domain & $\calO(\sqrt{V_T T} + k)$ & $\calO(\sqrt{V_T T} + k)$ \\
    Unbounded domain & Not Available & $\calO(\sqrt{V_T T} + V_T + \sqrt{T \log T} + k)$ \\
    \bottomrule
  \end{tabular}
\end{table*}

\subsection{Main contributions}

In this paper, we study a crucial question within the robust OCO framework: \vspace{-1em}
\begin{quote}
\emph{Can we achieve sublinear clean dynamic regret in $T$ with optimal dependency on the outlier count $k$, without its prior knowledge, when the adversary can corrupt an arbitrary subset of rounds?}
\end{quote} \vspace{-1em}
We answer the aforementioned question in the affirmative with a linear dependence on the number of outliers, which is known to be unavoidable in the robust OCO setting~\citep{van2021robust}. Our main contributions in this work are summarized below:

\begin{enumerate}
\item \textbf{Introduction of Robust Invex Loss:} At the core of our work lies the introduction of a robust, invex loss function, which we refer to as the \Learn{} loss~(refer to Equation~\eqref{eq: robust loss}). The limitations of convex loss functions in handling outliers are well studied in the literature~\citep{chen2022online}. As alternative solutions, researchers have investigated non-convex robust losses~\citep{barron2019general}. However, given the evolving landscape of mathematical studies in non-convex optimization, this transition poses challenges in the analysis. Striking a judicious balance, our \Learn~loss addresses the outlier-handling requirements and lends itself to rigorous theoretical analysis. %
%
%
%
\item \textbf{Unified analysis framework for robust online gradient descent algorithm:} We introduce \Learn, a robust variant of online gradient descent leveraging the outlier-handling \Learn{} loss (Algorithm ~\ref{alg:outlier robust OGD}). We provide a unifying analytical framework for analyzing Algorithm~\ref{alg:outlier robust OGD}'s regret, instrumental in establishing guarantees for our invex loss (Section~\ref{sec: reg analysis of robust loss}). This framework extends to geodesically convex~\citep{wang2023online} and pseudo-convex~\citep{zhang2018dynamic} losses, enhancing the applicability of our approach. We substantiate \Learn{}'s robustness via online SVM experiments (Fig.~\ref{fig: svm decision boundary}, Appendix~\ref{subsec:online-svm}), where it consistently predicts boundaries closely matching ground truth despite outliers, aligning with the theoretical guarantees (details are provided in Appendix~\ref{subsec:online-svm}).
\item \textbf{Sublinear clean dynamic regret in bounded domain:} In addition to analyzing regret for invex losses, we study the clean regret for strongly convex loss functions within the robust OCO framework (Section~\ref{subsec: clean regret analysis}). Our algorithm can handle potentially unbounded gradients while remaining agnostic to the number of outliers.
Our upper bound exhibits sublinear growth, specifically as $\calO(\sqrt{V_T T} + k)$, where optimal path variation $V_T$ is defined as $V_T = \sum_{t=1}^T \| \theta_t^* - \theta_{t+1}^* \|$.  The bound is optimal with respect to both $T$ and $k$ and matches the lower bound presented in~\citet{van2021robust} for the static case (Also see Appendix~\ref{sec: extension of van erven's result}).
\item \textbf{An expert framework with outliers and sublinear clean dynamic regret:} Extending dynamic regret bounds to unbounded domains is non-trivial, as shown by \citet{jacobsen2023unconstrained} who used an expert framework. Introducing outliers poses new challenges hindering a straightforward extension to our setup. We address this by constructing a novel expert framework with $N$ experts to handle outliers, incurring $\mathcal{O}(\sqrt{T\log N} + k)$ expert regret (Lemma~\ref{lem: bound on expert regret}). This expert framework construction/analysis could be of independent interest. Utilizing it, we derive clean dynamic regret bounds. Our bound is $\mathcal{O}(\sqrt{V_T T} + V_T + \sqrt{T\log T} + k)$. This matches existing bounds without outliers in $T, V_T$ dependence. Ours is the first work providing dynamic regret bounds for unbounded domains with outliers, to our knowledge.
\end{enumerate}
Table~\ref{tab:SOTA results} presents a concise summary of our results and contrasts them with the best existing results. In the bounded domain, the best-known upper bound presented in the table follows from the analysis of~\citet{van2021robust}. We note that \citet{van2021robust}'s results pertain to the static setting. In Appendix~\ref{sec: extension of van erven's result}, we present a straightforward extension of their result to the dynamic setting. However, unlike \citet{van2021robust}'s approach, our algorithm does not require prior knowledge of $k$ and is capable of handling unbounded gradients. In the unbounded domain, to the best of our knowledge, our work presents the first-known results. We further validate our theoretical results through comprehensive numerical experiments conducted in Section~\ref{sec: experimental validation main}.

\paragraph{Related works:} Our starting point is the robust OCO framework with outliers proposed by \citet{van2021robust}, though their analysis is limited to static regret with known outlier counts. The foundational works of \citet{huber1964robust} and \citet{tukey1974fitting} motivate the importance of handling outliers from a robust statistics perspective. Incorporating robust loss functions like Huber, Tukey's biweight, etc., has been explored by \citet{barron2019general,belagiannis2015robust} to enhance algorithm resilience against outliers. We build upon the expert framework approach of \citet{jacobsen2023unconstrained} for unconstrained domains, extending it to handle outliers. Our robust approach aligns with the broader literature on robust optimization in theoretical computer science and machine learning problems, as comprehensively surveyed by \citet{diakonikolas2023algorithmic}. A detailed section on the related works appears in the Appendix~\ref{sec:related work}.

\section{Preliminaries and notation}
\label{sec:prelim and notation}
We consider an online learning protocol delineated by the subsequent framework. At each iteration $t\in [T]:=\{1, 2,\ldots, T\}$, the learner picks an action $\theta_t\in \Theta$ where $\Theta\subseteq \real^d$ is a convex set. Note that we provide results for both bounded and unbounded $\Theta$. Subsequently, the adversary reveals an $m$-strongly convex, non-negative loss function $f_t(\theta) \coloneqq f_t(s_t, \theta)$ where $s_t$ denotes the side information for round $t$, resulting in the learner incurring a loss of $f_t(s_t,\theta_t)$. Note that the side information $s_t$ is also revealed along with the function $f_t$ to the learner in every round $t$.
For fixed $s_t$, $\nabla f_t(s_t,\theta_t)$ denotes the gradient of $f_t(\theta)$ with respect to $\theta$ at $\theta = \theta_t$. While we assume throughout that the gradient $\nabla f_t(s_t,\theta_t)$ exists, our analysis still holds when $f_t$ is not smooth.
We allow for the potential existence of large gradients by following the relaxed gradient assumption of \citet{jacobsen2023unconstrained} with unknown non-negative constants $G$ and $L$, i.e.,
\begin{align}
    \label{eq: grad inequality}
    \| \nabla f_t(s_t, \theta_t) \| \leq G + L \| \theta_t - \omega_t^* \|~,
\end{align}
where $\omega_t^*$ acts as the reference point. The round $t$ is said to be an outlier round if $s_t$ is corrupted 
and the corruption can be arbitrary. 
We use $\calS \subseteq [T]$ to denote the clean rounds. The maximum loss incurred by the learner is assumed to be smaller than $B$ in the clean rounds, i.e., $ \max_{t \in \calS} f_t(s_t,\theta_t) \leq B$, and the loss incurred by the learner may be unbounded in outlier rounds. Next, we provide a formal definition of invex functions, which generalize the concept of convex functions:
\begin{definition}[Invex functions]
    A differentiable function $g: \real^d \to \real$ is called a $\zeta$-invex function on set $\Theta$ if for any $\vartheta_1, \vartheta_2 \in \Theta$ and a vector-valued function $\zeta: \Theta \times \Theta \to \real^d$,
    \begin{align}
        g(\vartheta_2) \geq g(\vartheta_1) + \inner{\nabla g(\vartheta_1)}{\zeta(\vartheta_2, \vartheta_1)}
    \end{align}
    where $\nabla g(\vartheta_1)$ is the gradient of $g$ at $\vartheta_1$.
\end{definition}
\paragraph{\Learn{} loss} In this work, we introduce a non-convex robust loss function, Log Exponential Adjusted Robust and iNvex (\Learn{}) loss $g: \Theta \rightarrow \mathbb{R}$ that transforms the output of a convex function $f:\Theta \to \mathbb{R}$ as follows
\begin{align}
    \label{eq: robust loss}
    g(\theta)=-a\log\bigg(\exp\Big(-\frac{1}{a}f(\theta)\Big)+b\bigg)~,
\end{align}
where $a,b>0$ are constants that can be tuned. Robust losses have been widely used in various real-world problems to reduce the sensitivity of algorithms to outliers~\citep{barron2019general}. The structure of \Learn{} shares similarities with the log-sum-exp function, used previously for robustness~\citep{lozano2013minimum}. To understand how \Learn{} loss works to mitigate its susceptibility to outliers, we examine its behavior by comparing how $g$ resembles the square loss $f(r)=r^2$ where $r$ denotes the residual, and the minimum is attained at  $0$.
The \Learn{} loss $g$ closely follows the behavior of the function $f$ in the vicinity of the origin and gradually levels off as we move away from it (Figure~\ref{fig: loss comparison} in Appendix~\ref{sec: comparison of robust losses}). Importantly, the minimizers for both $g$ and $f$ are the same.
Observe that for \Learn{} loss, the norm of the gradient monotonically decreases with the distance from the minimum. This reduces the adverse influence of outliers during a gradient descent style update. This property, known as the {\em redescending property}~\citep{hampel2011robust} in the M-estimation literature,  enhances the robustness of the model. Additionally, it is worth noting that empirical studies have shown that non-convex losses offer better robustness when compared to the convex alternatives~\citep{maronna2006robust}.

\section{Robust online convex optimization}

\label{subsec: analysis robust dynamic regret}

In this section, we develop \Learn{}, a robust variant of the online gradient descent algorithm aimed at minimizing regret within the robust (OCO) framework. We will consistently refer to this algorithm simply as \Learn{}. This algorithm leverages the previously introduced \Learn{} loss.

 Given a convex set $\Theta\subseteq \real^d$, the algorithm starts by picking an arbitrary action $\theta_1\in \Theta$. For each round $t \in [T]$, the learner is revealed a convex loss function $f_t$ in conjunction with the side-information $s_t$. Note that the adversary is capable of corrupting the side information for any $k$ out of the $T$ rounds and the learner does not see whether $s_t$ is corrupted. To mitigate the challenge presented by the corruption introduced by the adversary, our algorithm leverages \Learn{} loss. The algorithm proceeds to construct $g_t$ based on the original loss function $f_t$ (Step~\ref{step: construct g}) and
 executes the subsequent action through a projected gradient descent update on the robust loss $g_t$.

 \newcommand{\alglinelabel}{%
  \addtocounter{ALC@line}{-1}
  \refstepcounter{ALC@line}
  \label
}

\floatname{algorithm}{Algorithm}
\begin{algorithm}[H]
\caption{\Learn{}}\label{alg:outlier robust OGD}
\begin{algorithmic}[1]
     \STATE \textbf{Initialize: } $\theta_1 \in \Theta$
     \FOR{$t = 1, \ldots, T$}
        \STATE \textbf{Play: } $\theta_t$
        \STATE \textbf{Observe: } Side information $s_t$ and strongly convex loss function $f_t(s_t, \theta)$
        \STATE \textbf{Construct: } For $a, b > 0$,\\ $g_t(\theta) \coloneqq g_t( s_t, \theta) = - a \log( \exp( - \frac{1}{a} f_t(s_t, \theta) ) + b )$ \alglinelabel{step: construct g}
        \STATE \textbf{Update: } $\theta_{t+1} \gets \Pi_{\Theta}(\theta_t - \alpha_t \nabla g_t( \theta_t))$\alglinelabel{step: choose action} \COMMENT{$\Pi_{\Theta}(\cdot)$ is the projection on convex set $\Theta$}
    \ENDFOR
\end{algorithmic}
\end{algorithm}

In the context of encountering outliers, it is relevant to analyze the clean dynamic regret as defined in Equations~\eqref{eq: robust dynamic regret}, a task that we undertake in  Section~\ref{subsec: clean regret analysis}. However, we go beyond the confines of this initial exploration and examine the regret guarantees that pertain to the \Learn{} loss itself. This exploration holds intrinsic interest, as it unveils our ability to extend the regret analysis to encompass a broader spectrum of invex losses beyond our originally introduced \Learn{} loss denoted by $g$.

To streamline the presentation of the results in this paper, we introduce the following notations:
\begin{alignat}{5}
\label{eq: constant parameters}
 \conC &\coloneqq G + \max\Big( \frac{mL}{2ab}, \frac{4a^2L}{m^2b} \Big),\;\;  & \conD  &\coloneqq \frac{1}{b} \max\Big( \frac{m}{2a}, \frac{4a^2}{m^2} \Big),\;\;  & \conE &\coloneqq \frac{1}{b} \max\Big( \frac{m}{2a},
 \frac{2a}{m} \Big), \\
 \conF &\coloneqq \frac{1}{b} \max\Big( a, \frac{1}{a} \Big),\;\; & \conB &\coloneqq 1 + b \exp\Big( \frac{B}{a} \Big)~.
\end{alignat}
The learner has the flexibility to fine-tune these constants by carefully selecting parameters $a$ and $b$. For corrupted rounds, we introduce an additional quantity $\delta_{\calS} \coloneqq \max_{t \in [T] \setminus \calS} \| \omega_t^* - \theta^*_t \| $.
This quantity measures the adversary's capability to influence the optimal action of round $t$, denoted by $\theta_t^*$. Recall that for uncorrupted rounds, $\omega_t^* = \theta_t^*$.

\subsection{Regret analysis of the \Learn{} loss}
\label{sec: reg analysis of robust loss}
Here, as an independent analysis, we derive regret bounds associated with invex losses $g_t$ in isolation, agnostic to the presence of outliers.\footnote{We omit the effect of the side information $s_t$ for this analysis by defining $g_t(\theta) \coloneqq g_t(s_t, \theta)$ for a fixed $s_t$.} In particular, we study the dynamic regret $\regret_{\id}^T$ associated with the invex losses $g_t$ defined as
\begin{align}
    \label{eq: invex dynamic regret}
    \regret_{\id}^T(\btheta) \coloneqq \sum_{i=1}^T g_t(\theta_t) - \sum_{i=1}^T g_t(\omega_t^*)~.
\end{align}
Note that by construction $\omega_t^* = \arg\min_{\theta \in \Theta} g_t(\theta)$.

We introduce a versatile framework designed to bound the dynamic regret in the context of online invex optimization. Notably, this framework is not confined to \Learn{} loss; it extends seamlessly to encompass other invex losses, including but not limited to geodesically convex loss, weakly pseudo-convex loss, and similar cases.  In this setup, we will assume that $\Theta$ is a closed and bounded convex set.
We further assume that all the convex losses are bounded, i.e., $ f_t(s_t, \theta) \leq B < \infty$ for all $t \in [T]$ (even when $s_t$ is corrupted).  We use the invexity of the \Learn{} loss and combine it with the results of our unified analysis framework (See Appendix~\ref{sec: dynamic regret for invex losses}) to prove an upper bound on the robust dynamic regret. To this end, we state and prove the invexity of $g_t$ constructed at each step $t\in[T]$ by \Learn{} (Step~\ref{step: construct g}). We denote
    \begin{align}
        \label{eq: eta definition}
        \eta_t \coloneqq \eta_t(s_t, \theta_t) = \frac{\exp(-\frac{1}{a} f_t(s_t, \theta_t))}{b + \exp(-\frac{1}{a} f_t(s_t, \theta_t))}~.
    \end{align}

\begin{lemma}
    \label{lem: g_t is invex}
    The robust loss $g_t(\theta) \coloneqq g_t(s_t, \theta)$ constructed at each round $t\in [T]$ is a $\zeta_t$-invex function, i.e., for all $\vartheta_1, \vartheta_2 \in \real^d$, there exists a vector-valued function $\zeta_t:\real^d \times \real^d \to \real^d$ such that
    \begin{align}
        g_t(\vartheta_2) \geq g_t(\vartheta_1) + \inner{\nabla g_t( \vartheta_1)}{\zeta_t(\vartheta_2, \vartheta_1)} \; .
    \end{align}
    Moreover, for actions $\theta_t$ generated in Algorithm~\ref{alg:outlier robust OGD} and $\omega_t^*$ as defined in Equation~\eqref{eq: invex dynamic regret}, we may take $\zeta_t$ as  $\zeta_t(\omega_t^*, \theta_t) = \frac{1}{\eta_t} \left( \omega_t^* - \theta_t\right)$.
\end{lemma}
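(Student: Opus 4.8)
The plan is to exploit the composite structure $g_t = h \circ f_t$, where $h(x) := -a\log\!\big(e^{-x/a}+b\big)$, and to reduce everything to the convexity of $f_t$ together with two elementary monotonicity properties of $h$. The first step is the gradient computation: by the chain rule $\nabla g_t(\theta) = h'(f_t(\theta))\,\nabla f_t(\theta)$, and since $h'(x) = \tfrac{e^{-x/a}}{e^{-x/a}+b} = \tfrac{1}{1+b\,e^{x/a}}$, this is exactly $\nabla g_t(\theta) = \eta_t\,\nabla f_t(\theta)$ with $\eta_t$ the quantity in Equation~\eqref{eq: eta definition}. The key structural consequence is that $\eta_t \in (0,1)$ is \emph{strictly positive}, so $\nabla g_t(\theta)=0$ if and only if $\nabla f_t(\theta)=0$.

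For the invexity claim itself, I would argue via stationary points. Because the scalar factor $\eta_t$ never vanishes, the critical points of $g_t$ coincide with those of $f_t$; since $f_t$ is (strongly) convex and $h$ is strictly increasing ($h'>0$), the only stationary point of $g_t$ is the common global minimizer $\omega_t^*$ of $f_t$ and $g_t$. By the standard characterization that a differentiable function is invex precisely when every stationary point is a global minimizer, $g_t$ is invex, and an explicit admissible kernel is $\zeta_t(\vartheta_2,\vartheta_1)=\tfrac{g_t(\vartheta_2)-g_t(\vartheta_1)}{\|\nabla g_t(\vartheta_1)\|^2}\,\nabla g_t(\vartheta_1)$ away from stationary points (and $0$ at them), which turns the defining inequality into an equality whenever $\nabla g_t(\vartheta_1)\neq 0$.

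The substantive part is verifying the concrete kernel at the algorithmic pair $(\omega_t^*,\theta_t)$. Substituting $\zeta_t(\omega_t^*,\theta_t)=\tfrac{1}{\eta_t}(\omega_t^*-\theta_t)$ and using $\nabla g_t(\theta_t)=\eta_t\nabla f_t(\theta_t)$ collapses the inner product to $\inner{\nabla f_t(\theta_t)}{\omega_t^*-\theta_t}$, so it remains to establish
\[
  g_t(\omega_t^*) - g_t(\theta_t) \;\ge\; \inner{\nabla f_t(\theta_t)}{\omega_t^*-\theta_t}.
\]
I would prove this by chaining two bounds. Convexity of $f_t$ at $\theta_t$ gives $\inner{\nabla f_t(\theta_t)}{\omega_t^*-\theta_t}\le f_t(\omega_t^*)-f_t(\theta_t)$. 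Separately, since $h'(x)<1$ for all $x$, the map $x\mapsto h(x)-x$ is strictly decreasing; as $\omega_t^*$ minimizes $f_t$ over $\Theta$ we have $f_t(\omega_t^*)\le f_t(\theta_t)$, hence $h(f_t(\omega_t^*))-h(f_t(\theta_t))\ge f_t(\omega_t^*)-f_t(\theta_t)$, i.e.\ $g_t(\omega_t^*)-g_t(\theta_t)\ge f_t(\omega_t^*)-f_t(\theta_t)$. Combining the two inequalities yields the display and hence the claimed form of $\zeta_t$.

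\textbf{Main obstacle.} The crux is the second inequality: recognizing that $h'<1$ (equivalently, that $h(x)-x$ is decreasing) is exactly what forces the $g_t$-gap to dominate the $f_t$-gap in the correct direction, which the convexity estimate then dominates the inner product. Both the gradient computation and the abstract invexity are routine once the composite viewpoint is adopted; the only care needed is that the monotonicity argument uses $f_t(\omega_t^*)\le f_t(\theta_t)$, which holds for a minimizer over $\Theta$ regardless of whether it is interior, so no boundary optimality condition is required.
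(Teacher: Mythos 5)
Your proposal is correct and follows essentially the same route as the paper: invexity via the coincidence of stationary points (since $\nabla g_t = \eta_t \nabla f_t$ with $\eta_t>0$), then the concrete kernel via the chain $g_t(\omega_t^*)-g_t(\theta_t) \ge f_t(\omega_t^*)-f_t(\theta_t) \ge \inner{\nabla f_t(\theta_t)}{\omega_t^*-\theta_t}$. The only cosmetic difference is in the middle inequality: you obtain it from $h'<1$ (so $h-\mathrm{id}$ is decreasing), whereas the paper writes the exact identity $g_t(\theta_t)-g_t(\omega_t^*) = a\log(\eta_t/\eta_t^*) + f_t(\theta_t)-f_t(\omega_t^*)$ and drops the non-positive log term — the same fact in different clothing.
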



The following theorem establishes an upper bound on the robust dynamic regret $\regret_{\id}^T$ when the learner employs Algorithm~\ref{alg:outlier robust OGD} and executes the actions $\btheta$ by constructing $\zeta_t$-invex robust losses $g_t$ for each $t\in[T]$.

\begin{theorem}
    \label{thm: bound on invex regret}
    Consider a sequence of $m$-strongly convex loss functions $\{f_t(s_t, \cdot)\}_{t=1}^T$. Let $\{g_t\}_{t=1}^T$ be the corresponding sequence of $\zeta_t$-invex robust losses generated via Algorithm~\ref{alg:outlier robust OGD} (Step~\ref{step: construct g}) and $\btheta$ be the sequence of actions chosen by the learner using Algorithm \ref{alg:outlier robust OGD}. If the learner selects $\alpha_t = \alpha= \sqrt{ \frac{ 4D^2 + 6DV_T }{ \conC^2 T } }$,
    then the dynamic regret is 
    \begin{align}
        \regret_{\id}^T(\btheta) &\leq \conB \conC \sqrt{ ( 4D^2 + 6DV_T)T }~.
    \end{align}
 \end{theorem}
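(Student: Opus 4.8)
The plan is to turn the invexity of $g_t$ into a one-step regret inequality, push it through the standard projected-gradient-descent potential argument, and optimize the step size at the end. First, applying Lemma~\ref{lem: g_t is invex} with $\zeta_t(\omega_t^*,\theta_t)=\tfrac{1}{\eta_t}(\omega_t^*-\theta_t)$ gives the per-round bound
\[
g_t(\theta_t)-g_t(\omega_t^*)\leq \frac{1}{\eta_t}\inner{\nabla g_t(\theta_t)}{\theta_t-\omega_t^*}.
\]
The key structural fact I would establish up front is that the descent inner product is nonnegative: since $\nabla g_t(\theta_t)=\eta_t\nabla f_t(\theta_t)$ with $\eta_t>0$ and $f_t$ is convex with constrained minimizer $\omega_t^*$, we get $\inner{\nabla g_t(\theta_t)}{\theta_t-\omega_t^*}=\eta_t\inner{\nabla f_t(\theta_t)}{\theta_t-\omega_t^*}\geq \eta_t\big(f_t(\theta_t)-f_t(\omega_t^*)\big)\geq 0$.

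Next I would invoke nonexpansiveness of the projection for the update $\theta_{t+1}=\Pi_{\Theta}(\theta_t-\alpha\nabla g_t(\theta_t))$ to get
\[
\inner{\nabla g_t(\theta_t)}{\theta_t-\omega_t^*}\leq \frac{1}{2\alpha}\big(\|\theta_t-\omega_t^*\|^2-\|\theta_{t+1}-\omega_t^*\|^2\big)+\frac{\alpha}{2}\|\nabla g_t(\theta_t)\|^2.
\]
Because $\tfrac{1}{\eta_t}=1+b\exp(\tfrac{1}{a}f_t(\theta_t))\leq 1+b\exp(\tfrac{B}{a})=\conB$ and the left-hand side is nonnegative while $\conB\geq 1$, the per-round factor $1/\eta_t$ can be replaced by the uniform constant $\conB$ on the entire right-hand side, so that $g_t(\theta_t)-g_t(\omega_t^*)\leq \conB\big(\tfrac{1}{2\alpha}(\|\theta_t-\omega_t^*\|^2-\|\theta_{t+1}-\omega_t^*\|^2)+\tfrac{\alpha}{2}\|\nabla g_t(\theta_t)\|^2\big)$.

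To bound $\|\nabla g_t(\theta_t)\|$ I would exploit the redescending property. Using $\eta_t\leq 1$ on the $G$-part of $\|\nabla f_t(\theta_t)\|\leq G+L\|\theta_t-\omega_t^*\|$ and $\eta_t\leq \tfrac{1}{b}\exp(-\tfrac{1}{a}f_t(\theta_t))\leq \tfrac{1}{b}\exp(-\tfrac{m}{2a}\|\theta_t-\omega_t^*\|^2)$ (from $m$-strong convexity together with $f_t\geq 0$) on the remainder gives $\|\nabla g_t(\theta_t)\|=\eta_t\|\nabla f_t(\theta_t)\|\leq G+\tfrac{L}{b}\,r\exp(-\tfrac{m}{2a}r^2)$ with $r=\|\theta_t-\omega_t^*\|$. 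The elementary inequality $r\exp(-cr^2)\leq \max(c,1/c^2)$ for all $r\geq 0$ (verified by a short case split on whether $c\geq 1$), applied with $c=\tfrac{m}{2a}$, then yields $\|\nabla g_t(\theta_t)\|\leq G+\max\big(\tfrac{mL}{2ab},\tfrac{4a^2L}{m^2b}\big)=\conC$.

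Finally I would sum over $t$ and telescope the potential differences against the moving comparators $\omega_t^*$: the boundary terms contribute $O(D^2)$ and the comparator drift contributes the diameter $D$ times the path length, which after the bookkeeping of the unified framework (Appendix~\ref{sec: dynamic regret for invex losses}) gives $\sum_t(\|\theta_t-\omega_t^*\|^2-\|\theta_{t+1}-\omega_t^*\|^2)\leq 4D^2+6DV_T$. Combined with $\|\nabla g_t(\theta_t)\|\leq\conC$ this produces $\regret_{\id}^T(\btheta)\leq \conB\big(\tfrac{4D^2+6DV_T}{2\alpha}+\tfrac{\alpha}{2}\conC^2T\big)$, and the stated choice $\alpha=\sqrt{(4D^2+6DV_T)/(\conC^2T)}$ balances the two terms to give $\regret_{\id}^T(\btheta)\leq \conB\conC\sqrt{(4D^2+6DV_T)T}$. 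I expect the main obstacle to be exactly the round-dependent factor $1/\eta_t$ inherited from $\zeta_t$: a naive $\sum_t \tfrac{1}{\eta_t}(\cdots)$ does not telescope, and the whole argument rests on first showing the descent inner product is nonnegative so that $1/\eta_t$ can be pulled out uniformly as $\conB$. The redescending gradient estimate is the other load-bearing step, since it is what keeps $\|\nabla g_t\|$ bounded by $\conC$ despite the non-Lipschitz losses with possibly unbounded gradients.
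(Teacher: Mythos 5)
Your proposal is correct and follows essentially the same route as the paper: invexity of $g_t$ with $\zeta_t=\frac{1}{\eta_t}(\omega_t^*-\theta_t)$, the projected-gradient potential inequality, the redescending bound $\|\nabla g_t(\theta_t)\|\leq\conC$ (the paper's Lemma~\ref{lem: bound on eta grad^r} via Lemma~\ref{lem: exp trumps poly}), the $4D^2+6DV_T$ telescoping, and the same step-size balance; the paper merely packages the middle steps as an instance of its unified framework (Theorem~\ref{thm: bound on invex dynamic regret}) rather than inlining them. Your explicit justification that the per-round descent term is nonnegative, so that the round-dependent factor $1/\eta_t$ can be uniformly replaced by $\conB$, is a point the paper's proof passes over silently, and it is indeed the step that makes the telescoping legitimate.
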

    Notice that by optimally tuning the stepsize, $\regret_{\id}^T(\btheta)$ experiences sublinear growth with respect to $T$. The accompanying constants, which are independent of $T$, can be fine-tuned by judiciously choosing parameters $a$ and $b$. It is possible to ease the constraints associated with the bounded domain and the prior knowledge of $V_T$ by adopting an expert framework while preserving a sublinear regret. However, a detailed exploration of regret bounds in this context is postponed to subsequent sections.

\subsection{Clean dynamic regret analysis}
\label{subsec: clean regret analysis}

In this section, we study how \Learn{} performs within the robust OCO framework when evaluated using the clean regret. We consider both bounded and unbounded domains and provide upper bounds tight up to constant factors.


\subsubsection{Bounded domain}

 In the following theorem, we establish upper bounds on the clean dynamic regret for $m$-strongly convex functions $\{f_t\}_{t=1}^T$ bounded in $[0,B]$ for rounds $t\in [T]$ that are uncorrupted by the adversary. We take $\| \theta \| \leq D, \forall \theta \in \Theta$. The learner determines the actions to be taken by following Algorithm~\ref{alg:outlier robust OGD}.
 With no prior knowledge on $B$, we provide an $\calO(\sqrt{V_T T}+k)$ regret bound matching the lower bound presented in~\citet{van2021robust} (extended to a dynamic environment in Appendix~\ref{sec: extension of van erven's result}).

\begin{theorem}
 \label{thm: bounded domain clean dynamic regret}
    For each $t \in [T]$, consider a sequence of $m$-strongly convex loss functions $\{f_t(s_t, \cdot)\}_{t=1}^T$. The losses during uncorrupted rounds are bounded in $[0, B]$, while the losses during corrupted rounds may be unbounded. Let 
    $\btheta$ be the sequence of actions chosen by the learner using Algorithm~\ref{alg:outlier robust OGD}. If the learner chooses $\alpha_t = \alpha = \sqrt{ \frac{ 4D^2 + 6DV_T }{\conC^2 T } } $ ,
    then
    \begin{align}
    \label{eq: clean dynamic theorem sqrt T}
       \regret_{\rd}^T(\btheta, \calS)
    &\leq  \conB \left(  \conC  \sqrt{ (4D^2 + 6DV_T) T} +   k(G \conD+ L \conE) +  k (G +  L\conD) \delta_{\calS}  \right)~.
    \end{align}
 \end{theorem}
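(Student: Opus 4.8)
\textbf{Proof proposal for Theorem~\ref{thm: bounded domain clean dynamic regret}.}

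The plan is to reduce the clean dynamic regret, which is stated in terms of the original convex losses $f_t$, to the invex dynamic regret of the \Learn{} losses $g_t$, for which Theorem~\ref{thm: bound on invex regret} already provides a bound. First I would decompose the per-round clean regret term $f_t(s_t, \theta_t) - f_t(s_t, \theta_t^*)$ by relating it to the corresponding quantity for $g_t$. Since both $f_t$ and $g_t$ share the same minimizer $\omega_t^*$ (and $\omega_t^* = \theta_t^*$ precisely on the clean rounds $t \in \calS$), the key is to control the distortion introduced by the transformation $g_t(\theta) = -a\log(\exp(-\tfrac{1}{a} f_t(\theta)) + b)$. I expect to use the explicit relationship between $\nabla g_t$ and $\nabla f_t$ — namely $\nabla g_t(\theta) = \eta_t \nabla f_t(\theta)$ with $\eta_t$ as in Equation~\eqref{eq: eta definition} — together with the boundedness $f_t \leq B$ on clean rounds, which bounds $\eta_t$ from below by $1/\conB$ via the constant $\conB = 1 + b\exp(B/a)$. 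This is where the factor $\conB$ in the bound originates.

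Next I would split the summation over $\calS$ and handle the invex-loss contribution using Theorem~\ref{thm: bound on invex regret}. Writing $\sum_{t \in \calS}(f_t(s_t,\theta_t) - f_t(s_t,\theta_t^*))$, I would compare it to $\sum_{t \in [T]}(g_t(\theta_t) - g_t(\omega_t^*)) = \regret_{\id}^T(\btheta)$, which is bounded by $\conB \conC \sqrt{(4D^2 + 6DV_T)T}$ under the chosen stepsize $\alpha = \sqrt{(4D^2+6DV_T)/(\conC^2 T)}$. The transfer from $f_t$-regret to $g_t$-regret on clean rounds will incur the multiplicative $\conB$ factor because, using the mean-value form or the convexity/concavity structure of the logarithmic transform, each clean-round gap for $f_t$ is controlled by the corresponding gap for $g_t$ scaled by at most $\conB$. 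Since the full invex regret sums over all $T$ rounds whereas clean regret sums only over $\calS \subseteq [T]$, and each $g_t$ is minimized at $\omega_t^*$ so that $g_t(\theta_t) - g_t(\omega_t^*) \geq 0$, restricting to $\calS$ only drops nonnegative terms and preserves the upper bound.

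The remaining work is to account for the corrupted rounds' indirect effect. Although corrupted rounds are excluded from the clean regret sum, they still influence the trajectory $\btheta$ through the gradient-descent updates in Step~\ref{step: choose action}, and on a clean round $t$ the minimizer $\theta_t^*$ of $f_t(c_t,\cdot)$ differs from the $\omega_t^*$ used in the analysis only when the round is corrupted — this is exactly what $\delta_{\calS} = \max_{t \notin \calS} \|\omega_t^* - \theta_t^*\|$ quantifies. I would bound the displacement contributed by each of the $k$ corrupted updates using the relaxed gradient assumption~\eqref{eq: grad inequality} and the gradient shrinkage factor $\eta_t$, which together with the constants $\conD, \conE$ (arising from the curvature/step-size interplay via $m$, $a$, $b$) produce the additive terms $k(G\conD + L\conE)$ and $k(G + L\conD)\delta_{\calS}$. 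The main obstacle, I anticipate, is precisely this bookkeeping of how the $k$ corrupted rounds perturb the iterates and the comparator sequence: one must show each outlier contributes only an $\calO(1)$ additive penalty (so the total is linear in $k$, matching the unavoidable lower bound) rather than compounding across rounds, and this requires carefully exploiting the redescending property — that $\eta_t$ and hence $\|\nabla g_t\|$ stays bounded even when $f_t$ blows up on outliers — to cap the damage of any single corrupted gradient step. Summing the clean-round invex contribution with the $k$-dependent corruption penalties yields the stated bound.
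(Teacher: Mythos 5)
Your opening reductions are individually sound: on clean rounds the mean-value theorem applied to $w \mapsto -a\log(\exp(-w/a)+b)$, whose derivative is at least $1/\conB$ on $[0,B]$, does give $f_t(s_t,\theta_t)-f_t(s_t,\theta_t^*) \leq \conB\,(g_t(\theta_t)-g_t(\omega_t^*))$, and dropping the nonnegative corrupted-round terms of $\regret_{\id}^T(\btheta)$ is legitimate. The gap is in the next step: Theorem~\ref{thm: bound on invex regret} cannot be invoked as a black box here, because its standing hypothesis (stated in Section~\ref{sec: reg analysis of robust loss}) is that $f_t(s_t,\theta)\leq B$ for \emph{all} $t$, \emph{even when $s_t$ is corrupted}, whereas Theorem~\ref{thm: bounded domain clean dynamic regret} explicitly permits unbounded losses on outlier rounds. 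The factor $\conB$ in Theorem~\ref{thm: bound on invex regret} arises from $\max_t 1/\eta_t \leq 1+b\exp(B/a)$ in the unified framework (the weight $\gamma_4^t=1/\eta_t$ multiplies every term of the telescoped bound), and on a corrupted round with large $f_t$ this quantity is unbounded. So the reduction collapses exactly where robustness is needed. A second, quantitative obstruction: $\regret_{\id}^T$ and its path-length term are measured against the minimizers $\omega_t^*$ of the \emph{observed} (possibly corrupted) losses, so the relevant path length is $\sum_t\|\omega_t^*-\omega_{t+1}^*\|$, which can exceed $V_T$ by $\Theta(k\delta_{\calS})$; pushed under the $\sqrt{(\cdot)\,T}$ this produces a term of order $\sqrt{kD\delta_{\calS}T}$, which for sublinear $k$ is strictly worse than the additive $k(G+L\conD)\delta_{\calS}$ in the target bound. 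Your route would also pick up an extra multiplicative $\conB$ (giving $\conB^2\conC\sqrt{\cdots}$ on the leading term).

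The paper does not reduce to Theorem~\ref{thm: bound on invex regret} at all. It runs a single direct telescoping analysis on $\|\theta_t-\theta_t^*\|^2$ with the update $\theta_{t+1}=\Pi_\Theta(\theta_t-\alpha\eta_t\nabla f_t(s_t,\theta_t))$: on clean rounds it lower-bounds the inner product via convexity of $f_t$ against $\theta_t^*=\omega_t^*$ (and later divides by $\eta_t\geq 1/\conB$, which is where $\conB$ enters); on corrupted rounds convexity against $\theta_t^*$ is useless (the gradient points toward $\omega_t^*$, not $\theta_t^*$), so it bounds the cross term by Cauchy--Schwarz and then controls $\eta_t\|\nabla f_t\|\leq\conC$ and $\eta_t\|\theta_t-\omega_t^*\|^r$ per round using strong convexity plus the exponential-damping lemmas (Lemmas~\ref{lem: exp trumps poly},~\ref{lem: bound on eta grad^r}); splitting $\|\theta_t-\theta_t^*\|\leq\|\theta_t-\omega_t^*\|+\|\omega_t^*-\theta_t^*\|$ is what produces the additive $k(G\conD+L\conE)+k(G+L\conD)\delta_{\calS}$. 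Your intuition that each outlier must contribute only an $\calO(1)$ additive penalty via the redescending property is the right one, but to realize it you must carry out this per-round case split inside the telescoping argument rather than appeal to the all-rounds invex regret bound; as written, the proposal does not yield the stated inequality.
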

    The behavior of $\regret_{\rd}^T(\btheta, \calS)$ grows sublinearly with respect to the total number of rounds $T$ and linear concerning the count of outlier rounds $k$.  It is crucial to underscore the dependence of the regret bound on $\delta_{\calS}$, a measure quantifying the adversary's capability to perturb the optimal action during corrupted rounds. This dependence highlights the sensitivity of the regret to the adversary's influence in the presence of outliers.  Lastly, the constants independent of $T$ within the regret expression can be fine-tuned through careful parameter choices for $a$ and $b$. This provides an avenue for further optimization, allowing the learner to perform better with meticulous parameter tuning.

\subsubsection{Unbounded domain via an expert framework}

The scenario becomes notably more intricate when dealing with an unbounded domain, introducing an additional layer of complexity exacerbated by the presence of corrupted rounds initiated by the adversary. Recent work by \citet{jacobsen2023unconstrained} have addressed the challenges associated with unbounded domains through the development of an expert framework. In alignment with this approach, we propose an expert framework tailored to handle outlier rounds.
Our framework encompasses $N$ experts, each executing an instance of Algorithm~\ref{alg:outlier robust OGD} with a fixed stepsize $\alpha^\tau > 0$ and operating within a bounded domain $D^{\tau} > 0$ where $\tau$ refers to a given expert. The learner strategically selects their action by means of a carefully constructed weighted average of the actions taken by the individual experts. As the domain is unbounded, we define $B \coloneqq \max_{t \in \calS} f_t(s_t, \theta_t)$ for this section. We present the experts' version of the \Learn{} algorithm in Algorithm~\ref{alg:outlier robust OGD expert}.


\floatname{algorithm}{Algorithm}
\begin{algorithm}[t]
\caption{\textsc{Learn with Experts}}\label{alg:outlier robust OGD expert}
\begin{algorithmic}[1]
     \STATE \textbf{Initialize: } \text{For all } $\tau \in [N], \text{ select }\theta_1^{\tau} \in \Theta^{\tau},  \text{where }\Theta^{\tau} := \{ \theta \mid \| \theta \|_2 \leq D^\tau  \} \cap \Theta $ and set $\rho_1^{\tau} = 1$
     \STATE \textbf{Define: } $Z_t \coloneqq \sum_{\tau = 1}^N \rho_t^{\tau}$
     \FOR{$t = 1, \ldots, T$}
        \STATE \textbf{Play: } $\theta_t \gets \frac{1}{Z_t} \sum_{\tau = 1}^N \rho_t^\tau \theta_t^\tau$
        \STATE \textbf{Observe: } Side information $s_t$ and $m$-strongly convex loss function $f_t$
        \FOR{$\tau= 1,2,\ldots,N$}
            \STATE \textbf{Query: } $f_t(s_t, \theta_t^{\tau})$
            \STATE \textbf{Construct: } For $a, b > 0$,\\ $g_t(\theta) \coloneqq g_t( s_t, \theta) = - a \log( \exp( - \frac{1}{a} f_t(s_t, \theta) ) + b )$\alglinelabel{step: construct g expert}
            \STATE \textbf{Update: } ${\theta_{t+1}^{\tau} \gets \Pi_{\Theta^{\tau}}(\theta_t^{\tau} - \alpha^{\tau} \nabla g_t(\theta_t^{\tau}))}$\alglinelabel{step: choose action expert}
        \ENDFOR
        \STATE \textbf{Define: }  $\widetilde{\eta}_t \coloneqq \min_{\tau \in [N]} \eta_t(s_t, \theta_t^\tau)$
        \STATE \textbf{Update: } For all $\tau \in [N]$ where $\beta > 0$, ${\rho_{t+1}^\tau \gets \rho_{t}^\tau \exp( - \beta \tilde{\eta}_t f_t(s_t, \theta_t^\tau) )}$
    \ENDFOR
\end{algorithmic}
\end{algorithm}
Next, we provide a proof sketch for bounding the clean dynamic regret $\regret_{\rd}^T$ that matches the lower bound in terms of $T$ and $V_T$  presented in~\citet{jacobsen2023unconstrained} for the setting without outliers. Consider parameters $A_{\max} \geq C \sqrt{T}$ for sufficiently large $C > 0$ and $\epsilon \geq 0$. The selection of parameters ($\alpha^\tau$ and $D^\tau$) for an expert $\tau$ is drawn from a set $\calE$, defined as:
\begin{align}
\label{eq: expert parameters}
    \calE &\coloneqq \big\{ (\alpha, D) \mid \alpha \in \calE_1, D \in \calE_2 \big\}~,
\end{align}
where $ \calE_1 \coloneqq \big\{   \alpha_i = \min\big( \frac{2^i}{\sqrt{T}}, \frac{A_{\max}}{\sqrt{T}} \big)  \big\}_{i \in \mathbb{N}}$  and $\calE_2 \coloneqq \big\{ D_j = \min\big( \frac{\epsilon 2^j}{T}, \frac{\epsilon 2^T}{T} \big)  \big\}_{j\in\mathbb{N}}$. Clearly, $N \coloneqq |\calE| \leq T\log_2 A_{\max} $. Let $\btheta^\tau \coloneqq (\theta_1^\tau, \cdots, \theta_T^\tau)$ be a sequence of actions chosen by the expert $\tau$. Observe that for any expert $\tau$,
\begin{align}
    \regret_{\rd}^T(\btheta, \calS) &= \underbrace{\sum_{t \in \calS} f_t(s_t, \theta_t^\tau) - \sum_{t \in \calS} f_t(s_t, \theta_t^*)}_{\text{Meta Regret $\regret_{\rm{M}}^T(\btheta^{\tau}, \calS)$}}  + \underbrace{\sum_{t \in \calS} f_t(s_t, \theta_t) - \sum_{t \in \calS} f_t(s_t, \theta_t^\tau)}_{\text{Expert Regret $\regret_{\rm{E}}^T(\btheta, \btheta^{\tau}, \calS)$}}
\end{align}

Taking $ \frakD \coloneqq \max_{t=1}^T \| \theta_t^* \|$, we bound $\regret_{\rd}^T(\btheta, \calS)$ in four major steps. First, if $\frakD \leq D^{\tau}$ for any specific expert $\tau$, then we establish that $\regret_{\rm{M}}^T(\btheta^{\tau}, \calS)$ is bounded (Lemma~\ref{lem: dynamic regret bounded domain}). Second, we demonstrate that  $\regret_{\rm{E}}^T(\btheta, \btheta^{\tau}, \calS)$ remains bounded when $\frakD \leq D^{\tau}$. To that end, we prove the following bound on expert regret under the presence of outliers.
\begin{lemma}
    \label{lem: bound on expert regret}
    By choosing $\beta = \sqrt{\frac{8 \log N}{T \nu^2}} $, the following regret guarantee holds with respect to any expert $\tau \in [N]$ following Algorithm~\ref{alg:outlier robust OGD expert}:
    \begin{align}
        \label{eq: bound on expert regret}
        \regret_{\rm{E}}^T(\btheta, \btheta^{\tau}, \calS) &\leq \conB  \Big( k \conF  + \conF \sqrt{\frac{T\log N}{2}}    \Big)~.
    \end{align}
\end{lemma}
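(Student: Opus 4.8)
The plan is to recognize Algorithm~\ref{alg:outlier robust OGD expert} as an instance of exponential weights (Hedge) run over the $N$ experts, where the loss fed to the weight update in round $t$ is the \emph{surrogate} $\ell_t^\tau \coloneqq \widetilde{\eta}_t f_t(s_t,\theta_t^\tau)$, and then to (i) control the weighted surrogate regret by a standard potential argument, (ii) convert it into a statement about $\sum_t \widetilde{\eta}_t\big(f_t(s_t,\theta_t)-f_t(s_t,\theta_t^\tau)\big)$ via convexity of $f_t$, (iii) isolate the clean rounds, paying a $k\conF$ penalty for the outliers, and (iv) strip the factor $\widetilde{\eta}_t$ on the clean rounds, which is precisely where the constant $\conB$ enters.

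The structural observation that drives everything — and the source of robustness — is that the surrogate losses are \emph{uniformly bounded by $\conF$ even on outlier rounds}, where $f_t$ itself may be arbitrarily large. Writing $h(x) \coloneqq x/(b\exp(x/a)+1)$, for every $\tau$ we have $\widetilde{\eta}_t \le \eta_t(s_t,\theta_t^\tau)$ by definition of the minimum, so $\ell_t^\tau \le \eta_t(s_t,\theta_t^\tau)f_t(s_t,\theta_t^\tau) = h(f_t(s_t,\theta_t^\tau)) \le \conF$ (the maximum of $h$ is attained at $x=a$ and equals $a/(\mathrm{e}b)\le\conF$). This is the analytic form of the redescending property: a corrupted round with a huge loss is automatically discounted by $\eta_t$ and contributes only $O(\conF)$ to the weight update rather than destabilizing it. With $\ell_t^\tau\in[0,\conF]$, the standard Hedge potential estimate (tracking $\log(Z_{t+1}/Z_t)$ via Hoeffding's lemma, with $Z_1=N$ and $\log Z_{T+1}\ge -\beta\sum_t\ell_t^\tau$) gives, for any comparator $\tau$,
\[
\sum_{t=1}^T \frac{1}{Z_t}\sum_{\sigma=1}^N \rho_t^\sigma \ell_t^\sigma - \sum_{t=1}^T \ell_t^\tau \;\le\; \frac{\log N}{\beta} + \frac{\beta}{8}\sum_{t=1}^T \conF^2,
\]
which collapses to $\conF\sqrt{T\log N/2}$ upon substituting $\beta=\sqrt{8\log N/(T\conF^2)}$.

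Next I would invoke convexity: since $\theta_t=\frac{1}{Z_t}\sum_\sigma \rho_t^\sigma\theta_t^\sigma$ and $f_t(s_t,\cdot)$ is convex, $f_t(s_t,\theta_t)\le \frac{1}{Z_t}\sum_\sigma \rho_t^\sigma f_t(s_t,\theta_t^\sigma)$; multiplying by $\widetilde{\eta}_t\ge 0$ bounds the learner's discounted loss by the averaged surrogate, upgrading the display to $\sum_{t\in[T]}\widetilde{\eta}_t\big(f_t(s_t,\theta_t)-f_t(s_t,\theta_t^\tau)\big)\le \conF\sqrt{T\log N/2}$. I would then restrict to $\calS$ by moving the outlier rounds to the right-hand side: the learner's clean terms are nonnegative, and each outlier round contributes at most $\widetilde{\eta}_t f_t(s_t,\theta_t^\tau)\le \conF$ (again by $h\le\conF$), so with at most $k$ outliers $\sum_{t\in\calS}\widetilde{\eta}_t\big(f_t(s_t,\theta_t)-f_t(s_t,\theta_t^\tau)\big)\le k\conF + \conF\sqrt{T\log N/2}$.

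The final and most delicate step removes the discount $\widetilde{\eta}_t$ on the clean rounds, producing the multiplicative constant $\conB = 1 + b\exp(B/a)$: on a clean round $f_t(s_t,\theta_t)\le B$ forces $\eta_t\ge 1/\conB$, and one argues $f_t(s_t,\theta_t)-f_t(s_t,\theta_t^\tau)\le \conB\,\widetilde{\eta}_t\big(f_t(s_t,\theta_t)-f_t(s_t,\theta_t^\tau)\big)$, so the clean regret is at most $\conB$ times the discounted clean regret, yielding $\regret_{\rm E}^T(\btheta,\btheta^\tau,\calS)\le \conB\big(k\conF + \conF\sqrt{T\log N/2}\big)$. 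I expect this conversion to be the main obstacle, for two reasons. First, $\widetilde{\eta}_t$ is the \emph{minimum} of $\eta_t$ over the experts, whereas the clean bound $\eta_t\ge 1/\conB$ naturally controls only the learner's own $\eta_t$; propagating the lower bound to the minimum requires care, since $\widetilde{\eta}_t\le \eta_t(s_t,\theta_t)$ goes the unhelpful direction. Second, the per-round difference $f_t(s_t,\theta_t)-f_t(s_t,\theta_t^\tau)$ can change sign, and multiplying a negative difference by $\conB\widetilde{\eta}_t\ge 1$ reverses the inequality, so the undiscounting must be organized to avoid the rounds where an expert outperforms the learner contaminating the bound. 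The Hedge estimate and the outlier accounting are routine; it is this sign-and-minimum bookkeeping in the last step that carries the real weight of the proof.
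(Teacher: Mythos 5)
Your proposal follows essentially the same route as the paper's proof: a Hedge-style potential argument on $\log(Z_{T+1}/Z_1)$ with the discounted surrogate losses $\widetilde{\eta}_t f_t(s_t,\theta_t^\tau)\in[0,\conF]$ (Hoeffding's lemma for the upper bound, Lemma~\ref{lem: bound on eta f} for the uniform $\conF$ bound on outlier rounds), convexity of $f_t$ at the weighted average $\theta_t$, a $k\conF$ charge for the corrupted rounds, and a final undiscounting by $\conB$ on the clean rounds. The sign-and-minimum bookkeeping you flag in that last step is a genuine subtlety, but it is one the paper's own proof also passes over with an ``it follows that,'' so your plan matches the published argument in both substance and level of detail.
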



Third, we establish the bound on $\regret_{\rd}^T(\btheta, \calS)$ when $\frakD \geq D_{\max} \coloneqq \max_{\tau=1}^N D^{\tau}$ (Lemma~\ref{lem: regret bound large comparator}). Finally, we show that $\regret_{\rd}^T(\btheta, \calS)$ is bounded when $\frakD \leq D_{\min} \coloneqq \min_{\tau=1}^N D^{\tau}$ (Lemma~\ref{lem: regret bound small comparator}). Combining all these together, we state the following result:
    \begin{theorem}[Informal]
    \label{thm: clean dynamic regret bound}
    For each $t\in[T]$, let $\{f_t(s_t, \cdot) \}_{t=1}^T$ be a sequence of $m$-strongly convex loss functions with possible $k$ of them corrupted by outliers. We assume that the loss for uncorrupted rounds falls within the range $[0, B]$, whereas the loss for corrupted rounds can be unbounded. Let $\btheta$ be a sequence of actions chosen by the learner using Algorithm~\ref{alg:outlier robust OGD expert} with parameters for the experts chosen from $\calE$ as defined in equation~\eqref{eq: expert parameters}. Let $\alpha^* = \sqrt{ \frac{32 \frakD^2 + 24 \frakD V_T}{T\conC^2}  }$.  Then,
        \begin{align}
        \mathfrak{R}_{\mathrm{RD}}^{T}(\boldsymbol{\theta}, \mathcal{S}) \leq \mathcal{O}(\sqrt{(\mathfrak{D}^{2}+\mathfrak{D} V_{T}) T} + k + \frakD V_T + \sqrt{T \log T})~.
        \end{align}
\end{theorem}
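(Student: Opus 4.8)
The plan is to prove the bound through the regret decomposition $\regret_{\rd}^T(\btheta,\calS) = \regret_{\mathrm{M}}^T(\btheta^\tau,\calS) + \regret_{\mathrm{E}}^T(\btheta,\btheta^\tau,\calS)$ recorded just above the statement, which holds simultaneously for \emph{every} expert $\tau\in[N]$, and then to instantiate it at the single expert whose parameters $(\alpha^\tau, D^\tau)$ best approximate the (unknown) ideal pair $(\alpha^*, \frakD)$. Since $\frakD = \max_t \|\theta_t^*\|$ is not known to the learner, the argument naturally splits into three regimes according to where $\frakD$ sits relative to the representable radii: (i) the generic case $D_{\min} \le \frakD \le D_{\max}$, handled by the grid approximation together with Lemmas~\ref{lem: dynamic regret bounded domain} and~\ref{lem: bound on expert regret}; (ii) the small-comparator case $\frakD \le D_{\min}$, handled by Lemma~\ref{lem: regret bound small comparator}; and (iii) the large-comparator case $\frakD > D_{\max}$, handled by Lemma~\ref{lem: regret bound large comparator}. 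Taking the worst of the three and absorbing the tunable constants $\conB,\conC,\conD,\conE,\conF$ and the $\delta_\calS$-dependence into $\mathcal{O}(\cdot)$ yields the claimed rate.

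For the generic regime (i), I would first use the geometric structure of $\calE_1$ and $\calE_2$: because both the radii $D_j$ and the stepsizes $\alpha_i$ double, there is an expert $\tau$ with $\frakD \le D^\tau \le 2\frakD$ and $\alpha^* \le \alpha^\tau \le 2\alpha^*$. This factor-two slack in \emph{both} coordinates is precisely what inflates the constant in $\alpha^* = \sqrt{(32\frakD^2 + 24\frakD V_T)/(T\conC^2)}$ relative to the bounded-domain stepsize of Theorem~\ref{thm: bounded domain clean dynamic regret}. Since $\frakD \le D^\tau$, the comparator sequence $\theta_t^*$ lies inside $\Theta^\tau$, so Lemma~\ref{lem: dynamic regret bounded domain} bounds the meta regret by $\conB\big(\conC\sqrt{(4(D^\tau)^2 + 6 D^\tau V_T)T} + k(G\conD + L\conE) + k(G+L\conD)\delta_\calS\big)$; substituting $D^\tau \le 2\frakD$ turns the leading term into $\mathcal{O}(\sqrt{(\frakD^2 + \frakD V_T)T})$ and the rest into $\mathcal{O}(k)$. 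The expert regret of the same $\tau$ is controlled uniformly by Lemma~\ref{lem: bound on expert regret} as $\conB(k\conF + \conF\sqrt{(T\log N)/2})$, and using $N = |\calE| \le T\log_2 A_{\max}$ gives $\log N = \mathcal{O}(\log T)$, contributing $\mathcal{O}(k + \sqrt{T\log T})$. Summing the two pieces gives the full bound in this regime.

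The two boundary regimes are where the additive $\frakD V_T$ term is born and where I expect the main difficulty, since the generic bound $\sqrt{(\frakD^2+\frakD V_T)T}$ carries no standalone $\frakD V_T$. When $\frakD \le D_{\min}$, every expert's domain already contains the comparator, so one uses the smallest-radius expert; because $D_{\min}$ is of order $\epsilon/T$ the $\sqrt{(\frakD^2+\frakD V_T)T}$ contribution is negligible, but the optimal stepsize $\alpha^*$ may now fall below the smallest grid value (of order $1/\sqrt{T}$), so the stepsize is mis-tuned and the dynamic-regret analysis picks up a linear-in-path penalty of order $\frakD V_T$ -- this is the content of Lemma~\ref{lem: regret bound small comparator}. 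Symmetrically, when $\frakD > D_{\max}$ (of order $\epsilon 2^T/T$) no expert domain can contain the comparator, and Lemma~\ref{lem: regret bound large comparator} must argue directly, comparing the clean loss at the learner's bounded iterates against the far-away minimizer while exploiting the boundedness $f_t(s_t,\theta_t)\le B$ on clean rounds and the gradient-growth condition~\eqref{eq: grad inequality}, again surfacing a $\frakD V_T$-type term.

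The \emph{main obstacle} is therefore concentrated in these boundary lemmas rather than in the generic case, and it is twofold: first, verifying that the condition $A_{\max}\ge C\sqrt{T}$ renders the large-comparator regime either vacuous or controllable without a $T$-dependent blowup from the exponentially large $D_{\max}$; and second, tracking the mis-tuning penalty in the small-comparator regime so that it collapses to exactly the additive $\frakD V_T$ and not something coarser. Once each regime is bounded by $\mathcal{O}(\sqrt{(\frakD^2 + \frakD V_T)T} + k + \frakD V_T + \sqrt{T\log T})$, taking their maximum completes the proof.
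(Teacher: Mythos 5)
Your proposal follows essentially the same route as the paper: the same meta-plus-expert decomposition, the same four-part case analysis over $\frakD$ via Lemmas~\ref{lem: dynamic regret bounded domain}, \ref{lem: bound on expert regret}, \ref{lem: regret bound large comparator} and \ref{lem: regret bound small comparator}, and the same factor-two grid approximation with $\log N = \mathcal{O}(\log T)$. One correction to your accounting of the additive $\frakD V_T$ term: in the generic regime $D_{\min}\le\frakD\le D_{\max}$ you cannot always find $\alpha^\tau$ with $\alpha^*\le\alpha^\tau\le 2\alpha^*$, because $\calE_1$ is floored at $2/\sqrt{T}$ and capped at $A_{\max}/\sqrt{T}$; the paper therefore splits this regime into three sub-cases according to whether $\alpha^*$ lies below, inside, or above $[\alpha_{\min},\alpha_{\max}]$, and it is the sub-case $\alpha^*\ge\alpha_{\max}$ — where one settles for $\alpha^\tau=\alpha_{\max}\ge C$ and pays $12\frakD V_T/\alpha_{\max}\le 12\frakD V_T/C$ — that produces the standalone $\frakD V_T$, not the small- or large-comparator lemmas (Lemma~\ref{lem: regret bound small comparator} yields only an $\epsilon V_T/\sqrt{T}$ path term, and Lemma~\ref{lem: regret bound large comparator} gives the crude $2\frakD(G+2\frakD)\log_2(\frakD T/\epsilon)$ bound). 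With that sub-case analysis added to your generic regime, your plan matches the paper's proof.
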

    Refer to Appendix~\ref{sec: bounds on the clean dynamic regret} for the formal version of Theorem~\ref{thm: clean dynamic regret bound}.

\paragraph{An overview of proof techniques} We now provide a high-level overview of our proof strategies, deferring the detailed proofs to the Appendix. Using standard methods (such as online gradient descent) in OCO, a typical bound on dynamic regret takes the following form:
\begin{align}
    \label{eq: template of dynamic regret}
    \regret_{\rd}\big( \btheta, \calS \big) &\leq \sigma_1 \frac{\| \theta_1 - \theta_1^* \|}{\alpha} + \underbrace{\sigma_2 \alpha \sum_{t \in \calS}\| \nabla f_t(s_t, \theta_t) \|}_{Q_1} + \underbrace{\sigma_3 \sum_{t \in [T] \setminus \calS} \| \nabla f_t(s_t, \theta_t) \| \| \theta_t - \theta_t^* \|}_{Q_2} \notag \\
    &\quad + \underbrace{\sigma_4 \sum_{t=1}^T \frac{\| 2\theta_{t+1} - \theta_{t}^* - \theta_{t+1}^* \| \| \theta_{t}^* - \theta_{t+1}^* \|}{\alpha}}_{Q_3}~,
\end{align}
for some positive $\sigma_i, i \in \{1,\ldots,4\}$.  Notice that $Q_1$ depends on the gradient norm for uncorrupted rounds, $Q_2$ depends on $\| \nabla f_t(s_t, \theta_t) \|$ and domain size for corrupted rounds, while $Q_3$ depends on the domain size across all rounds. When gradients and domain sizes are bounded, choosing $\alpha$ appropriately ensures $\calO(\sqrt{T})$ growth for $Q_1$ and $Q_3$, limiting $Q_2$ to $\calO(k)$ growth. Typically, in bounded domains, $Q_1$ is controlled by assuming bounded gradient norms for uncorrupted rounds, and $Q_2$ by filtering rounds with large gradient norms, ensuring small gradient norms for corrupted rounds  (e.g.,~\cite{van2021robust}). However, this requires bounded gradients for uncorrupted rounds and prior knowledge of $k$, often unavailable, leading to uncontrollable growth of $Q_1$ and $Q_2$.
We circumvent this issue by carefully constructing the \Learn{} loss function, working with $\eta_t \nabla f_t(s_t, \theta_t)$ instead of $\nabla f_t(s_t, \theta_t)$, and controlling its growth (Lemmas~\ref{lem: exp trumps poly},\ref{lem: bound on eta grad^r}). In the unbounded domain, $Q_3$ can grow arbitrarily large and we develop an expert framework to control its growth (Lemma~\ref{lem: bound on expert regret} and Appendix~\ref{sec: bounds on the clean dynamic regret}). Experts operate in chosen bounded domains with fixed step sizes, and the learner strategically combines their actions to achieve a dynamic regret bound of $\calO(\sqrt{V_T T} + V_T + \sqrt{T \log T} + k)$.


\section{Experimental validation}
\label{sec: experimental validation main}
This section presents numerical experiments conducted to substantiate our theoretical findings. We focus on two quintessential machine learning problems, online linear regression, and classification to validate our theory. In particular, we focus on online ridge regression and online support vector machine (SVM).
%
%
We compare \Learn{} with the Top-k filter algorithm (Top-k) of~\citet{van2021robust}  and vanilla online gradient descent (OGD). Note that Top-k has access to the number of outliers $k$, but \Learn{} does not know $k$. We also implemented an uncertain version of Top-k, labeled ``Uncertain Top-k'', which only has access to an estimate of $k$ fixed at $0.75k$. We conduct experiments in an unbounded domain with potentially large gradients of $m$-strongly convex loss functions. Note that none of the baselines provide theoretical guarantees for unbounded domains with outliers. All methods employ the same stepsize $\alpha = \frac{1}{\sqrt{T}}$. We would like to clarify our choice of step size in the experiments. Given that the losses are $m$-strongly convex, it is possible to set $\alpha_t = \frac{1}{mt}$ for the baselines. However, in our experiments, the value of $m$ was very small ($m = \lambda = 10^-4$), which results in a large step size. Consequently, the expected regret bound of $\calO(\frac{\log T}{m})$ becomes quite large. A numerical comparison revealed that using $\alpha = \frac{1}{\sqrt{T}}$ yields smaller regret for the baselines. Therefore, we opted for this more favorable step size in our experimental setup. We report the results for the clean dynamic regrets in Figure~\ref{fig: Clean regret for svm and lin reg}, averaged across $30$ independent runs with the standard errors. We provide additional experimental details in Appendix~\ref{subsec:online-svm} and Appendix~\ref{subsec:online-linear-regression}.

\begin{figure*}[!ht]
    \centering
    \begin{subfigure}[t]{0.25\textwidth}
    \includegraphics[scale=0.4]{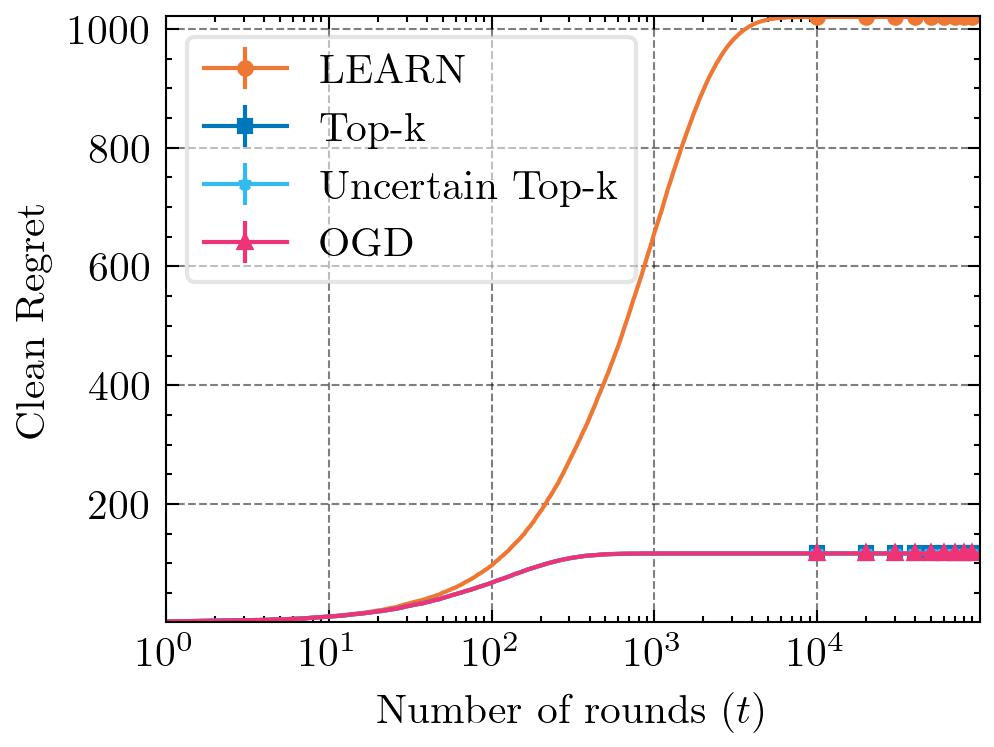}
        \caption{\label{fig: lin reg kcfr k0} $k = 0$ }
    \end{subfigure}%
    \begin{subfigure}[t]{0.25\textwidth}
    \includegraphics[scale=0.4]{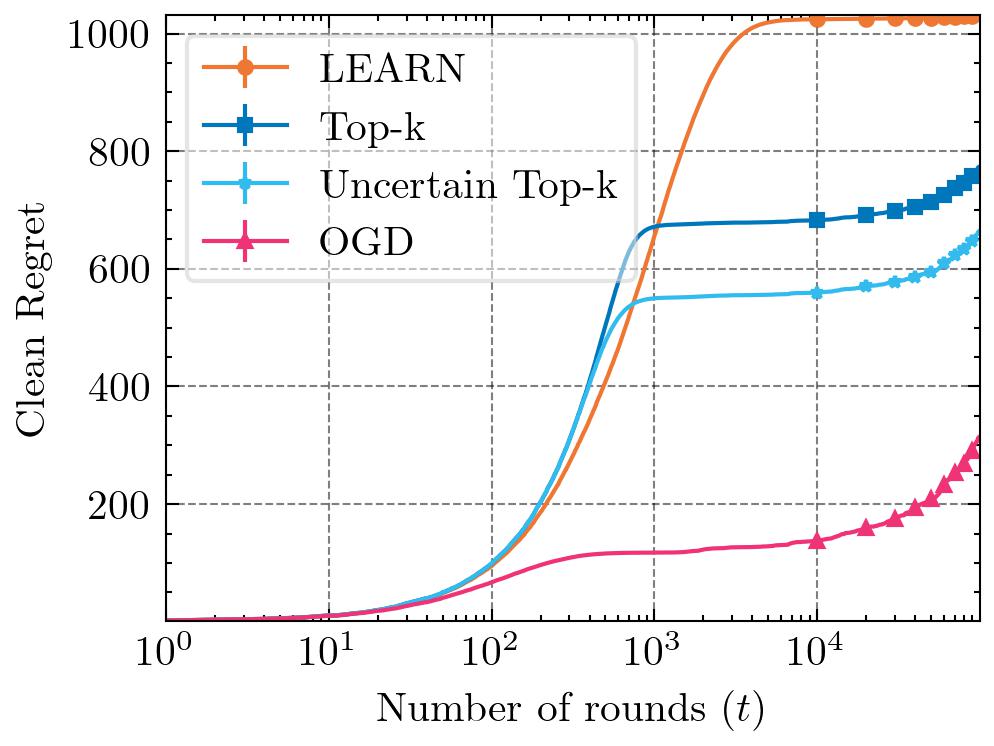}
        \caption{\label{fig: lin reg kcfr ksqrt} $k = \sqrt{T}$ }
    \end{subfigure}%
    \begin{subfigure}[t]{0.25\textwidth}
    \includegraphics[scale=0.4]{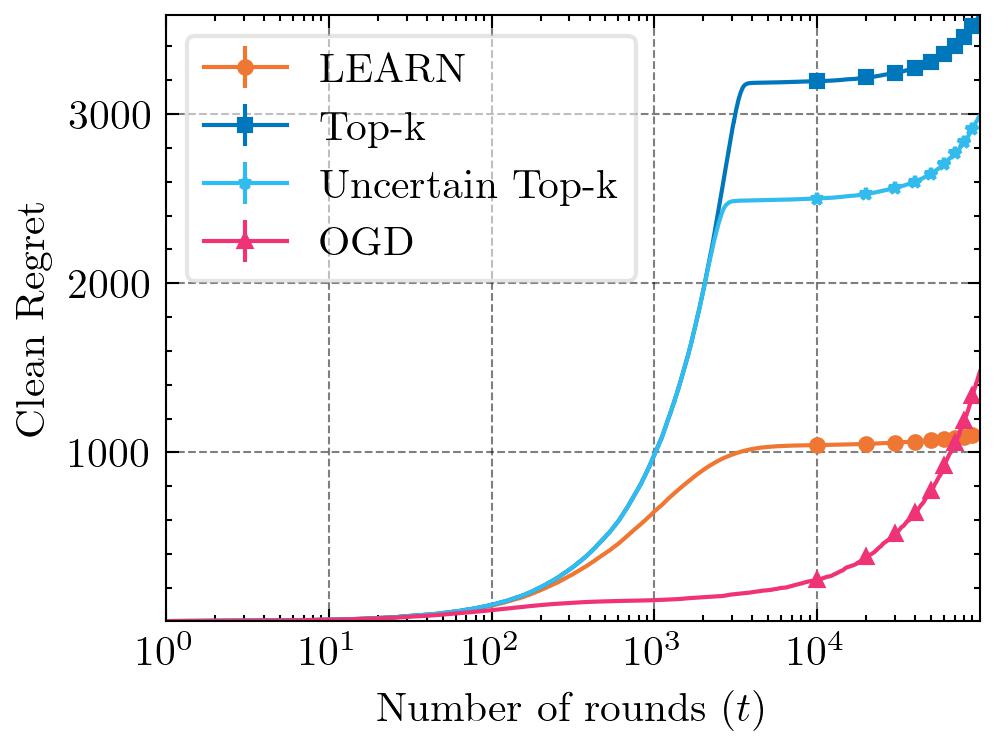}
        \caption{\label{fig: lin reg kcfr ktwothird} $k = T^{\frac{2}{3}}$ }
    \end{subfigure}%
    \begin{subfigure}[t]{0.25\textwidth}
    \includegraphics[scale=0.4]{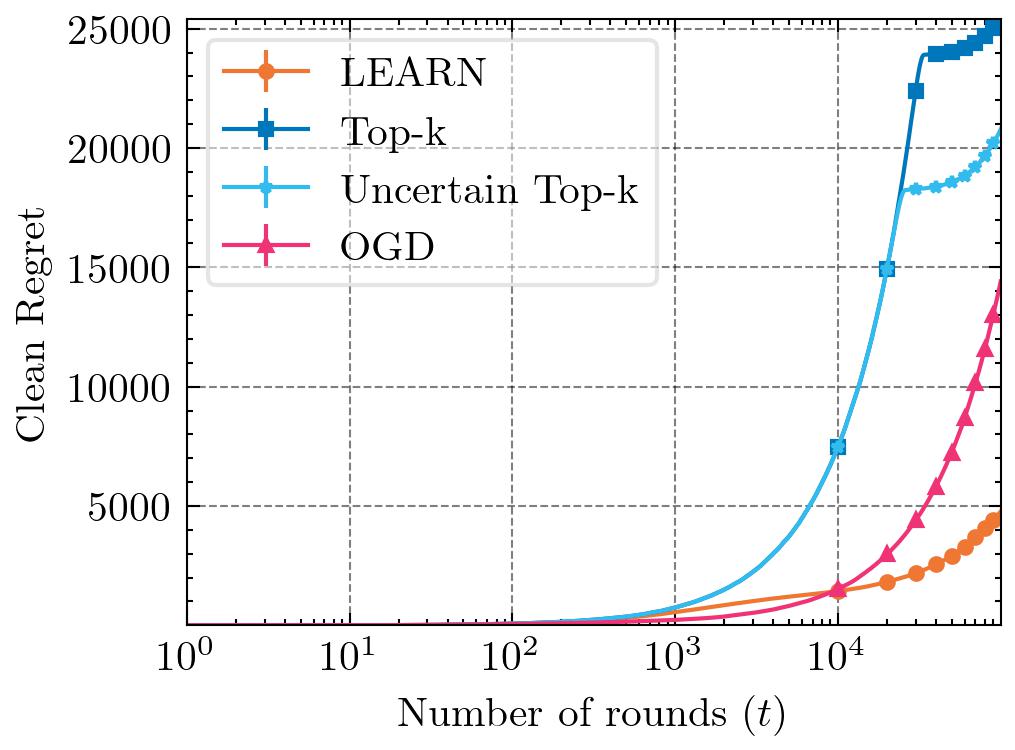}
        \caption{\label{fig: lin reg kcfr k constant} $k = \frac{T}{4}$ }
    \end{subfigure}
    \begin{subfigure}[t]{0.25\textwidth}
    \includegraphics[scale=0.4]{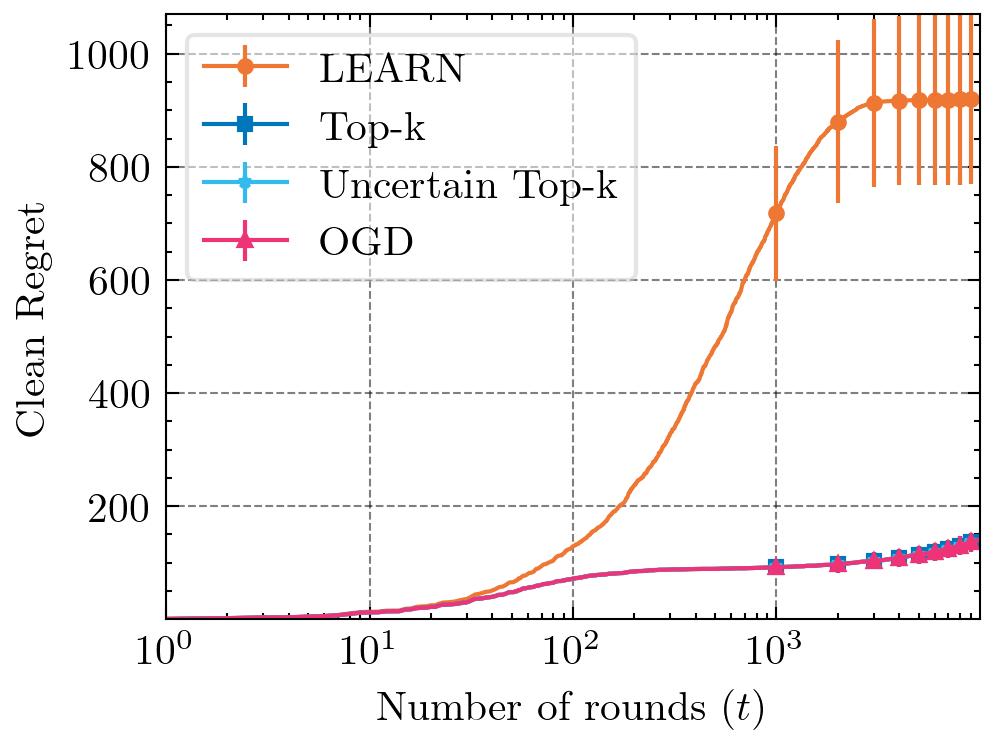}
        \caption{\label{fig: svm kcfr k0} $k = 0$ }
    \end{subfigure}%
    \begin{subfigure}[t]{0.25\textwidth}
    \includegraphics[scale=0.4]{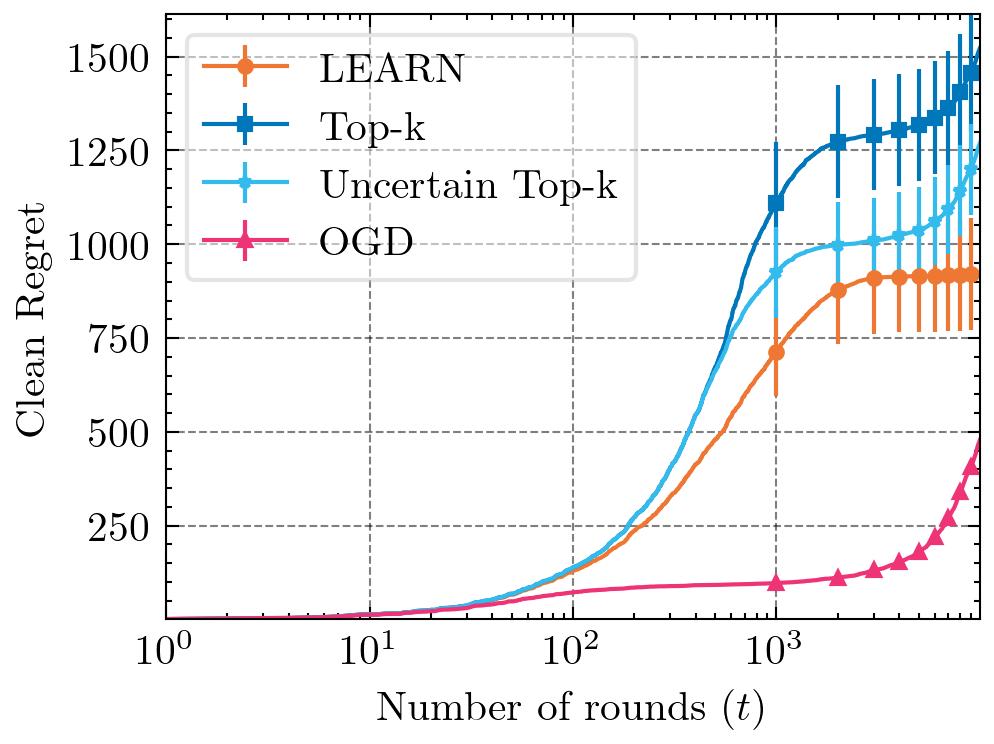}
        \caption{\label{fig: svm kcfr ksqrt} $k = \sqrt{T}$ }
    \end{subfigure}%
    \begin{subfigure}[t]{0.25\textwidth}
    \includegraphics[scale=0.4]{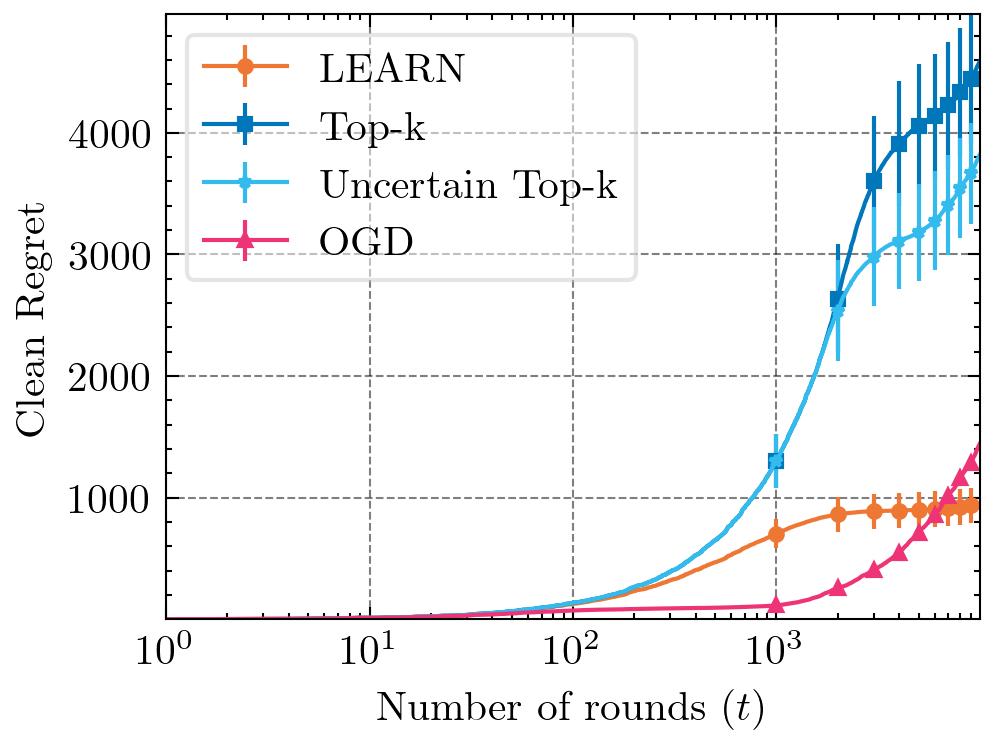}
        \caption{\label{fig: svm kcfr ktwothird} $k = T^{\frac{2}{3}}$ }
    \end{subfigure}%
    \begin{subfigure}[t]{0.25\textwidth}
    \includegraphics[scale=0.4]{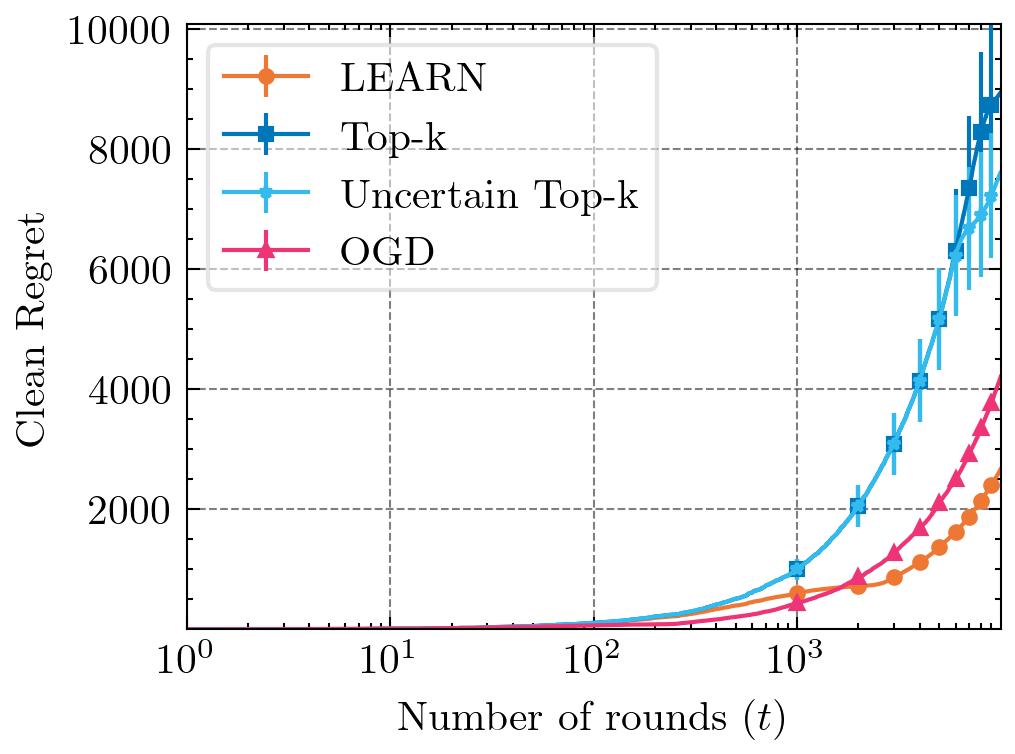}
        \caption{\label{fig: svm kcfr k constant} $k = \frac{T}{4}$ }
    \end{subfigure}\par
    \caption{\label{fig:  Clean regret for svm and lin reg} Clean dynamic regret plots for different algorithms running on the Online Ridge Regression (Top row, Figures~\ref{fig: lin reg kcfr k0}-\ref{fig: lin reg kcfr k constant}) and Online SVM (Bottom row, Figures~\ref{fig: svm kcfr k0}-\ref{fig: svm kcfr k constant}).}
\end{figure*}

\paragraph{Online linear regression}
In this experiment, the learner encounters the following loss function $f_t$ at each round for a fixed $\lambda = 10^{-4}$:
\begin{align}
\label{eq:online lin reg loss main}
f_t\big( (x_t, y_t), \theta \big) = \frac{\lambda}{2} \| \theta \|^2 +  (y_t - \inner{x_t}{\theta})^2~.
\end{align}
The response for uncorrupted rounds is generated using the linear model $y_t = \inner{\theta^*}{x_t} + e_t$ with a unit norm $\theta^* \in [-1, 1]^{100}$. The entries of $x_t$ are drawn independently from the normal distribution $\mathcal{N}(0, 1)$. Additionally, the additive noise $e_t$ is independently chosen from $\mathcal{N}(0, 10^{-6})$.  For corrupted rounds, $y_t$ is chosen uniformly at random from the interval $[0, 1]$. Figures~\ref{fig: lin reg kcfr k0}-\ref{fig: lin reg kcfr k constant} show the results from our experiments with varying numbers of outliers over $T=10^5$ rounds.

\paragraph{Online SVM}
Following \citet{shalev2007pegasos}, we address the primal version of the online SVM problem. With $\lambda$ set at $10^{-4}$, consistent with \citet{shalev2007pegasos}, the learner faces the following loss in each round $t$:
\begin{align}
\label{eq:online-svm-loss-main}
f_t\big( (x_t, y_t), \theta \big) = \frac{\lambda}{2} \| \theta \|^2 + \max(0, 1 - y_t \inner{x_t}{\theta})~.
\end{align}
The labels for the uncorrupted rounds are generated according to $y_t = \text{sign}\big( \inner{\theta^*}{x_t} \big)$, where $\theta^* \in [1, 11]^2$. Mislabeling occurs with  probability $0.05$  when $| \inner{\theta^*}{x_t} | \leq 0.1$. The entries of $x_t$ are drawn independently from $\mathcal{N}(0, 100)$. For corrupted rounds, the adversary flips the sign of $y_t$. Figures~\ref{fig: svm kcfr k0}-\ref{fig: svm kcfr k constant} show the experimental results with varying numbers of outliers over $T=10^4$ rounds.

In the absence of outliers ($k = 0$), the results for both online linear regression (Figure \ref{fig: lin reg kcfr k0}) and Online SVM (Figure~\ref{fig: svm kcfr k0}) illustrate a significant gap in the cumulative clean regret of \Learn{} compared to other baselines. \Learn{} is intentionally designed to anticipate outliers in the data, leading to cautious and slow updates initially, resulting in a substantial accumulation of regret in the beginning.
However, the regret plot for \Learn{} levels off after the initial few rounds.
As the number of outliers increase (Figures \ref{fig: lin reg kcfr ksqrt} and \ref{fig: lin reg kcfr ktwothird} for online linear regression and Figures~\ref{fig: svm kcfr ksqrt} and \ref{fig: svm kcfr ktwothird} for online SVM), the performance of all methods, except \Learn{}, deteriorates significantly. Although OGD initially incurs small regret, its growth rate is the largest, implying that it will eventually surpass other methods. \Learn{}, on the other hand, maintains roughly the same regret until $k = T^{\frac{2}{3}}$.
This demonstrates \Learn{}'s robustness to outliers,
even with higher outlier proportions. These observations corroborate our result in equation~\eqref{eq: clean bounded domain dynamic regret} of Theorem \ref{thm: bounded domain clean dynamic regret}, which predicts sublinear behavior for sublinear $k$. Lastly, recall that with $k = \frac{T}{4}$, equation~\eqref{eq: clean bounded domain dynamic regret} in Theorem~\ref{thm: bounded domain clean dynamic regret} does not assure sublinear regret. Correspondingly, \Learn{} and other methods exhibit increasing regrets in this regime.
\section{Conclusion and future work}
In this work, we established tight dynamic regret guarantees within the robust OCO framework. Introducing the \Learn{} algorithm, we showcased its effectiveness in handling outliers even when the outlier count $k$ remained unknown, particularly when the loss functions are $m$-strongly convex. Moreover, our algorithm worked with unbounded domains and accommodated large gradients. Central to our algorithm, we employ \Learn{} loss $g$, an invex robust loss designed to diminish the influence of outliers during gradient descent style updates. Additionally, we formulated a unified analysis framework to develop online optimization algorithms tailored for a class of non-convex losses referred to as invex losses. This framework serves as the basis for obtaining regret bounds associated with the \Learn{} loss. To substantiate our theoretical findings, we also conducted numerical experiments. A few natural extensions of our work include studying the robust loss $g_t$ with dynamic parameters $a_t$ and $b_t$ and leveraging the unified analysis framework to develop novel algorithms for various invex losses. In future, exploring the application of invex losses in online learning beyond the context of handling outliers would be interesting.


\bibliography{main}
\bibliographystyle{apalike}

\newpage
\appendix

\section{Related Work}
\label{sec:related work}
This section provides a detailed overview of closely related lines of research that inform our work.

\paragraph{Outlier Robust Online Learning} Our work builds on the work of ~\citet{van2021robust} where they consider the OCO framework when outliers are present. They specifically address the scenario where $k$, the number of outliers, is known in advance, and the adversary can arbitrarily corrupt any of the $k$ rounds. The model assumes that inliers have a bounded range (not known), while outliers can extend to an unbounded range. Notably, learner performance is quantified through clean static regret. The core of their algorithm involves the maintaining of top-$k$ gradients, triggering updates to the online learning algorithm only when the norm of the gradient is less than 2 times the minimum norm within the top-$k$ gradients. They establish robust regret guarantees, offering $\calO(G(\calS)(\sqrt{T} + k))$  for convex functions and $\calO(\log T + G(\calS)k)$ for strongly convex functions, where $G(\calS)$ represents the norm of the largest gradient considering only clean rounds. Their work, while providing valuable insights, assumes known values for $k$, a bounded domain, and the existence of a Lipschitz bound $G(\calS)$ for clean rounds. In contrast, our work extends beyond these assumptions, particularly focusing on dynamic environments where the number of outliers is unknown, the domain is unbounded, and the losses may not be Lipschitz.

\paragraph{Robust Losses}
 A pivotal element of our investigation lies in formulating and examining a non-convex robust loss function. To contextualize our work, we highlight pertinent robust losses featured in the existing literature. Notable examples include Huber loss~\citep{huber1964robust}, Tukey's biweight loss~\citep{tukey1974fitting}, Welsch loss~\citep{dennis1978techniques} and Cauchy loss~\citep{black1996robust} among others, each tailored to address different applications. Huber loss, for instance, strikes a balance between mean squared error and mean absolute error, rendering it suitable for robust regression~\citep{huber1973robust}. Similarly, Tukey's biweight loss, known for its efficacy in robust regression, provides resistance against outliers through a truncated quadratic penalty~\citep{chang2018robust}. It is worth noting that Tukey's biweight loss, along with Cauchy and Welsch losses, are non-convex and satisfy the \emph{redescending} property~\citep{hampel2011robust}. \Learn{} loss also shares this attribute. For a detailed comparison of \Learn{} loss with different robust losses see Appendix~\ref{sec: comparison of robust losses}. Various generalizations and extensions of these losses have been studied ~\citep{barron2019general,gokcesu2021generalized}, reflecting the extensive exploration and adaptation of robust loss functions within the broader literature.

\paragraph{OCO with unbounded domains and gradients}
It was only in the recent work of~\citet{jacobsen2023unconstrained} that a sublinear regret guarantee was achieved by avoiding the assumption that there exists $\|\nabla f_t\|\leq G$ for all rounds $t$. They obtain sublinear regret guarantees in the standard OCO framework when the domain is unbounded, and the gradients grow large with the norm of actions ($\| \theta_t \|$).
They provide the first algorithm for dynamic regret minimization in the unbounded domain and gradient setup. In this work, we adopt an expert scheme akin to theirs to handle unbounded domains and gradients that can grow large in the robust outlier framework.

\paragraph{Related Applications} The work of~\citet{chen2022online} studies online linear regression within a realizable context, accounting for small noise using the Huber contamination model~\citep{huber1964robust}. However, our model (similarly the model of~\citet{van2021robust}) differs from theirs significantly. Notably, we do not impose any assumptions of realizability or control the corruption mechanism through probabilistic assumptions. Although distinct in the techniques employed, it is important to acknowledge a substantial body of related work addressing robust optimization in various theoretical computer science and machine learning problems. For a comprehensive overview, the textbook authored by \citet{diakonikolas2023algorithmic} stands out as an excellent reference.

\section{Dynamic Regret in a Bounded Domain Using Top-k Filter Algorithm}
\label{sec: extension of van erven's result}

\citet{van2021robust} established a tight bound on static regret within a bounded domain, assuming prior knowledge of $k$. For convex losses, they achieved an optimal static regret upper bound of $\mathcal{O}(\sqrt{T} + k)$. In the case of strongly convex losses, this bound is further improved to $\mathcal{O}(\log T + k)$, which remains optimal. In this section, we extend their results to the dynamic setting and establish a clean dynamic regret bound of $\mathcal{O}(\sqrt{V_T T} + k)$ within a bounded domain. Notably, this bound remains tight even for strongly convex functions~\citep{zhang2017improved}.

We make the following assumptions for our extension:
\begin{enumerate}
    \item The Lipschitz-adaptive algorithm ALG used by~\cite{van2021robust} is online gradient descent (OGD).
    \item The domain $\Theta$ is bounded.
\end{enumerate}

Let $\mathcal{S} \subseteq [T]$ be the set of uncorrupted rounds. Then, we are interested in bounding the clean dynamic regret defined as:

\begin{align}
\mathfrak{R}_{RD}^T(\mathbf{\theta}, \mathcal{S}) = \sum_{t \in \mathcal{S}} \Big( f_t(s_t, \theta_t) - f_t(s_t, \theta_t^*) \Big)~,
\end{align}

Let $\mathcal{F} \subset [T]$ denote the rounds flitered out by Algorithm 1 in \cite{van2021robust}, and let $\mathcal{P} = [T] \setminus \mathcal{F}$ denote the rounds that are passed on to ALG. Then,

\begin{align}
\mathfrak{R}_{RD}^T(\mathbf{\theta}, \mathcal{S}) = \underbrace{\sum_{t \in \mathcal{S} \cap \mathcal{P}} \Big( f_t(s_t, \theta_t) - f_t(s_t, \theta_t^*) \Big)}_{T_1} + \underbrace{\sum_{t \in \mathcal{S} \cap \mathcal{F}} \Big( f_t(s_t, \theta_t) - f_t(s_t, \theta_t^*) \Big)}_{T_2}
\end{align}

First, we bound $T_1$.

\begin{align}
T_1 = \underbrace{\sum_{t \in \mathcal{P}} \Big( f_t(s_t, \theta_t) - f_t(s_t, \theta_t^* ) \Big)}_{T_{11}} - \underbrace{\sum_{t \in  \mathcal{P} \setminus \mathcal{S}} \Big( f_t(s_t, \theta_t) - f_t(s_t, \theta_t^*) \Big)}_{T_{12}}
\end{align}

The bound for the term $T_{12}$ follows directly from \cite{van2021robust}'s proof of their Theorem 1:
\begin{align}
T_{12} \leq \sum_{t \in  \mathcal{P} \setminus \mathcal{S}} \nabla f_t(s_t, \theta_t)^T(\theta_t - \theta_t^*) \leq 2DG(\mathcal{S})k~,
\end{align}
where $D = \max_{a, b \in \Theta} \| a - b \|$ and $G(\mathcal{S}) = \max_{t \in \mathcal{S}} \| \nabla f_t(s_t, \theta_t) \|$.

Similarly, we take the bound for $T_2$ directly from \cite{van2021robust}'s proof of Theorem 1:
\begin{align}
T_2 \leq  \sum_{t \in \mathcal{S} \cap \mathcal{F}} \nabla f_t(s_t, \theta_t)^T (\theta_t - \theta_t^*) \leq 2 D G(\mathcal{S}) (k+1)
\end{align}
At this point, it only remains to bound $T_{11}$. Recall that OGD (instantiation of ALG) only sees the rounds in $\mathcal{P}$, and by reindexing them as $t \in \{1, \ldots, |\mathcal{P}|\}$, we can write for all $t \in [|\mathcal{P}|]$:
\begin{align}
\| \theta_{t+1} - \theta_{t+1}^* \|^2 = \| \theta_{t+1} - \theta_t^* + \theta_t^* - \theta_{t+1}^* \|^2
\end{align}

Expanding RHS and by definition of $D$, we have:
\begin{align}
\| \theta_{t+1} - \theta_{t+1}^* \|^2 \leq \| \theta_{t+1} - \theta_t^* \|^2 + 3 D \| \theta_t^* - \theta_{t+1}^* \|
\end{align}

Now, substituting $\theta_{t+1} = \Pi_{\theta \in \Theta} \Big( \theta_t - \alpha_t \nabla f_t(s_t, \theta_t) \Big)$ and using the contractive property of the projection on a convex set, we can write:
\begin{align}
\| \theta_{t+1} - \theta_{t+1}^* \|^2 \leq \| \theta_{t} - \alpha_t \nabla f_t(s_t, \theta_t)  - \theta_t^* \|^2 + 3 D \| \theta_t^* - \theta_{t+1}^* \|
\end{align}
Simplifying,
\begin{align}
\| \theta_{t+1} - \theta_{t+1}^* \|^2 \leq \| \theta_{t}  - \theta_t^* \|^2 + \alpha_t^2 \| \nabla f_t(s_t, \theta_t) \|^2 - 2 \alpha_t \nabla f_t(s_t, \theta_t)^T (\theta_t - \theta_t^*) + 3 D \| \theta_t^* - \theta_{t+1}^* \|
\end{align}

Taking $\alpha_t = \alpha > 0$ and using the convexity of $f_t(s_t, \theta_t)$, we get:
\begin{align}
2 \alpha \Big( f_t(s_t, \theta_t) - f_t(s_t, \theta_t^*) \Big) \leq  \| \theta_{t}  - \theta_t^* \|^2 - \| \theta_{t+1} - \theta_{t+1}^* \|^2  + \alpha^2 \| \nabla f_t(s_t, \theta_t) \|^2 + 3 D \| \theta_t^* - \theta_{t+1}^* \|
\end{align}

Summing it across all $t \in [|\mathcal{P}|]$:
\begin{align}
T_{11} \leq \frac{\| \theta_1 - \theta_1^*\|^2}{2\alpha} + \frac{\alpha}{2}\sum_{t \in \mathcal{P}}\| \nabla f_t(s_t, \theta_t) \|^2 + \frac{3D}{2\alpha} \sum_{t \in \mathcal{P}} \| \theta_t^* - \theta_{t+1}^* \|
\end{align}

We make two observations:
\begin{enumerate}
    \item $\| \nabla f_t(s_t, \theta_t) \| \leq 2G(\mathcal{S}), \forall t \in \mathcal{P}$ - due to \cite{van2021robust}
    \item $\sum_{t \in \mathcal{P}} \| \theta_t^* - \theta_{t+1}^* \| \leq \sum_{t \in [T]} \| \theta_t^* - \theta_{t+1}^*\| = V_T$
\end{enumerate}
We choose $\alpha =\frac{\sqrt{3DV_T + D^2}}{2G(\mathcal{S})\sqrt{T}}$ to get:
\begin{align}
T_{11} \leq 2 \sqrt{(3D V_T + D^2) T} G(\mathcal{S})~.
\end{align}
Combining all the above results together, we get
\begin{align}
\mathfrak{R}_{RD}^T(\mathbf{\theta}, \mathcal{S}) \leq  2 \sqrt{(3D V_T + D^2) T} G(\mathcal{S}) + 2DG(\mathcal{S})k + 2 D G(\mathcal{S}) (k+1) = \mathcal{O}( \sqrt{V_T T} + k )~.
\end{align}

\section{Comparison of Robust Losses}
\label{sec: comparison of robust losses}
\begin{figure}[th!]
  \centering
  \includegraphics[width=0.5\columnwidth]{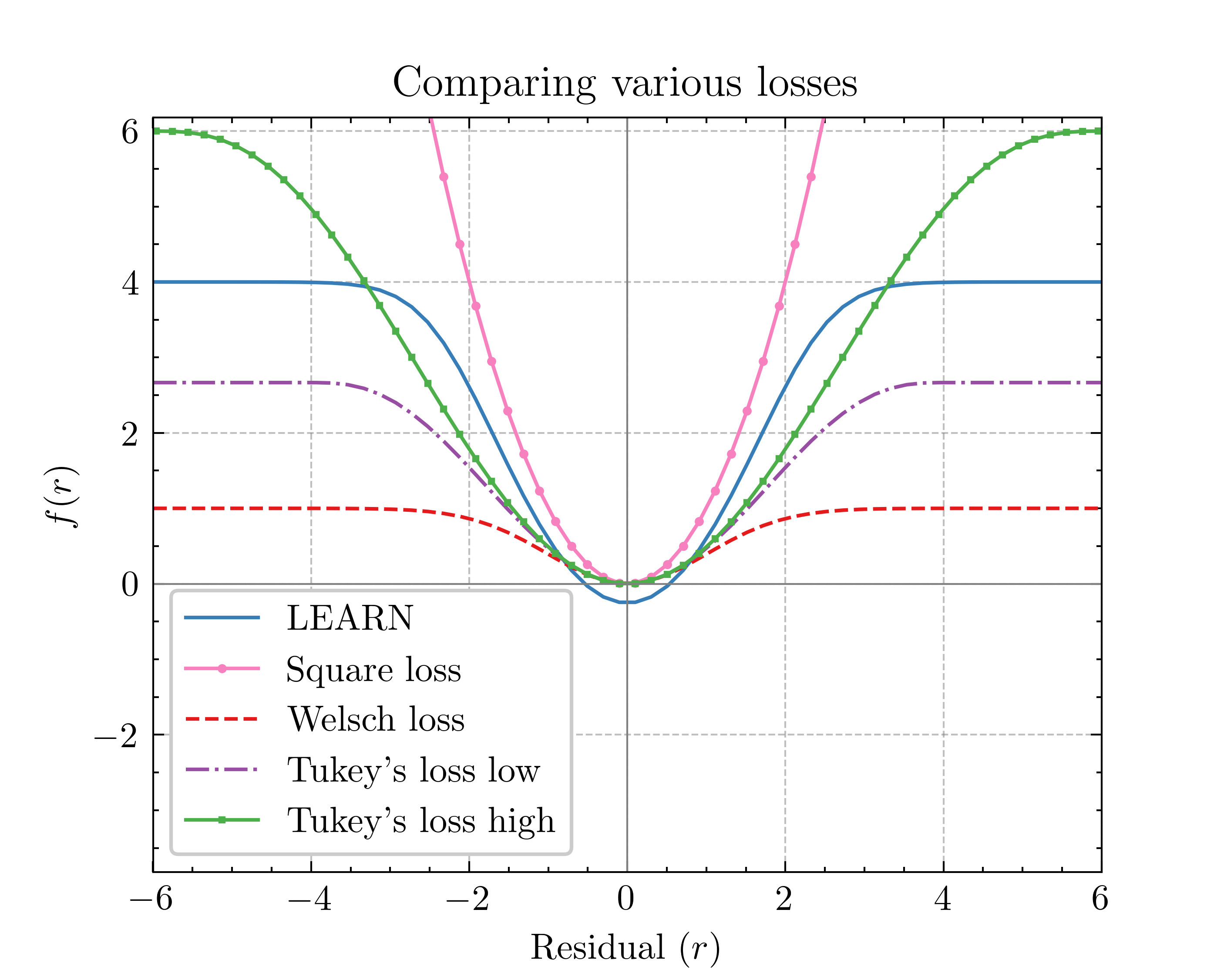}
  \caption{The robust loss closely follows the square loss and flattens out as we move away from the minima.}
  \label{fig: loss comparison}
\end{figure}

To understand how the robust loss works to mitigate its susceptibility to outliers, we examine the behavior of different non-convex losses, including our robust loss. We compare how these losses resemble the square loss $f$ near the origin and observe their behavior as we deviate from it.

Let us consider the square loss function $f(r)=r^2$ where $r$ denotes the residual, and the minimum is attained at  $0$. Our robust loss, denoted by $g$, instantiated with $f(r)$ with constants $a=2$ and $b=e^{-2}$, is represented in the Figure~\ref{fig: loss comparison}.
Note that $g$ closely follows the behavior of the function $f$ in the vicinity of the origin and gradually levels off as we move away from it. Importantly, the minimizers for both $g$ and $f$ are the same.

In Figure~\ref{fig: loss comparison}, we plot two other non-convex losses that are similar in appearance to our robust loss. First, we examine Tukey's biweight loss, defined as
\begin{align}
\ell(r)=\left\{\begin{array}{ll}\frac{c^{2}}{6}\left(1-\left[1-\left(\frac{r}{c}\right)^{2}\right]^{3}\right) & \text { if }|r| \leq c \\ \frac{c^{2}}{6} & \text { otherwise }\end{array}\right.
\end{align}
where $r$ denotes the residual and $c$ is a tunable parameter. We observe that, even when tuning the parameter $c$ to both high$(c=6)$ and low$(c=4)$ values, the curve fails to closely follow the curvature of the square loss, contrasting with the behavior of our robust loss. Moving away from the origin, Tukey's loss flattens out at a slower when compared to $g$.

Next, we consider the Welsch loss defined as
\begin{align}
    h(r)=1-\exp\left(-\frac{1}{2}\left(\frac{r}{c}\right)^2\right)
\end{align}
Here, the tunable parameter is set to $c=2$ in Figure~\ref{fig: loss comparison}. However, its limitation of being capped at one impedes its ability to track the underlying square loss precisely. This causes it to flatten out early, making it less suitable for online learning. These observations shed light on the distinctive characteristics of these losses and underscore the efficacy of our proposed robust loss in capturing the desired behavior.

\section{Invexity of the Robust Loss}
\label{sec: invexity of the robust loss}

\subsection{Proof of Lemma~\ref{lem: g_t is invex}}
\label{subsec: proof of lemma g_t invex}

\begin{proof}
\label{proof: g_t is invex}
To prove that $g_t$ is a $\zeta_t$-invex function, it suffices to show that all its stationary points are the global minima. Using the monocity of $\log(\cdot)$ and $\exp(\cdot)$ functions, it follows that
\begin{align}
    \overline{\theta}_t \in \arg\min_{\theta \in \real^d} f_t(s_t, \theta) = \arg\min_{\theta \in \real^d} g_t(s_t, \theta)
\end{align}

Due to the convexity of $f_t$, either $ \nabla f_t(\overline{\theta}) = 0 $ or $f_t$ does not have any stationary point. Now,
\begin{align}
    \label{eq: grad of gt for invexity}
    \nabla g_t(\theta) = \frac{\exp(-\frac{1}{a} f_t(s_t, \theta))}{b + \exp(-\frac{1}{a} f_t(s_t, \theta))} \nabla f_t(\theta)
\end{align}

It is obvious that $\nabla g_t(\theta) = 0$ if and only if $\nabla f_t(\theta) = 0$. This immediately implies that $g_t$ is invex. Equivalently, there exists a function $\zeta_t(\vartheta_2, \vartheta_1)$ such that
\begin{align}
        g_t(s_t, \vartheta_2) \geq g_t(s_t, \vartheta_1) + \inner{\nabla g_t(s_t, \vartheta_1)}{\zeta_t(\vartheta_2, \vartheta_1)}~.
\end{align}

Again, employing the monocity of $\log(\cdot)$ and $\exp(\cdot)$ functions,
\begin{align}
    \omega_t^* \coloneqq \arg\min_{\theta \in \Theta} f_t(s_t, \theta) = \arg\min_{\theta \in \Theta} g_t(s_t, \theta)
\end{align}

Let
\begin{align*}
    \eta_t = \frac{\exp(-\frac{1}{a} f_t(s_t, \theta_t))}{b + \exp(-\frac{1}{a} f_t(s_t, \theta_t))}, \; \eta_t^* = \frac{\exp(-\frac{1}{a} f_t(s_t, \omega_t^*))}{b + \exp(-\frac{1}{a} f_t(s_t, \omega_t^*))}
\end{align*}

Observe that $\eta_t^* \geq \eta_t$ as a consequence of $ f_t(s_t, \omega_t^*) \leq f_t(s_t, \theta_t) $. Next, we analyze the following quantity:
\begin{align}
    g_t(s_t, \theta_t) - g_t(s_t, \omega_t^*) &= - a \log (\exp( -\frac{1}{a} f_t(s_t, \theta_t) ) + b) + a \log (\exp( -\frac{1}{a} f_t(s_t, \omega_t^*) ) + b) \\
    &= a \log \left( \frac{\exp( -\frac{1}{a} f_t(s_t, \omega_t^*) ) + b}{\exp( -\frac{1}{a} f_t(s_t, \theta_t) ) + b} \right)\\
    &= a \log \left( \frac{\eta_t}{\eta_t^*} \frac{\exp( -\frac{1}{a} f_t(s_t, \omega_t^*) )}{\exp( -\frac{1}{a} f_t(s_t, \theta_t) )} \right) \\
    &= a \log \frac{\eta_t}{\eta_t^*} + f_t(s_t, \theta_t) - f_t(s_t, \omega_t^*) \\
    \label{eq:eta_t<eta_t*}&\leq f_t(s_t, \theta_t) - f_t(s_t, \omega_t^*) \\
    \label{eq:convexity}&\leq \inner{\nabla f_t(s_t, \theta_t)}{\theta_t - \omega_t^*}\\
    \label{eq:nabla g_t}&= \frac{1}{\eta_t} \inner{\nabla g_t(s_t, \theta_t)}{\theta_t - \omega_t^*}~.
\end{align}

Note that the inequality~\eqref{eq:eta_t<eta_t*} is due to $\eta_t \leq \eta_t^*$, inequality~\eqref{eq:convexity} is due to convexity of $f_t$ and Equation~\eqref{eq:nabla g_t} is by observing that $\nabla g_t(s_t, \theta_t) = \eta_t \nabla f_t(s_t, \theta_t)$. This completes the proof.

\paragraph{A note on the nondifferentiability of $f_t$} In the scenario of a nondifferentiable yet convex function $f_t$, the first-order optimality condition implies $0 \in \partial f_t(\theta)$, where $\partial f_t(\cdot)$ denotes the subdifferential set. It's worth noting that the assertion immediately following equation~\eqref{eq: grad of gt for invexity} remains valid, taking into account the condition $0 \in \partial f_t(\overline{\theta}_t)$.
\end{proof}

\subsection{Proof of Theorem~\ref{thm: bound on invex regret}}
\label{subsec: proof of thm: bound on invex regret}

 \begin{proof}
     \label{proof: bound on invex regret}
    We establish that our problem's setting satisfies the necessary conditions to apply the unified analysis framework detailed in Appendix~\ref{sec: dynamic regret for invex losses}. Using the Euclidean distance as our distance function $\Delta$, it is straightforward to verify that Assumption~\ref{assum: triangle inequality} holds, with $\gamma_1 = 1$ and $\gamma_2 = 1$. The assumption of a bounded domain, Assumption~\ref{assum: bounded domain}, remains valid when substituting $R$ with $2D$.

     Next, we verify Assumption~\ref{assum: first-order update property}.
     \begin{align}
          \| \theta_{t+1} - \theta_t^* \|^2
    &= \| \Pi_{\Theta}(\theta_t - \alpha_t \nabla g_t(s_t, \theta_t)) - \theta_t^* \|^2 \\
    \label{eq: projection on convex set}&\leq \| \theta_t - \alpha_t \nabla g_t(s_t, \theta_t) - \theta_t^* \|^2 \\
    &= \| \theta_t - \theta_t^* \|^2 + \alpha_t^2  \| \nabla g_t(s_t, \theta_t) \|^2 - 2 \alpha_t  \inner{- \nabla g_t(s_t, \theta_t)}{\theta_t^* -  \theta_t} \\
    &= \| \theta_t - \theta_t^* \|^2 + \alpha_t^2  \| \nabla g_t(s_t, \theta_t) \|^2 - 2 \eta_t \alpha_t  \inner{- \nabla g_t(s_t, \theta_t)}{\frac{\theta_t^* -  \theta_t}{\eta_t}}~,
     \end{align}
     where inequality~\eqref{eq: projection on convex set} follows from the contraction of the projection on a convex set and $\eta_t$ is defined in the proof of Lemma~\ref{lem: g_t is invex}. Using the results from Lemma~\ref{lem: g_t is invex}, we can verify that the Assumption~\ref{assum: first-order update property} holds with $\gamma_3^t = 1$ and $\gamma_4^t = \frac{1}{\eta_t}$. Now, observe that $\max_{t =1}^T \gamma_4^t \leq (1 + b \exp(\frac{B}{a}))$. By choosing a constant step size $\alpha_t = \alpha$ for all $t \in [T]$ and using the results from Theorem~\ref{thm: bound on invex dynamic regret}, we obtain the following regret bound:
     \begin{align}
        \label{eq: bound on invex dynamic regret g_t}
        \regret_{\id}^T(\btheta) &\leq \left(1 + b \exp\left(\frac{B}{a}\right)\right) \left(  \frac{ 4D^2 }{ 2\alpha } + \frac{\alpha}{2} \sum_{t=1}^T \| \nabla g_t(\theta_t) \|^2 +\frac{ 6D \sum_{t=1}^T \| \theta_t^* - \theta_{t+1}^* \|}{2 \alpha} \right)~.
    \end{align}

   We observe that $\nabla g_t(\theta_t) = \eta_t \nabla f_t(\theta_t)$ and then apply Lemma~\ref{lem: bound on eta grad^r} to write:
   \begin{align}
   \label{eq: bound on invex dynamic regret g_t 1}
       \regret_{\id}^T(\btheta) &\leq \left(1 + b \exp\left(\frac{B}{a}\right)\right)\left(  \frac{ 4D^2 }{ 2\alpha } + \frac{\alpha}{2}  \conC^2 T +\frac{ 6D V_T}{2 \alpha} \right)~,
   \end{align}
   where $C_{m, G, L}(a, b)$ is defined in Lemma~\ref{lem: bound on eta grad^r} and optimal path variation $V_T \coloneqq \sum_{t=1}^T \| \theta_t^* - \theta_{t+1}^* \|$. By choosing,
   \begin{align}
       \alpha = \sqrt{ \frac{ 4D^2 + 6DV_T }{ \conC^2 T } }~,
   \end{align}
   we get
   \begin{align}
       \label{eq: bound on invex dynamic regret g_t 2}
       \regret_{\id}^T(\btheta) &\leq \left(1 + b \exp\left(\frac{B}{a}\right)\right) \conC \sqrt{ ( 4D^2 + 6DV_T)T }~.
   \end{align}

 \end{proof}

\section{A Unified Framework to Bound Dynamic Regret for Online Invex Optimization}
\label{sec: dynamic regret for invex losses}

In this section, we develop a unified framework to bound dynamic regret for invex losses using a first-order algorithm. First, we present some definitions and assumptions which will be used later to construct our framework.

\begin{definition}[Invex functions]
    A differentiable function $g: \real^d \to \real$ is called a $\zeta$-invex function on set $\Theta$ if for any $\vartheta_1, \vartheta_2 \in \Theta$ and a vector-valued function $\zeta: \Theta \times \Theta \to \real^d$,
    \begin{align}
        \label{eq: invex function}
        g(\vartheta_2) \geq g(\vartheta_1) + \inner{\nabla g(\vartheta_1)}{\zeta(\vartheta_2, \vartheta_1)}
    \end{align}
    where $\nabla g(\vartheta_1)$ is the gradient of $g$ at $\vartheta_1$.
\end{definition}

\begin{remark}
    Conventionally, the notation $\eta(\cdot, \cdot)$ is employed instead of $\zeta(\cdot, \cdot)$ within the definition of an invex function. The preference for $\zeta$ in our work arises from the prior use of $\eta$ for a distinct purpose elsewhere in our paper.
\end{remark}
\begin{remark}
    When working in a non-Euclidean Riemannian manifold the inner product $\inner{\cdot}{\cdot}$ will be defined in the tangent space of $\theta_1$ induced by the Riemannian metric.
\end{remark}
\begin{remark}
    The choice of $\zeta(\cdot, \cdot)$ may not be unique.
\end{remark}

\begin{definition}[First-order Algorithm]
\label{def: first-order algorithm}
We denote an algorithm as $\calA \coloneqq \calA(\theta_t, \nabla g(\theta_t), \alpha_t, \zeta)$, where, at round $t$, it accepts the current iterate $\theta_t \in \Theta$, gradient $\nabla g(\theta_t) \in \real^d$, stepsize $\alpha > 0$, and the function $\zeta$ as inputs, and produces the next iterate $\theta_{t+1} \in \Theta$ as its output.
\end{definition}

\begin{definition}[Distance function]
\label{def: distance function}
The function $\Delta: \real^d \times \real^d \to \real$ is called a distance function.\footnote{This concept can be extended to encompass non-Euclidean manifolds. Specifically, one can engage with Hadamard manifolds, taking into account geodesically convex functions.}
\end{definition}
\begin{remark}
   Although $\Delta$ can represent a metric associated with a metric space, we do not explicitly impose this requirement.
\end{remark}

We consider an online invex optimization (OIO) framework akin to OCO where the learner engages in a sequential decision-making process. Specifically, in each round $t \in [T]$, the learner selects an action $\theta_t \in \Theta$ and observes a $\zeta_t$-invex loss denoted as $g_t:\Theta\to \real$. Subsequently, the learner employs a first-order optimization algorithm, denoted as $\mathcal{A}$, to perform the next action $\theta_{t+1} \in \Theta$. All the procedural details of this algorithm are presented in the following section.

\begin{algorithm}[H]
    \caption{First-order algorithm for OIO}\label{alg:online invex optimization}
    \begin{algorithmic}
        \STATE \textbf{Initialize: } $\theta_1 \in \Theta$
        \FOR{$t = 1, \ldots, T$}
        \STATE \textbf{Play: } $\theta_t$
        \STATE \textbf{Observe: } $\zeta_t$-invex loss $g_t$
        \STATE \textbf{Update: } $\theta_{t+1} \gets \calA(\theta_t, \nabla g_t(\theta_t), \alpha_t, \zeta_t)$
        \ENDFOR
    \end{algorithmic}
\end{algorithm}

The learner aims to minimize the dynamic regret as defined below:
\begin{align}
    \regret_{\id}^T(\btheta) \coloneqq \sum_{i=1}^T g_t(\theta_t) - \sum_{i=1}^T g_t(\theta_t^*) \; ,
\end{align}
where $\theta_t^* = \arg\min_{\theta \in \Theta} g_t(\theta)$ and $\btheta = (\theta_1, \ldots, \theta_T)$.

In this section, we use the variable $\gamma$ with subscripts and superscripts to signify positive and finite ($0\leq \Gamma < \infty$) parameters. With this notational clarification, we are prepared to enumerate the assumptions essential for our setup.

\begin{assumption}[Generalized Law of Cosines]
    \label{assum: triangle inequality}
    For all $\vartheta_1, \vartheta_2, \vartheta_3 \in \real^d$, the distance function satisfies the following inequality:
    \begin{align}
        \label{eq: triangle inequality}
        \Delta(\vartheta_2, \vartheta_1)^2 \leq \Delta(\vartheta_2, \vartheta_3)^2 + \gamma_1 \Delta(\vartheta_3, \vartheta_1)^2 + 2 \gamma_2 \Delta(\vartheta_2, \vartheta_3) \Delta(\vartheta_3, \vartheta_1) \; .
    \end{align}
\end{assumption}

\begin{assumption}[Bounded domain]
    \label{assum: bounded domain}
    For all $\vartheta_1, \vartheta_2 \in \Theta$,
    \begin{align}
        \label{eq: bounded domain}
        \Delta(\vartheta_1, \vartheta_2) \leq R \; .
    \end{align}
\end{assumption}

\begin{assumption}[First-order update property]
    \label{assum: first-order update property}
    Let $\theta_{t+1} \coloneqq \calA(\theta_t, \nabla g_t, \alpha_t, \zeta_t)$ be the chosen action at round $t+1$ and $\theta_t^* = \arg\min_{\theta \in \Theta} g_t(\theta)$, then
    \begin{align}
        \label{eq: first-order update property}
        \Delta(\theta_{t+1}, \theta_t^*)^2 \leq \Delta(\theta_t, \theta_t^*)^2 + \gamma_3^t \alpha_t^2 \| \nabla g_t(\theta_t) \|^2 - 2 \frac{1}{\gamma_4^t} \alpha_t \inner{- \nabla g_t(\theta_t)}{\zeta_t(\theta_t^*, \theta_t)} \; .
    \end{align}
\end{assumption}

Now, we are ready to state our theoretical result.

\begin{theorem}
    \label{thm: bound on invex dynamic regret}
    Let $g_t:\Theta \to \real$ for all $t \in [T]$ be a sequence of $\zeta_t$-invex losses and $\btheta = ( \theta_1, \ldots, \theta_T)$ be a sequence of actions generated by a first-order algorithm $\calA$. If there exists a distance function $\Delta: \real^d \times \real^d \to \real$ such that Assumptions~\ref{assum: triangle inequality},~\ref{assum: bounded domain} and~\ref{assum: first-order update property} are satisfied for a sequence of stepsizes $( \alpha_1, \ldots, \alpha_T)$, then the following holds:
    \begin{align}
        \label{eq: bound on invex dynamic regret}
        \begin{split}
        \regret_{\id}^T(\btheta) &\leq \sum_{t=1}^T \frac{ \gamma_4^t(\Delta(\theta_t, \theta_t^*)^2 - \Delta(\theta_{t+1}, \theta_{t+1}^*)^2  )  }{ 2\alpha_t } + \frac{1}{2} \sum_{t=1}^T \alpha_t \gamma_4^t  \gamma_3^t \| \nabla g_t(\theta_t) \|^2 \\
        &\quad +\frac{ R \sum_{t=1}^T \gamma_4^t (\gamma_1^t + 2 \gamma_2^t) \Delta(\theta_t^*, \theta_{t+1}^*)}{2 \alpha_t} \; .  \end{split}
    \end{align}
\end{theorem}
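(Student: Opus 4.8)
The plan is to prove Theorem~\ref{thm: bound on invex dynamic regret} by the standard telescoping argument for dynamic regret, adapted to the invex setting via the three stated assumptions. The starting point is the invexity inequality~\eqref{eq: invex function} applied at $\vartheta_1 = \theta_t$ and $\vartheta_2 = \theta_t^*$, which gives $g_t(\theta_t) - g_t(\theta_t^*) \leq \inner{\nabla g_t(\theta_t)}{-\zeta_t(\theta_t^*, \theta_t)} = -\inner{-\nabla g_t(\theta_t)}{\zeta_t(\theta_t^*, \theta_t)}$. This is the crucial bridge: it converts the per-round regret into the inner-product term that appears in Assumption~\ref{assum: first-order update property}.

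First I would rearrange the first-order update property~\eqref{eq: first-order update property} to isolate that inner product, writing
\begin{align*}
    -\inner{-\nabla g_t(\theta_t)}{\zeta_t(\theta_t^*, \theta_t)} \leq \frac{\gamma_4^t}{2\alpha_t}\Big( \Delta(\theta_t, \theta_t^*)^2 - \Delta(\theta_{t+1}, \theta_t^*)^2 \Big) + \frac{\gamma_3^t \gamma_4^t \alpha_t}{2} \| \nabla g_t(\theta_t) \|^2~.
\end{align*}
Combining this with the invexity bound controls the per-round regret by these two pieces. The second piece, once summed, directly yields the gradient-norm term in the claimed bound. The obstacle lies in the first piece: the distances are measured against the \emph{current} comparator $\theta_t^*$, so $-\Delta(\theta_{t+1}, \theta_t^*)^2$ does not telescope cleanly against the next round's $\Delta(\theta_{t+1}, \theta_{t+1}^*)^2$.

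To resolve this, I would invoke the generalized law of cosines (Assumption~\ref{assum: triangle inequality}) with $\vartheta_2 = \theta_{t+1}$, $\vartheta_3 = \theta_{t+1}^*$, $\vartheta_1 = \theta_t^*$, giving $\Delta(\theta_{t+1}, \theta_t^*)^2 \leq \Delta(\theta_{t+1}, \theta_{t+1}^*)^2 + \gamma_1 \Delta(\theta_{t+1}^*, \theta_t^*)^2 + 2\gamma_2 \Delta(\theta_{t+1}, \theta_{t+1}^*)\Delta(\theta_{t+1}^*, \theta_t^*)$. Substituting this lower bound for $\Delta(\theta_{t+1}, \theta_t^*)^2$ replaces the comparator $\theta_t^*$ by $\theta_{t+1}^*$ inside the squared distance (enabling telescoping) at the cost of two residual drift terms. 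I would then bound these drift terms using the bounded-domain Assumption~\ref{assum: bounded domain}: the factor $\Delta(\theta_{t+1}^*, \theta_t^*)$ is kept as the path-length contribution, while the accompanying $\Delta(\theta_{t+1}^*, \theta_t^*)$ in the quadratic term and the $\Delta(\theta_{t+1}, \theta_{t+1}^*)$ factor are each bounded by $R$, collapsing $\gamma_1 \Delta(\theta_{t+1}^*,\theta_t^*)^2 + 2\gamma_2 \Delta(\theta_{t+1},\theta_{t+1}^*)\Delta(\theta_{t+1}^*,\theta_t^*)$ into $R(\gamma_1 + 2\gamma_2)\Delta(\theta_t^*, \theta_{t+1}^*)$.

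Finally I would sum over $t \in [T]$. The telescoping term collapses into the $\frac{\gamma_4^t}{2\alpha_t}(\Delta(\theta_t,\theta_t^*)^2 - \Delta(\theta_{t+1},\theta_{t+1}^*)^2)$ summand, the gradient term gives the middle summand, and the drift contributions assemble into the final path-length term $\frac{R \sum_t \gamma_4^t(\gamma_1 + 2\gamma_2)\Delta(\theta_t^*, \theta_{t+1}^*)}{2\alpha_t}$, matching~\eqref{eq: bound on invex dynamic regret}. The main obstacle throughout is handling the time-varying comparator shift correctly; careful index bookkeeping so that the $\gamma_4^t$ weights and stepsizes $\alpha_t$ attach to the right terms is the delicate part, and the law-of-cosines substitution is the key conceptual step that makes the telescoping go through.
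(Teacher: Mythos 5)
Your overall strategy is exactly the paper's: combine the invexity inequality with Assumption~\ref{assum: first-order update property} to control the per-round regret, then use Assumption~\ref{assum: triangle inequality} together with Assumption~\ref{assum: bounded domain} to convert $\Delta(\theta_{t+1},\theta_t^*)^2$ into $\Delta(\theta_{t+1},\theta_{t+1}^*)^2$ plus a path-length drift term, and telescope. (The paper applies the law of cosines before inserting the update property rather than after, but that is an immaterial reordering.)

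Two sign/direction slips need attention, one of which would break the argument as literally written. First, the minor one: invexity gives $g_t(\theta_t)-g_t(\theta_t^*) \leq \inner{-\nabla g_t(\theta_t)}{\zeta_t(\theta_t^*,\theta_t)}$, not $-\inner{-\nabla g_t(\theta_t)}{\zeta_t(\theta_t^*,\theta_t)}$; you carry the same spurious minus sign into your rearrangement of the update property, so the two errors cancel and the chain still closes, but neither displayed intermediate inequality is the one that actually follows from the assumptions. Second, and more substantively: you instantiate Assumption~\ref{assum: triangle inequality} with $\vartheta_1=\theta_t^*$, $\vartheta_3=\theta_{t+1}^*$, which yields an \emph{upper} bound $\Delta(\theta_{t+1},\theta_t^*)^2 \leq \Delta(\theta_{t+1},\theta_{t+1}^*)^2 + \text{drift}$. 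Since $\Delta(\theta_{t+1},\theta_t^*)^2$ enters the per-round bound with a negative sign, substituting an upper bound for it reverses the inequality, so this step fails as stated (and calling it a ``lower bound'' does not make it one). What you need is the opposite assignment $\vartheta_1=\theta_{t+1}^*$, $\vartheta_2=\theta_{t+1}$, $\vartheta_3=\theta_t^*$, which gives $\Delta(\theta_{t+1},\theta_{t+1}^*)^2 \leq \Delta(\theta_{t+1},\theta_t^*)^2 + \gamma_1^t\Delta(\theta_t^*,\theta_{t+1}^*)^2 + 2\gamma_2^t\Delta(\theta_{t+1},\theta_t^*)\Delta(\theta_t^*,\theta_{t+1}^*)$, exactly as in the paper. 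With that swap, the bounded-domain step collapses the drift into $R(\gamma_1^t+2\gamma_2^t)\Delta(\theta_t^*,\theta_{t+1}^*)$ and the telescoping and bookkeeping go through as you describe.
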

\begin{proof}
    \label{proof: bound on invex dynamic regret}
    By substituting, $\vartheta_1 = \theta_{t+1}^*, \vartheta_2 = \theta_{t+1}$ and $\vartheta_3 = \theta_t^*$ in Assumption~\ref{assum: triangle inequality}, we get:
    \begin{align}
        \Delta(\theta_{t+1}, \theta_{t+1}^*)^2 &\leq \Delta(\theta_{t+1}, \theta_t^*)^2 + \gamma_1^t \Delta(\theta_t^*, \theta_{t+1}^*)^2 + 2 \gamma_2^t \Delta(\theta_{t+1}, \theta_{t+1}^*) \Delta(\theta_t^*, \theta_{t+1}^*) \\
        \label{eq: bounded domain 1}&\leq \Delta(\theta_{t+1}, \theta_t^*)^2 + R(\gamma_1^t + 2 \gamma_2^t)  \Delta(\theta_t^*, \theta_{t+1}^*)
    \end{align}
    where inequality~\eqref{eq: bounded domain 1} follows from Assumption~\ref{assum: bounded domain}. Next, we bound $\Delta(\theta_{t+1}, \theta_t^*)^2$ using Assumption~\ref{assum: first-order update property} and get:
    \begin{align}
        \Delta(\theta_{t+1}, \theta_{t+1}^*)^2 &\leq \Delta(\theta_t, \theta_t^*)^2 + \gamma_3^t \alpha_t^2 \| \nabla g_t(\theta_t) \|^2 - 2 \frac{1}{\gamma_4^t} \alpha_t \inner{- \nabla g_t(\theta_t)}{\zeta_t(\theta_t^*, \theta_t)}  \\
        &\quad+ R(\gamma_1^t + 2 \gamma_2^t)  \Delta(\theta_t^*, \theta_{t+1}^*)
    \end{align}

    We get the following after rearranging the terms:
    \begin{align}
    \label{eq: rearrange}
        \inner{- \nabla g_t(\theta_t)}{\zeta_t(\theta_t^*, \theta_t)} &\leq \frac{\gamma_4^t(\Delta(\theta_t, \theta_t^*)^2 - \Delta(\theta_{t+1}, \theta_{t+1}^*)^2)}{2 \alpha_t} + \frac{\gamma_4^t \gamma_3^t \alpha_t \| \nabla g_t(\theta_t) \|^2}{2} \notag\\
        &\quad+ \frac{R \gamma_4^t (\gamma_1^t + 2 \gamma_2^t)  \Delta(\theta_t^*, \theta_{t+1}^*)}{2 \alpha_t}
    \end{align}

    Recall that due to $\zeta_t$-invexity of $g_t$, we have
    \begin{align}
        \label{eq: invexity of g_t}
        g_t(\theta_t) - g_t(\theta_t^*) \leq \inner{- \nabla g_t(\theta_t)}{\zeta_t(\theta_t^*, \theta_t)}
    \end{align}

    Combining inequalities~\eqref{eq: rearrange} and \eqref{eq: invexity of g_t}, we get:
    \begin{align}
    \label{eq: combine}
         g_t(\theta_t) - g_t(\theta_t^*) &\leq \frac{\gamma_4^t(\Delta(\theta_t, \theta_t^*)^2 - \Delta(\theta_{t+1}, \theta_{t+1}^*)^2)}{2 \alpha_t} + \frac{\gamma_4^t \gamma_3^t \alpha_t \| \nabla g_t(\theta_t) \|^2}{2} \notag \\
         &\quad + \frac{R \gamma_4^t (\gamma_1^t + 2 \gamma_2^t)  \Delta(\theta_t^*, \theta_{t+1}^*)}{2 \alpha_t}
    \end{align}
    We obtain the desired result by summing inequality~\eqref{eq: combine} from $t=1$ to $T$.
\end{proof}

The corollary below shows an explicit bound on $\regret_{\id}^T(\btheta)$ by appropriately choosing $\alpha_t$.

\begin{corollary}
    \label{cor: constant alpha}
    Let $\gamma_4^{\max} \coloneqq \max_{t=1}^T \gamma_4^t$, $\gamma_{12}^{\max} = \max_{t=1}^T \gamma_1^t + 2 \gamma_2^t $ and $\| \nabla g_t(\theta_t)\| \leq \mathcal{G}$ for all $t \in [T]$ in Theorem~\ref{thm: bound on invex dynamic regret}. Furthermore, define $V_T \coloneqq \sum_{t=1}^T \Delta(\theta_t^*, \theta_{t+1}^*)$. If the learner chooses $\alpha_t = \alpha$ such that
    \begin{align}
        \alpha = \sqrt{\frac{ R^2 + R \gamma_{12}^{\max} V_T}{ \mathcal{G}^2 \sum_{t=1}^T \gamma_3^t }}~,
    \end{align}
    then the following regret guarantee holds:
    \begin{align}
        \regret_{\id}^T(\btheta) &\leq  \gamma_4^{\max} \left( \sqrt{ (R^2 + R \gamma_{12}^{\max} V_T) \mathcal{G}^2 \sum_{t=1}^T \gamma_3^t }  \right)~.
    \end{align}
\end{corollary}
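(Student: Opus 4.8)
The plan is to specialize the general bound of Theorem~\ref{thm: bound on invex dynamic regret} to a constant stepsize $\alpha_t=\alpha$, replace every time-varying coefficient by its stated uniform surrogate, and then optimize the resulting expression over $\alpha$ by a balancing (AM--GM) argument.

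First I would set $\alpha_t=\alpha$ in the bound \eqref{eq: bound on invex dynamic regret}, which then splits into three groups. For the middle term $\frac12\sum_{t=1}^T \alpha\,\gamma_4^t\gamma_3^t\|\nabla g_t(\theta_t)\|^2$, every factor is nonnegative, so I bound $\gamma_4^t\le\gamma_4^{\max}$ and $\|\nabla g_t(\theta_t)\|^2\le\mathcal{G}^2$ termwise to obtain $\tfrac{\alpha}{2}\gamma_4^{\max}\mathcal{G}^2\sum_{t=1}^T\gamma_3^t$. For the third (path-length) term I bound $\gamma_4^t\le\gamma_4^{\max}$ and $\gamma_1^t+2\gamma_2^t\le\gamma_{12}^{\max}$, and recognize $\sum_{t=1}^T\Delta(\theta_t^*,\theta_{t+1}^*)=V_T$, giving $\tfrac{R\gamma_4^{\max}\gamma_{12}^{\max}V_T}{2\alpha}$.

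The delicate piece is the first (telescoping) term $\tfrac{1}{2\alpha}\sum_{t=1}^T \gamma_4^t\big(\Delta(\theta_t,\theta_t^*)^2-\Delta(\theta_{t+1},\theta_{t+1}^*)^2\big)$. Here I would factor out $\gamma_4^{\max}$ and telescope, using that $\sum_t(\Delta(\theta_t,\theta_t^*)^2-\Delta(\theta_{t+1},\theta_{t+1}^*)^2)=\Delta(\theta_1,\theta_1^*)^2-\Delta(\theta_{T+1},\theta_{T+1}^*)^2\le\Delta(\theta_1,\theta_1^*)^2\le R^2$, the final inequality following from the bounded-domain Assumption~\ref{assum: bounded domain} together with $\Delta\ge 0$; this yields $\tfrac{\gamma_4^{\max}R^2}{2\alpha}$. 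I expect this to be the main obstacle, because pulling $\gamma_4^{\max}$ out of a sum whose increments $\Delta(\theta_t,\theta_t^*)^2-\Delta(\theta_{t+1},\theta_{t+1}^*)^2$ may change sign is not valid termwise for genuinely time-varying $\gamma_4^t$. It is immediate when $\gamma_4^t$ is constant in $t$, as happens in the \Learn{} application where $\gamma_4^{\max}=1+b\exp(B/a)$ is used as a single uniform constant; a fully rigorous version for varying $\gamma_4^t$ would either impose this constancy or absorb the total variation of $(\gamma_4^t)_t$ into the constant.

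Combining the three bounds gives $\regret_{\id}^T(\btheta)\le \gamma_4^{\max}\big(\tfrac{R^2+R\gamma_{12}^{\max}V_T}{2\alpha}+\tfrac{\alpha\,\mathcal{G}^2\sum_{t=1}^T\gamma_3^t}{2}\big)$. Finally I would minimize the right-hand side over $\alpha>0$: writing $P:=R^2+R\gamma_{12}^{\max}V_T$ and $Q:=\mathcal{G}^2\sum_{t=1}^T\gamma_3^t$, the expression equals $\gamma_4^{\max}\big(\tfrac{P}{2\alpha}+\tfrac{\alpha Q}{2}\big)$, which by AM--GM is minimized at $\alpha=\sqrt{P/Q}$ --- exactly the stated stepsize --- with minimal value $\gamma_4^{\max}\sqrt{PQ}$. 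Substituting back reproduces the claimed bound $\gamma_4^{\max}\sqrt{(R^2+R\gamma_{12}^{\max}V_T)\,\mathcal{G}^2\sum_{t=1}^T\gamma_3^t}$.
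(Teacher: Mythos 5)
Your derivation is the natural (and, as far as one can tell, the intended) route: the paper states Corollary~\ref{cor: constant alpha} without proof, and the specialization of Theorem~\ref{thm: bound on invex dynamic regret} to constant $\alpha$, termwise bounding of the second and third sums by $\gamma_4^{\max}$, $\mathcal{G}^2$ and $\gamma_{12}^{\max}$, followed by the AM--GM balancing at $\alpha=\sqrt{P/Q}$, is exactly how the paper uses the theorem in the proof of Theorem~\ref{thm: bound on invex regret}. Your algebra and the optimization over $\alpha$ are correct.

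The one step you flag --- replacing $\sum_{t=1}^T \gamma_4^t\bigl(\Delta(\theta_t,\theta_t^*)^2-\Delta(\theta_{t+1},\theta_{t+1}^*)^2\bigr)$ by $\gamma_4^{\max}R^2$ --- is indeed the only genuine subtlety, and your suspicion is warranted: for arbitrary time-varying $\gamma_4^t$ the step is false, not merely inelegant. Writing $a_t\coloneqq\Delta(\theta_t,\theta_t^*)^2\in[0,R^2]$, Abel summation gives $\sum_t\gamma_4^t(a_t-a_{t+1})=\gamma_4^1a_1-\gamma_4^Ta_{T+1}+\sum_{t\ge2}(\gamma_4^t-\gamma_4^{t-1})a_t$, and if the sequence $(\gamma_4^t)$ oscillates adversarially against $(a_t)$ (e.g.\ $a_t$ alternating between $0$ and $R^2$ with $\gamma_4^t$ large exactly on the downward increments), the sum grows like $T(\gamma_4^{\max}-\gamma_4^{\min})R^2$, destroying sublinearity. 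So the corollary as stated needs an extra hypothesis (e.g.\ $\gamma_4^t$ constant, or nondecreasing) or a bound involving the positive variation of $(\gamma_4^t)$. It is worth noting that the paper's concrete instantiations avoid the issue by a different bookkeeping: in the proof of Theorem~\ref{thm: bounded domain clean dynamic regret} (and Lemma~\ref{lem: dynamic regret bounded domain}) the factor $\eta_t$ is kept multiplying the per-round regret on the \emph{left-hand} side, the unweighted differences $a_t-a_{t+1}$ are telescoped, and only at the end is the uniform lower bound $\eta_t\ge 1/\conB$ invoked to divide through. Reorganizing your proof the same way --- i.e.\ restating Assumption~\ref{assum: first-order update property} so that $1/\gamma_4^t$ weights the invexity inner product and telescoping before dividing by $\min_t(1/\gamma_4^t)$ --- closes the gap and recovers the stated bound with $\gamma_4^{\max}$ as a single multiplicative constant.
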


Observe that if $\sum_{t=1}^T \gamma_3^t
 = \calO(T)$, then $\regret_{\id}^T(\btheta) = \calO(\sqrt{V_T T})$.

\section{Bounds on the Clean Dynamic Regret in a Bounded Domain}
\label{sec: bounds on the clean regret}

\subsection{Proof of Theorem~\ref{thm: bounded domain clean dynamic regret}}
\label{subsec: thm: bound on clean static regret}
\begin{proof}
\label{proof thm: bound on clean static regret}

Consider $\| \theta \|_2 \leq D, \forall \theta \in \Theta$. We start our proof by analyzing the following quantity:
\begin{align}
    \label{eq: clean bounded domain dynamic regret}
    \| \theta_{t+1} - \theta^*_{t+1} \|^2 &= \| \theta_{t+1} - \theta_t^* + \theta_t^* - \theta^*_{t+1} \|^2 \\
    &= \| \theta_{t+1} - \theta_t^* \|^2 + \| \theta_t^* - \theta^*_{t+1} \|^2 + 2 \inner{\theta_{t+1} - \theta_t^*}{\theta_t^* - \theta^*_{t+1}}\\
    &\leq \underbrace{\| \theta_{t+1} - \theta_t^* \|^2}_{\text{Q1}} + \underbrace{ 6D \| \theta_t^* - \theta_{t+1}^* \| }_{\text{Q2}}
\end{align}

Observe that Q1 can be further decomposed as below.

\begin{align}
    \| \theta_{t+1} - \theta^*_t \|^2
    &= \| \Pi_{\Theta}(\theta_t - \alpha_t \nabla g_t(s_t, \theta_t)) - \theta^*_t \|^2 \\
    \label{eq: contraction}&\leq \| \theta_t - \alpha_t \nabla g_t(s_t, \theta_t) - \theta^*_t \|^2 \\
    &= \| \theta_t - \alpha_t \eta_t \nabla f_t(s_t, \theta_t) - \theta^*_t \|^2\\
    &\label{eq: base inequality}= \| \theta_t - \theta^*_t \|^2 + \alpha_t^2 \eta_t^2 \| \nabla f_t(s_t, \theta_t) \|^2 - 2 \alpha_t \eta_t \inner{\theta_t - \theta^*_t}{\nabla f_t(s_t, \theta_t)}
\end{align}

The first inequality~\eqref{eq: contraction} follows due to the contraction of the projection on the convex sets. We substitute inequality~\eqref{eq: base inequality} in inequality~\eqref{eq: clean bounded domain dynamic regret} while keeping track of corrupted and uncorrupted rounds.

First, using convexity of $f_t$ for uncorrupted samples, i.e, $t \in \calS$, we get
\begin{align}
\label{eq: uncorrupted_bound}
2\alpha_t \eta_t (f_t(s_t, \theta_t) - f_t(s_t, \theta^*_t)) &\leq   \| \theta_t - \theta^*_t \|^2 - \| \theta_{t+1} - \theta^*_{t+1} \|^2  + \alpha_t^2 \eta_t^2 \| \nabla f_t(s_t, \theta_t) \|^2 \notag\\
&\quad + 6D \| \theta_t^* - \theta_{t+1}^* \| \; ,
\end{align}

and using Cauchy–Schwarz inequality for corrupted samples, i.e., for $t \in [T] \backslash \calS $, we get
\begin{align}
\label{eq: corrupted_bound}
0 &\leq \| \theta_t - \theta^*_t \|^2 - \| \theta_{t+1} - \theta^*_{t+1} \|^2 + \alpha_t^2 \eta_t^2 \| \nabla f_t(s_t, \theta_t) \|^2 + 2 \alpha_t \eta_t \|\theta_t-\theta^*_t\| \| \nabla f_t(s_t, \theta_t) \| \notag \\
&\quad + 6D \| \theta_t^* - \theta_{t+1}^* \|~.
\end{align}

Using the results from Lemma~\ref{lem: bound on eta grad^r}, we can bound:
\begin{align}
    \eta_t \| \nabla f_t(s_t, \theta_t) \| \leq  G + \max( \frac{mL}{2ab}, \frac{4a^2L}{m^2 b} ) \coloneqq \conC~.
\end{align}

To bound the last term of inequality~\eqref{eq: corrupted_bound} consider the term $\eta_t \|\theta_t-\theta^*_t\| \| \nabla f_t(s_t, \theta_t)\|$ for some $t\in [T]\setminus \calS$.
\begin{align}
    \eta_t \| \theta_t-\theta^*_t\| \| \nabla f_t(s_t, \theta_t) \| &\leq \eta_t G \|\theta_t-\theta^*_t\|+  \eta_t L\| \theta_t - \omega_t^* \|\|\theta_t-\theta^*_t\|\\
    &\leq \eta_t G (\|\theta_t-\omega_t^*\|+\|\omega_t^*-\theta^*_t\|)+ \eta_t L \| \theta_t - \omega_t^* \|(\|\theta_t-\omega_t^*\|+\|\omega_t^*-\theta^*_t\|)\\
    &\label{eq: gradcorruption}\leq \eta_t G \|\theta_t-\omega_t^*\|+ \eta_t G \|\omega_t^*-\theta^*_t\|+  \eta_t L \| \theta_t - \omega_t^* \|^2 +   \eta_t L\|\omega_t^*\\
    &\quad -\theta^*_t\|\|\theta_t-\omega_t^*\|~.
\end{align}

Next, we will bound the term $\eta_t \| \theta_t - \omega_t^* \|^r, r \in \{1, 2\}$.
\begin{align*}
    \eta_t \| \theta_t - \omega_t^* \|^r &= \frac{\exp(-\frac{1}{a} f_t(s_t, \theta_t) )}{ \exp(-\frac{1}{a} f_t(s_t, \theta_t) ) + b} \| \theta_t - \omega_t^* \|^r  \\
    &\leq  \frac{1}{\exp( - \frac{1}{a} f_t(s_t, \theta_t) ) + b} \exp(-\frac{1}{a} ( f_t(s_t, \omega_t^*) )) \exp(-\frac{1}{a} \frac{m}{2} \| \theta_t - \omega_t^* \|^2) \| \theta_t - \omega_t^* \|^r \\
    &\leq \frac{1}{b}  \exp(-\frac{1}{a} \frac{m}{2} \| \theta_t - \omega_t^* \|^2) \| \theta_t - \omega_t^* \|^r~.
\end{align*}

Using the results from lemma~\ref{lem: exp trumps poly} with $x = \| \theta_t - \omega_t^* \|, c = \frac{m}{2a}$ and $s=2$,
\begin{align}
    \eta_t \| \theta_t - \omega_t^* \| &\leq \frac{1}{b}\max( \frac{m}{2a}, \frac{4a^2}{m^2} ) \coloneqq \conD~,\\
    \eta_t \| \theta_t - \omega_t^* \|^2 &\leq \frac{1}{b}\max( \frac{m}{2a}, \frac{2a}{m} ) \coloneqq \conE~.
\end{align}

Thus,
\begin{align}
    \eta_t \| \theta_t-\theta^*_t\| \| \nabla f_t(s_t, \theta_t) \| &\leq  G \conD+  G \|\omega_t^*-\theta^*_t\|+  L \conE +    L\|\omega_t^*-\theta^*_t\| \conD~.
\end{align}

Now we are ready to establish the sublinear bound on the clean dynamic regret.

We consider $\alpha_t = \alpha$. Observe that, $ \eta_t \geq \frac{1}{1 + b \exp(\frac{B}{a})}$ and $f_t(s_t, \theta_t) - f_t(s_t, \theta_t^*) \geq 0$ for uncorrupted rounds. Summing up and telescoping inequalities \eqref{eq: uncorrupted_bound} and \eqref{eq: corrupted_bound} from $t = 1, \ldots, T$, we get
\begin{align}
\begin{split}
\label{eq: before_regret}
    \regret_{\rd}^T(\btheta, \calS)
    &\leq  \left(1+b\exp\left(\frac{B}{a}\right)\right)\left(  \frac{\| \theta_1 - \theta^*_1 \|^2}{2\alpha} + \frac{\alpha}{2} \conC^2 T \right. \\
    &+ \left. kG \conD+  kG \max_{t \in [T] \setminus \calS}\|\omega_t^*-\theta^*_t\|+  k L \conE +    kL\conD \max_{t \in [T] \setminus \calS} \|\omega_t^*-\theta^*_t\| + \frac{6 D V_T}{2\alpha} \right)~.
\end{split}
\end{align}

Furthermore, observe that $ \| \theta_1 - \theta_1^* \| \leq 2D$. Thus,
\begin{align}
\begin{split}
\label{eq: before_regret1}
    \regret_{\rd}^T(\btheta, \calS)
    &\leq  \left(1+b\exp\left(\frac{B}{a}\right)\right)\left(  \frac{4D^2}{2\alpha} + \frac{\alpha}{2} \conC^2 T + kG \conD \right. \\
    &+ \left.  kG \max_{t \in [T] \setminus \calS}\|\omega_t^*-\theta^*_t\|+  k L \conE +    kL\conD \max_{t \in [T] \setminus \calS} \|\omega_t^*-\theta^*_t\| + \frac{6 D V_T}{2\alpha} \right)~.
\end{split}
\end{align}

By choosing,
\begin{align}
    \alpha = \sqrt{ \frac{ 4D^2 + 6 D V_T}{\conC^2 T } }~,
\end{align}
we get
\begin{align}
\begin{split}
\label{eq: final clean static regret}
    \regret_{\rd}^T(\btheta, \calS)
    &\leq  \left(1+b\exp\left(\frac{B}{a}\right)\right)\Big(  \conC  \sqrt{ (4D^2 + 6DV_T) T} +   k(G \conD+ L \conE) \notag \\
    &\quad +  k (G +  L\conD) \max_{t \in [T] \setminus \calS}\|\omega_t^*-\theta^*_t\|   \Big)~.
\end{split}
\end{align}

\end{proof}



\section{Bounds on the Clean Dynamic Regret in Unbounded Domain}
\label{sec: bounds on the clean dynamic regret}

\subsection{Formal Statement for Theorem~\ref{thm: clean dynamic regret bound}}
\label{subsec: formal statement thm: clean dynamic regret bound }

\begin{theorem}
    For each $t\in[T]$, let $\{ f_t(s_t, \cdot) \}_{t=1}^T$ be a sequence of $m$-strongly convex loss functions with possible $k$ of them corrupted by outliers. We assume that the loss for uncorrupted rounds falls within the range $[0, B]$, whereas the loss for corrupted rounds can be unbounded. Let $\btheta$ be a sequence of actions chosen by the learner using Algorithm~\ref{alg:outlier robust OGD expert} with parameters for the experts chosen from $\calE$ as defined in equation~\eqref{eq: expert parameters}. Let $\alpha^* = \sqrt{ \frac{32 \frakD^2 + 24 \frakD V_T}{T\conC^2}  }$. The following regret guarantees hold:
    \begin{itemize}
        \item If $\frakD \in [ \frac{2\epsilon}{T}, \frac{\epsilon2^T}{T}, ]$ and $\alpha^* \in [ \frac{2}{\sqrt{T}}, \frac{A_{\max}}{\sqrt{T}} ]$, then
        \begin{align}
    \regret_{\rd}^T(\btheta, \calS)
    & \leq \xi \Big(  3 \sqrt{ \frac{T \conC^2}{2} (16 \frakD^2 + 12 \frakD V_T)  } + k \big(G \conD +  L \conE   + \conC \max_{t \in [T] \setminus \calS} \|\omega_t^*-\theta^*_t\|\big)  + k \conF  \notag\\
    &\quad+ \conF \sqrt{\frac{T\log (T\log_2 A_{\max} )}{2}}  \Big)~.
        \end{align}
        \item If $\frakD \in [ \frac{2\epsilon}{T}, \frac{\epsilon2^T}{T}, ]$ and $\alpha^* \leq \frac{2}{\sqrt{T}}$, then
        \begin{align*}
     \regret_{\rd}^T(\btheta, \calS)
    & \leq  \xi \Big(  8 \frakD^2 \sqrt{T} +  \conC^2 \sqrt{T} + k \big(G \conD +  L \conE  \notag \\
    &\quad  + \conC \max_{t \in [T] \setminus \calS} \|\omega_t^*-\theta^*_t\|\big) + \conC \sqrt{(32 \frakD^2 + 24 \frakD V_T)T} + k \conF + \conF \sqrt{\frac{T\log (T\log_2 A_{\max} )}{2}}  \Big) \; .
        \end{align*}
        \item If $\frakD \in [ \frac{2\epsilon}{T}, \frac{\epsilon2^T}{T}, ]$ and $\alpha^* \geq \frac{A_{\max}}{\sqrt{T}}$, then
        \begin{align*}
    \regret_{\rd}^T(\btheta, \calS)
    & \leq  \xi \Big(  \frac{16 \frakD^2 }{C} +  \frac{\conC}{2} \sqrt{ (32 \frakD^2 + 24 \frakD V_T)T  } + k \big(G \conD +  L \conE  \notag \\
    &\quad  + \conC \max_{t \in [T] \setminus \calS} \|\omega_t^*-\theta^*_t\|\big) + \frac{12 \frakD V_T }{C}  + k \conF + \conF \sqrt{\frac{T\log (T\log_2 A_{\max} )}{2}}  \Big)~.
    \end{align*}
        \item If $\frakD <  \frac{2\epsilon}{T} $, then
        \begin{align}
    \regret_{\rd}^T(\btheta, \calS) &\leq \xi \Big( 8 \frac{\epsilon^2}{T\sqrt{T}} + \sqrt{T} \conC^2 + k \big(G \conD +  L \conE    + \conC \max_{t \in [T] \setminus \calS} \|\omega_t^*-\theta^*_t\|\big) + \frac{6 \epsilon V_T}{\sqrt{T}}  + k \conF \notag \\
    &\quad+ \conF \sqrt{\frac{T\log (T\log_2 A_{\max} )}{2}}  \Big)~.
        \end{align}
        \item If $\frakD >  \frac{\epsilon 2^T}{T} $, then
        \begin{align}
    \regret_{\rd}^T(\btheta, \calS) \leq 2 \frakD \big( G + 2 \frakD \big) \log_2 \frac{\frakD T}{\epsilon}~.
        \end{align}
    \end{itemize}
\end{theorem}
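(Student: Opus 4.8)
The plan is to turn the four-step outline stated after the informal theorem into a case split on where the comparator radius $\frakD = \max_{t}\|\theta_t^*\|$ and the balanced stepsize $\alpha^*$ lie relative to the grid $\calE$. Fixing an expert $\tau$, I would start from the exact identity $\regret_{\rd}^T(\btheta, \calS) = \regret_{\rm{M}}^T(\btheta^{\tau}, \calS) + \regret_{\rm{E}}^T(\btheta, \btheta^{\tau}, \calS)$ and dispose of the expert term once and for all using Lemma~\ref{lem: bound on expert regret}, which gives $\conB(k\conF + \conF\sqrt{T\log N/2})$ with $N \le T\log_2 A_{\max}$; this is precisely the $k\conF + \conF\sqrt{T\log(T\log_2 A_{\max})/2}$ tail appearing in the first four cases. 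The bulk of the work then reduces to bounding the meta regret by invoking the bounded-domain clean dynamic regret guarantee (Theorem~\ref{thm: bounded domain clean dynamic regret}, restated per-expert as Lemma~\ref{lem: dynamic regret bounded domain}) for the copy of Algorithm~\ref{alg:outlier robust OGD} that expert $\tau$ runs inside $\Theta^{\tau} = \{\|\theta\|\le D^{\tau}\}\cap\Theta$. This step is legitimate only when the benchmark sequence lies in $\Theta^{\tau}$, i.e.\ $\frakD \le D^{\tau}$, and under that hypothesis its pre-optimization form reads, up to the $\conB$ factor,
\begin{align*}
\regret_{\rm{M}}^T(\btheta^{\tau}, \calS) &\lesssim \frac{4(D^{\tau})^2 + 6 D^{\tau} V_T}{2\alpha^{\tau}} + \frac{\alpha^{\tau}\conC^2 T}{2} + k\big(G\conD + L\conE + \conC\,\delta_{\calS}\big),
\end{align*}
the final group being the corruption term common to cases 1--4.

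The regime $\frakD \in [\tfrac{2\epsilon}{T}, \tfrac{\epsilon 2^T}{T}]$ is where the grid actually helps. Because $\calE_2$ doubles, I can choose $\tau$ with $D^{\tau} \in [\frakD, 2\frakD]$, whence $4(D^{\tau})^2 + 6D^{\tau}V_T \le 16\frakD^2 + 12\frakD V_T$; the three subcases then come from how I pick $\alpha^{\tau}$ out of $\calE_1$. If $\alpha^* \in [\tfrac{2}{\sqrt T}, \tfrac{A_{\max}}{\sqrt T}]$ I take $\alpha^{\tau}$ within a factor two of $\alpha^*$, balancing the $1/\alpha^{\tau}$ and $\alpha^{\tau}$ terms into the near-optimal $\sqrt{\tfrac{\conC^2 T}{2}(16\frakD^2 + 12\frakD V_T)}$ contribution. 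If $\alpha^* \le \tfrac{2}{\sqrt T}$ I instead fix $\alpha^{\tau} = \tfrac{2}{\sqrt T}$: the gradient term collapses to $\conC^2\sqrt T$, the diameter term to the $8\frakD^2\sqrt T$ piece, and using $\alpha^{\tau} \ge \alpha^*$ inside $\tfrac{1}{\alpha^{\tau}}$ recovers the residual $\conC\sqrt{(32\frakD^2 + 24\frakD V_T)T}$; symmetrically, if $\alpha^* \ge \tfrac{A_{\max}}{\sqrt T}$ I fix $\alpha^{\tau} = \tfrac{A_{\max}}{\sqrt T} \ge C$ (using $A_{\max}\ge C\sqrt T$) to produce the $\tfrac{16\frakD^2}{C} + \tfrac{12\frakD V_T}{C}$ terms, and bound $\alpha^{\tau}\le \alpha^*$ in the gradient term to recover $\tfrac{\conC}{2}\sqrt{(32\frakD^2 + 24\frakD V_T)T}$.

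The two extreme regimes fall outside the bracketing argument and require the dedicated lemmas. When $\frakD < \tfrac{2\epsilon}{T}$ the comparator is smaller than even $D_{\min}$, so every expert's domain contains it and I apply the smallest-domain expert (Lemma~\ref{lem: regret bound small comparator}), substituting $D^{\tau} = \tfrac{2\epsilon}{T}$ and $\alpha^{\tau} = \tfrac{2}{\sqrt T}$ to produce the $8\tfrac{\epsilon^2}{T\sqrt T}$, $\conC^2\sqrt T$ and $\tfrac{6\epsilon V_T}{\sqrt T}$ terms. When $\frakD > \tfrac{\epsilon 2^T}{T}$, \emph{no} expert domain contains the benchmark, so the bounded-domain meta-regret reduction breaks down entirely; here I would fall back on the crude, bespoke estimate of Lemma~\ref{lem: regret bound large comparator}, which exploits that such a comparator is super-exponentially large to afford the weak $2\frakD(G + 2\frakD)\log_2\tfrac{\frakD T}{\epsilon}$ bound. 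Assembling meta plus expert regret in each branch then yields the five displayed inequalities.

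The hard part will be twofold. First, the large-comparator case must be peeled off separately, precisely because the meta-regret reduction silently assumes $\frakD \le D^{\tau}$; obtaining any finite bound there demands a genuinely different (and lossy) argument rather than the grid. Second, the constant bookkeeping across the three stepsize subcases is delicate: one must keep the would-be-optimal $\sqrt{(\frakD^2 + \frakD V_T)T}$ term, the $\sqrt T$ floor penalty, and the $1/C$ ceiling penalty cleanly separated, and verify that the chosen grid endpoints reproduce the exact constants in each displayed bound rather than merely the $\calO(\cdot)$ orders.
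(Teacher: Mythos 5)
Your proposal follows essentially the same route as the paper's proof: the identical meta-plus-expert decomposition, Lemma~\ref{lem: bound on expert regret} for the expert term, the per-expert bounded-domain bound (Lemma~\ref{lem: dynamic regret bounded domain}) with $D^{\tau}\le 2\frakD$ giving the $16\frakD^2+12\frakD V_T$ constants, the three-way case split on $\alpha^*$ against the grid endpoints, and the dedicated small-comparator and large-comparator lemmas for the extreme regimes. The argument and the constant bookkeeping match the paper's.
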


\subsection{Proof of Theorem~\ref{thm: clean dynamic regret bound}}
\label{subsec: proof of thm: clean dynamic regret bound}

\begin{proof}
 Let $\btheta^\tau = (\theta_1^\tau, \ldots, \theta_T^\tau)$ be a sequence of actions chosen by the expert $\tau$. Observe that for any expert $\tau$,
\begin{align}
    \regret_{\rd}^T(\btheta, \calS) &=  \sum_{t \in \calS} f_t(s_t, \theta_t) - \sum_{t \in \calS} f_t(s_t, \theta_t^*) \\
    &= \underbrace{\sum_{t \in \calS} f_t(s_t, \theta_t^\tau) - \sum_{t \in \calS} f_t(s_t, \theta_t^*)}_{\text{Meta Regret $\regret_{\rm{M}}^T(\btheta^{\tau}, \calS)$}} + \underbrace{\sum_{t \in \calS} f_t(s_t, \theta_t) - \sum_{t \in \calS} f_t(s_t, \theta_t^\tau)}_{\text{Expert Regret $\regret_{\rm{E}}^T(\btheta, \btheta^{\tau}, \calS)$}}
\end{align}

The proof unfolds in four major steps:
\begin{enumerate}
    \item Define $ \frakD \coloneqq \max_{t=1}^T \| \theta_t^* \|$. If $\frakD \leq D^{\tau}$ for any specific expert $\tau$, we establish that $\regret_{\rm{M}}^T(\btheta^{\tau}, \calS)$ is bounded.
    \item Demonstrate that  $\regret_{\rm{E}}^T(\btheta, \btheta^{\tau}, \calS)$ remains bounded when $\frakD \leq D^{\tau}$.
    \item Establish the bound on $\regret_{\rd}^T(\btheta, \calS)$ when $\frakD \geq D_{\max} \coloneqq \max_{\tau=1}^N D^{\tau}$.
    \item Conclude the proof by deriving a bound on $\regret_{\rd}^T(\btheta, \calS)$ when $\frakD \leq D_{\min} \coloneqq \min_{\tau=1}^N D^{\tau}$.
\end{enumerate}

First, we bound the meta regret in a bounded domain using the following lemma.
\begin{lemma}
    \label{lem: dynamic regret bounded domain}
    Let $\frakD \leq D^{\tau}$ for some expert $\tau$. Then,
    \begin{align}
        \label{eq:bounded meta regret}
        \regret_{\rm{M}}^T(\btheta^{\tau}, \calS) &\leq \xi \Big( \frac{ (D^\tau + \| \theta^*_1 \|)^2}{\alpha^\tau} + \frac{\alpha^\tau}{2} \sum_{t=1}^T \big( {\eta_t^\tau} \| \nabla f_t(s_t, \theta_t^\tau) \| \big)^2 + k \big(G \conD +  L \conE   \notag\\
        &\quad + \conC \max_{t \in [T] \setminus \calS} \|\omega_t^*-\theta^*_t\|\big) + \frac{6D^{\tau} V_T}{\alpha^\tau} \Big)
    \end{align}
\end{lemma}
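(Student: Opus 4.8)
The plan is to mirror the bounded-domain argument used in the proof of Theorem~\ref{thm: bounded domain clean dynamic regret}, but carried out for the single expert $\tau$, whose iterates evolve by the projected update $\theta_{t+1}^\tau = \Pi_{\Theta^\tau}(\theta_t^\tau - \alpha^\tau \nabla g_t(\theta_t^\tau))$ inside the ball $\Theta^\tau$ of radius $D^\tau$. The structural fact I exploit throughout is $\nabla g_t(\theta_t^\tau) = \eta_t^\tau \nabla f_t(s_t, \theta_t^\tau)$ (Equation~\eqref{eq: grad of gt for invexity}), so a gradient step on the \Learn{} loss is an $\eta_t^\tau$-scaled step on $f_t$. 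The hypothesis $\frakD \leq D^\tau$ guarantees that every clean comparator $\theta_t^*$ lies in $\Theta^\tau$, which makes the analysis against $\theta_t^*$ legitimate and supplies the diameter bound $\|\theta_t^\tau - \theta_t^*\| \leq 2D^\tau$.

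First I would track the potential $\|\theta_{t+1}^\tau - \theta_{t+1}^*\|^2$. Expanding around $\theta_t^*$ and using the diameter bound exactly as in Equation~\eqref{eq: clean bounded domain dynamic regret} introduces the path-length term $6 D^\tau \|\theta_t^* - \theta_{t+1}^*\|$; applying the contraction of $\Pi_{\Theta^\tau}$ and substituting $\nabla g_t = \eta_t^\tau \nabla f_t$ then yields the one-step recursion
\begin{align}
\|\theta_{t+1}^\tau - \theta_{t+1}^*\|^2 &\leq \|\theta_t^\tau - \theta_t^*\|^2 + (\alpha^\tau)^2 (\eta_t^\tau)^2 \|\nabla f_t(s_t,\theta_t^\tau)\|^2 \notag\\
&\quad - 2\alpha^\tau \eta_t^\tau \inner{\theta_t^\tau - \theta_t^*}{\nabla f_t(s_t,\theta_t^\tau)} + 6 D^\tau \|\theta_t^* - \theta_{t+1}^*\|~.
\end{align}
For a clean round $t \in \calS$ I would lower-bound the inner-product term by convexity of $f_t(s_t,\cdot)$, extracting the instantaneous regret $\eta_t^\tau(f_t(s_t,\theta_t^\tau) - f_t(s_t,\theta_t^*))$; for a corrupted round $t \notin \calS$, where $\theta_t^*$ is not the minimizer of $f_t(s_t,\cdot)$ and convexity is useless, I would instead retain the cross term and bound it by Cauchy--Schwarz.

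Next come the key estimates. I would telescope the recursion over all $t \in [T]$, so that the left side collapses to $\|\theta_1^\tau - \theta_1^*\|^2 \leq (D^\tau + \|\theta_1^*\|)^2$ (using $\|\theta_1^\tau\| \leq D^\tau$), the squared-gradient terms accumulate into $\frac{\alpha^\tau}{2}\sum_t (\eta_t^\tau \|\nabla f_t(s_t,\theta_t^\tau)\|)^2$, and the path-length terms sum to $6 D^\tau V_T$. The corrupted cross terms are controlled exactly as in Equations~\eqref{eq: gradcorruption}--\eqref{eq: before_regret}: decomposing $\|\theta_t^\tau - \theta_t^*\| \leq \|\theta_t^\tau - \omega_t^*\| + \|\omega_t^* - \theta_t^*\|$ and invoking the redescending bounds $\eta_t^\tau \|\nabla f_t(s_t,\theta_t^\tau)\| \leq \conC$, $\eta_t^\tau\|\theta_t^\tau - \omega_t^*\| \leq \conD$, and $\eta_t^\tau\|\theta_t^\tau - \omega_t^*\|^2 \leq \conE$ from Lemmas~\ref{lem: exp trumps poly} and~\ref{lem: bound on eta grad^r} (together with $\eta_t^\tau \leq 1$), which bounds each corrupted round by $G\conD + L\conE + (G + L\conD)\|\omega_t^* - \theta_t^*\|$. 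Since $G + L\conD = \conC$ and there are $k$ corrupted rounds, these total $2\alpha^\tau k(G\conD + L\conE + \conC \max_{t\in[T]\setminus\calS}\|\omega_t^* - \theta_t^*\|)$. Finally, using $\eta_t^\tau \geq 1/\xi$ and $f_t(s_t,\theta_t^\tau) - f_t(s_t,\theta_t^*) \geq 0$ on clean rounds converts the $\eta$-weighted regret into the genuine meta regret $\regret_{\rm{M}}^T(\btheta^\tau,\calS)$; dividing through by $2\alpha^\tau$ and collecting terms gives the claimed bound (with the stated constants loose by harmless factors of two).

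The main obstacle is the corrupted rounds. Because $\theta_t^*$ minimizes the uncorrupted loss $f_t(c_t,\cdot)$ rather than the observed $f_t(s_t,\cdot)$, no regret can be extracted there and the only thing to establish is that the potential does not explode; this is possible solely because the \Learn{} construction replaces $\nabla f_t$ by $\eta_t^\tau \nabla f_t$, whose norm---and the quantities $\eta_t^\tau\|\theta_t^\tau - \omega_t^*\|^r$---stay bounded even when $\|\nabla f_t\|$ is arbitrarily large. Verifying that the per-round corruption cost is $\calO(1)$ in the gradient magnitude, uniformly over the expert's choice of $\alpha^\tau$ and $D^\tau$, is therefore the technical heart of the argument and the reason the redescending property of the loss is indispensable.
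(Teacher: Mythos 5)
Your proposal is correct and follows essentially the same route as the paper: the paper's own proof expands $\|\theta_{t+1}^\tau-\theta_{t+1}^*\|^2$ around $\theta_t^*$ to pick up the $6D^\tau\|\theta_t^*-\theta_{t+1}^*\|$ path term, applies projection contraction with $\nabla g_t=\eta_t^\tau\nabla f_t$, splits clean rounds (convexity) from corrupted rounds (Cauchy--Schwarz plus the $\conC,\conD,\conE$ bounds from Lemmas~\ref{lem: exp trumps poly} and~\ref{lem: bound on eta grad^r}), and finishes by telescoping and using $\eta_t^\tau\geq 1/\xi$ with $\|\theta_1^\tau-\theta_1^*\|\leq D^\tau+\|\theta_1^*\|$. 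Your identification $G+L\conD=\conC$ and your remark about harmless constant-factor slack are both consistent with how the paper states the lemma.
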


We use the result from Lemma~\ref{lem: bound on expert regret} to bound the expert regret. In the next lemma, we will bound the clean dynamic regret when $\frakD$ is too large.

\begin{lemma}
    \label{lem: regret bound large comparator}
    Let $\frakD \geq \max_{\tau = 1}^N D^{\tau} \coloneqq D_{\max}$ and $f_t$ are a sequence of functions satisfying inequality~\eqref{eq: grad inequality}, then
    \begin{align}
        \label{eq: regret bound large comparator}
        \regret_{\rd}^T(\btheta, \calS) \leq 2 \frakD \big( G + 2 \frakD \big) \log_2 \frac{\frakD T}{\epsilon} \; .
    \end{align}
\end{lemma}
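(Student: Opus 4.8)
The plan is to abandon the expert machinery entirely in this regime and fall back on a crude, round-by-round bound on the clean dynamic regret, exploiting the fact that when $\frakD \geq D_{\max}$ the learner's iterates are confined to a ball that is small relative to the comparator. The key observation, which is what lets the final bound carry a $\log_2(\frakD T/\epsilon)$ factor rather than a factor linear in $T$, is that $D_{\max} = \frac{\epsilon 2^T}{T}$ by the construction of $\calE_2$ in Equation~\eqref{eq: expert parameters}; hence the hypothesis $\frakD \geq D_{\max}$ forces $\frac{\frakD T}{\epsilon} \geq 2^T$ and therefore $T \leq \log_2\frac{\frakD T}{\epsilon}$.

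First I would bound the norm of the played iterate. Since $\theta_t = \frac{1}{Z_t}\sum_{\tau=1}^N \rho_t^\tau \theta_t^\tau$ is a convex combination (the weights $\rho_t^\tau/Z_t$ are nonnegative and sum to one) of the expert iterates, each of which satisfies $\|\theta_t^\tau\|_2 \leq D^\tau \leq D_{\max}$ because $\theta_t^\tau \in \Theta^\tau$, convexity of the norm yields $\|\theta_t\| \leq D_{\max} \leq \frakD$. Consequently, for every clean round $t \in \calS$, the triangle inequality gives $\|\theta_t - \theta_t^*\| \leq \|\theta_t\| + \|\theta_t^*\| \leq D_{\max} + \frakD \leq 2\frakD$, using $\|\theta_t^*\| \leq \frakD$ by the definition of $\frakD$.

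Next I would derive the per-round bound. For a clean round we have $\omega_t^* = \theta_t^*$, so convexity of $f_t$, followed by Cauchy--Schwarz and the gradient inequality~\eqref{eq: grad inequality}, gives $f_t(s_t,\theta_t) - f_t(s_t,\theta_t^*) \leq \inner{\nabla f_t(s_t,\theta_t)}{\theta_t - \theta_t^*} \leq \|\nabla f_t(s_t,\theta_t)\|\,\|\theta_t-\theta_t^*\| \leq (G + L\|\theta_t-\theta_t^*\|)\,\|\theta_t-\theta_t^*\|$. Substituting $\|\theta_t-\theta_t^*\| \leq 2\frakD$ bounds each clean-round increment by $2\frakD(G + 2\frakD)$ (absorbing $L$ into the stated constant). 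Summing over the at most $T$ clean rounds then yields $\regret_{\rd}^T(\btheta,\calS) \leq 2\frakD(G+2\frakD)\,T$.

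Finally I would invoke the regime-specific inequality $T \leq \log_2\frac{\frakD T}{\epsilon}$ established in the first paragraph to replace the factor $T$ and conclude the claimed bound. I expect the main subtlety to lie precisely in this last step: one must recognize that the $\log$ factor, which superficially looks tighter than the naive $\mathcal{O}(\frakD^2 T)$ estimate, is in fact an \emph{upper} bound for $T$ because $\frakD$ is exponentially large in this regime, so the stated bound is genuinely weaker (hence valid) rather than a stronger claim requiring extra work. A secondary point to pin down carefully is the constant in the per-round estimate, namely the role of $L$, so that the final expression matches the stated form $2\frakD(G+2\frakD)$.
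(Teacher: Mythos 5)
Your proposal is correct and follows essentially the same route as the paper's proof: bound each clean-round increment by convexity, Cauchy--Schwarz, and the gradient inequality~\eqref{eq: grad inequality} using $\|\theta_t - \theta_t^*\| \leq D_{\max} + \frakD \leq 2\frakD$, sum over at most $T$ rounds, and then convert the factor $T$ into $\log_2\frac{\frakD T}{\epsilon}$ via $D_{\max} = \frac{\epsilon 2^T}{T}$ and the hypothesis $\frakD \geq D_{\max}$. You are in fact slightly more explicit than the paper on two points it glosses over --- justifying $\|\theta_t\| \leq D_{\max}$ via the convex-combination structure of the played iterate, and noting that the constant $L$ is silently absorbed into the stated bound $2\frakD(G+2\frakD)$.
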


Finally, we will show that clean dynamic regret is bounded even when $\frakD$ is too small.

\begin{lemma}
    \label{lem: regret bound small comparator}
    Let $\frakD \leq D_{\min} = \min_{\tau = 1}^N D^{\tau}$, then the following regret guarantee holds:
    \begin{align}
    \label{eq: regret bound small comparator}
    \regret_{\rd}^T(\btheta, \calS) &\leq \xi \Big( 8 \frac{\epsilon^2}{T\sqrt{T}} + \sqrt{T} \conC^2 + k \big(G \conD +  L \conE    + \conC \max_{t \in [T] \setminus \calS} \|\omega_t^*-\theta^*_t\|\big) + \frac{6 \epsilon V_T}{\sqrt{T}}  + k \conF \notag \\
    &\quad+ \conF \sqrt{\frac{T\log (T\log_2 A_{\max} )}{2}}  \Big)
\end{align}
\end{lemma}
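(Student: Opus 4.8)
The plan is to reduce the whole statement to two already-proven lemmas by committing to one carefully chosen expert and then substituting its explicit parameters. Since $\frakD \leq D_{\min} = \min_{\tau} D^\tau$, the definition of the minimum guarantees an expert whose domain radius equals $D_{\min}$; because $\calE = \calE_1 \times \calE_2$ is a product set, I would take $\tau_0$ to be the expert with parameters $(\alpha^{\tau_0}, D^{\tau_0}) = (\tfrac{2}{\sqrt{T}}, \tfrac{2\epsilon}{T})$, pairing the smallest step size in $\calE_1$ with the smallest domain in $\calE_2$. The hypothesis $\frakD \leq D^{\tau_0}$ then ensures every comparator satisfies $\|\theta_t^*\| \leq \frakD \leq D^{\tau_0}$, so all comparators stay feasible inside $\Theta^{\tau_0}$ — exactly the condition required to invoke the bounded-domain meta-regret bound. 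I would then split the regret via the standard decomposition $\regret_{\rd}^T(\btheta, \calS) = \regret_{\rm{M}}^T(\btheta^{\tau_0}, \calS) + \regret_{\rm{E}}^T(\btheta, \btheta^{\tau_0}, \calS)$ and bound each piece.

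For the meta regret I would apply Lemma~\ref{lem: dynamic regret bounded domain} (legitimate since $\frakD \leq D^{\tau_0}$) and simplify its four terms using the explicit values. Using $\|\theta_1^*\| \leq \frakD \leq D^{\tau_0}$ gives $(D^{\tau_0} + \|\theta_1^*\|)^2 \leq (2 D^{\tau_0})^2 = 16\epsilon^2/T^2$, so the leading term becomes $(16\epsilon^2/T^2)/(2/\sqrt{T}) = 8\epsilon^2/(T\sqrt{T})$. The gradient term is controlled by Lemma~\ref{lem: bound on eta grad^r}, which yields $\eta_t^{\tau_0}\|\nabla f_t(s_t, \theta_t^{\tau_0})\| \leq \conC$, so $\tfrac{\alpha^{\tau_0}}{2}\sum_{t=1}^T (\eta_t^{\tau_0}\|\nabla f_t(s_t, \theta_t^{\tau_0})\|)^2 \leq \tfrac{1}{\sqrt{T}}\conC^2 T = \conC^2\sqrt{T}$. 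The path-length term reduces to $6 D^{\tau_0} V_T/(2/\sqrt{T}) = 6\epsilon V_T/\sqrt{T}$, while the corruption term $k(G\conD + L\conE + \conC\max_{t \in [T]\setminus\calS}\|\omega_t^* - \theta_t^*\|)$ carries over unchanged.

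For the expert regret I would invoke Lemma~\ref{lem: bound on expert regret} directly, since its guarantee holds for every expert including $\tau_0$, giving $\regret_{\rm{E}}^T(\btheta, \btheta^{\tau_0}, \calS) \leq \xi\big(k\conF + \conF\sqrt{T\log N/2}\big)$, and then substitute $N \leq T\log_2 A_{\max}$ to recover the $\conF\sqrt{T\log(T\log_2 A_{\max})/2}$ term. Summing the meta and expert contributions reproduces exactly the claimed bound. The bulk of the work is the routine arithmetic of inserting the four parameter values into each term; the only genuine point requiring care — and what I would flag as the main obstacle — is verifying that the product structure of $\calE$ does contain an expert with the specific pair $(2/\sqrt{T}, 2\epsilon/T)$ and that this pair simultaneously respects the feasibility condition $\frakD \leq D^{\tau_0}$, since the entire argument rests on this single expert being both available and admissible.
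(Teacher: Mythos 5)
Your proposal is correct and follows essentially the same route as the paper: the paper likewise commits to the single expert with parameters $(\alpha_{\min}, D_{\min}) = (\tfrac{2}{\sqrt{T}}, \tfrac{2\epsilon}{T})$, applies Lemma~\ref{lem: dynamic regret bounded domain} to the meta regret and Lemma~\ref{lem: bound on expert regret} to the expert regret, and then substitutes $\tfrac{D_{\min}}{\alpha_{\min}} = \tfrac{\epsilon}{\sqrt{T}}$ and $N \leq T\log_2 A_{\max}$ to obtain the stated bound. The feasibility point you flag is immediate from the product structure of $\calE$, exactly as you argue.
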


The final step is to bound the dynamic regret when $D_{\min} \leq \frakD \leq D_{\max}$, then it must be that for some expert $\tau$,
\begin{align*}
    \frac{D^\tau}{2} \leq \frakD \leq D^\tau \implies D^\tau \leq 2 \frakD \; .
\end{align*}

Using the analysis of Lemma~\ref{lem: dynamic regret bounded domain} and Lemma~\ref{lem: bound on expert regret}, we can write:
\begin{align*}
    \regret_{\rd}^T(\btheta, \calS)  &\leq \xi \Big( \frac{ (D^\tau + \| \theta^*_1 \|)^2}{\alpha^\tau} + \frac{\alpha^\tau}{2} \sum_{t=1}^T \big( {\eta_t^\tau} \| \nabla f_t(s_t, \theta_t^\tau) \| \big)^2 + k \big(G \conD +  L \conE  \notag \\
    &\quad  + \conC \max_{t \in [T] \setminus \calS} \|\omega_t^*-\theta^*_t\|\big) + \frac{6D^\tau V_T}{\alpha^\tau}  + k \conF + \conF \sqrt{\frac{T\log (T\log_2 A_{\max} )}{2}}  \Big)  \\
    & \leq \xi \Big( \frac{ (16 \frakD^2}{\alpha^\tau} + \frac{\alpha^\tau}{2} T \conC^2 + k \big(G \conD +  L \conE  \notag \\
    &\quad  + \conC \max_{t \in [T] \setminus \calS} \|\omega_t^*-\theta^*_t\|\big) + \frac{12 \frakD V_T}{\alpha^\tau}  + k \conF + \conF \sqrt{\frac{T\log (T\log_2 A_{\max} )}{2}}  \Big)
\end{align*}

The optimal $\alpha^\tau$ would be
\begin{align}
    \label{eq: ideal alpha}
    \alpha^* = \sqrt{ \frac{32 \frakD^2 + 24 \frakD V_T}{T\conC^2}  }
\end{align}

If $\alpha^* \leq \alpha_{\min}$, we choose $\alpha^\tau = \alpha_{\min}$, which results in the dynamic regret:
\begin{align*}
     \regret_{\rd}^T(\btheta, \calS)
    & \leq  \xi \Big(  8 \frakD^2 \sqrt{T} +  \conC^2 \sqrt{T} + k \big(G \conD +  L \conE  \notag \\
    &\quad  + \conC \max_{t \in [T] \setminus \calS} \|\omega_t^*-\theta^*_t\|\big) + \conC \sqrt{(32 \frakD^2 + 24 \frakD V_T)T} + k \conF + \conF \sqrt{\frac{T\log (T\log_2 A_{\max} )}{2}}  \Big) \; .
\end{align*}

Similarly, if $\alpha^* \geq \alpha_{\max}$, we choose $\alpha^\tau = \alpha_{\max}$, which results in the dynamic regret:
\begin{align*}
    \regret_{\rd}^T(\btheta, \calS)
    & \leq  \xi \Big(  \frac{16 \frakD^2 }{C} +  \frac{\conC}{2} \sqrt{ (32 \frakD^2 + 24 \frakD V_T)T  } + k \big(G \conD +  L \conE  \notag \\
    &\quad  + \conC \max_{t \in [T] \setminus \calS} \|\omega_t^*-\theta^*_t\|\big) + \frac{12 \frakD V_T }{C}  + k \conF + \conF \sqrt{\frac{T\log (T\log_2 A_{\max} )}{2}}  \Big)~.
\end{align*}

If $\alpha^*$ falls between $[\alpha_{\min}, \alpha_{\max}]$, then there exists an $\alpha^\tau$ such that $\alpha^\tau \leq \alpha^* \leq \alpha^{\tau + 1} = 2 \alpha^\tau$. We pick this $\alpha^\tau$ and bound the dynamic regret as:
\begin{align*}
    \regret_{\rd}^T(\btheta, \calS)
    & \leq \xi \Big(  3 \sqrt{ \frac{T \conC^2}{2} (16 \frakD^2 + 12 \frakD V_T)  } + k \big(G \conD +  L \conE   + \conC \max_{t \in [T] \setminus \calS} \|\omega_t^*-\theta^*_t\|\big)  + k \conF  \\
    &\quad + \conF \sqrt{\frac{T\log (T\log_2 A_{\max} )}{2}}  \Big)~.
\end{align*}

\end{proof}

\subsection{Proof of Lemma~\ref{lem: dynamic regret bounded domain}}
\label{subsec: dynamic regret in bounded domain}

\begin{proof}
    We proceed with the proof by analyzing the following quantity.
    \begin{align}
    \| \theta_{t+1}^\tau - \theta^*_{t+1} \|^2 &= \| \theta_{t+1}^\tau - \theta_t^* + \theta_t^* - \theta^*_{t+1} \|^2 \\
    &= \| \theta_{t+1}^\tau - \theta_t^* \|^2 + \| \theta_t^* - \theta^*_{t+1} \|^2 + 2 \inner{\theta_{t+1}^\tau - \theta_t^*}{\theta_t^* - \theta^*_{t+1}}\\
    &\leq \| \theta_{t+1}^\tau - \theta_t^* \|^2 + 6D^{\tau} \| \theta_t^* - \theta^*_{t+1} \|\\
    &= \| \Pi_{\Theta^\tau}(\theta_t^\tau - \alpha_t^\tau \nabla g_t(s_t, \theta_t^\tau)) - \theta^*_t \|^2 + 6D^{\tau} \| \theta_t^* - \theta^*_{t+1} \|\\
    &\leq \| \theta_t^\tau - \alpha_t^\tau \nabla g_t(s_t, \theta_t^\tau) - \theta^*_t \|^2 + 6D^{\tau} \| \theta_t^* - \theta^*_{t+1} \|\\
    &= \| \theta_t^\tau - \alpha_t^\tau \eta_t^\tau \nabla f_t(s_t, \theta_t^\tau) - \theta^*_t \|^2 + 6D^{\tau} \| \theta_t^* - \theta^*_{t+1} \|\\
    &\label{eq: dynamic base inequality}= \| \theta_t^\tau - \theta^*_t \|^2 + {\alpha_t^\tau}^2 {\eta_t^\tau}^2 \| \nabla f_t(s_t, \theta_t^\tau) \|^2 - 2 \alpha_t^\tau \eta_t^\tau \inner{\theta_t^\tau - \theta_t^*}{\nabla f_t(s_t, \theta_t^\tau)} \notag \\
    &\quad + 6D \| \theta_t^* - \theta^*_{t+1} \|
\end{align}

Using convexity of $f_t$ for uncorrupted samples, i.e, for $t \in \calS$,
\begin{align}
\label{eq: dynamic uncorrupted_bound}
2\alpha_t^\tau \eta_t^\tau (f_t(x_t, \theta_t^\tau) - f_t(x_t, \theta^*_t)) &\leq   \| \theta_t^\tau - \theta^*_t \|^2 - \| \theta_{t+1}^\tau - \theta^*_t \|^2  + {\alpha_t^\tau}^2 \big( {\eta_t^\tau} \| \nabla f_t(x_t, \theta_t^\tau) \| \big)^2 \notag\\
&\quad + 6D^{\tau} \| \theta_t^* - \theta^*_{t+1} \|
\end{align}

Similarly, for corrupted samples, i.e., $t \in [T] \backslash \calS $,
\begin{align}
\label{eq: dynamic corrupted_bound}
0 &\leq \| \theta_t^\tau - \theta^*_t \|^2 - \| \theta_{t+1}^\tau - \theta^*_t \|^2 + {\alpha_t^\tau}^2 \big( {\eta_t^\tau} \| \nabla f_t(y_t, \theta_t^\tau) \| \big)^2 + 2 \alpha_t^\tau \eta_t^\tau \|\theta_t^\tau-\theta^*_t\| \| \nabla f_t(y_t, \theta_t^\tau) \| \notag\\
&\quad + 6D^{\tau} \| \theta_t^* - \theta^*_{t+1} \|
\end{align}

Now, we follow the same argument as the dynamic regret bound in Theorem~\ref{thm: bounded domain clean dynamic regret} and end up with the following bound:
\begin{align}
     \regret_{\rm{M}}^T(\btheta^{\tau}, \calS) &\leq \Big(1+b\exp \Big( \frac{B}{a} \Big) \Big)\Big( \frac{ \| \theta_1^\tau - \theta^*_1 \|^2}{\alpha^\tau}  + \frac{\alpha^\tau}{2} \sum_{t=1}^T \big( {\eta_t^\tau} \| \nabla f_t(s_t, \theta_t^\tau) \| \big)^2 + k \big(G \conD +  L \conE  \notag \\
    &\quad  + \conC \max_{t \in [T] \setminus \calS} \|\omega_t^*-\theta^*_t\|\big) + \frac{3D^{\tau} V_T}{\alpha^\tau} \Big)
\end{align}
where $V_T = \sum_{t=1}^T \| \theta_t^* - \theta_{t+1}^* \|$. The above expression can be further simplified to:
\begin{align}
    \regret_{\rm{M}}^T(\btheta^{\tau}, \calS) &\leq \xi \Big( \frac{ (D^\tau + \| \theta^*_1 \|)^2}{\alpha^\tau} + \frac{\alpha^\tau}{2} \sum_{t=1}^T \big( {\eta_t^\tau} \| \nabla f_t(s_t, \theta_t^\tau) \| \big)^2 + k \big(G \conD +  L \conE  \notag\\
    &\quad + \conC \max_{t \in [T] \setminus \calS} \|\omega_t^*-\theta^*_t\|\big) + \frac{3D^{\tau} V_T}{\alpha^\tau} \Big)
\end{align}
\end{proof}

\subsection{Proof of Lemma~\ref{lem: bound on expert regret}}
\label{subsec: proof of lemma lem: bound on expert regret}
\begin{proof}
\label{proof: lem: bound on expert regret}
Recall from Algorithm~\ref{alg:outlier robust OGD expert} that,
\begin{align}
        \rho_{t+1}^\tau &= \rho_t^\tau \exp\big(-\beta \tilde{\eta}_t f_t(s_t, \theta_t^\tau)\big) \\
        &= \rho_1^\tau \exp\big( - \beta \sum_{t=1}^T \tilde{\eta}_t f_t(s_t, \theta_t^\tau) \big) \\
        &= \exp\big( - \beta \sum_{t \in \calS} \tilde{\eta}_t f_t(s_t, \theta_t^\tau) \big)  \exp\big( - \beta \sum_{t \in [T] \backslash \calS} \tilde{\eta}_t f_t(s_t, \theta_t^\tau) \big)
\end{align}
Furthermore, using the definition of $Z_t$:
\begin{align}
        \log \frac{Z_{T+1}}{Z_{1}} &= \log \frac{\sum_{\tau =1}^N \rho_{t+1}^\tau}{N} \\
        &= \log \Big(\sum_{\tau =1}^N \exp\big( - \beta \sum_{t \in \calS} \tilde{\eta}_t f_t(s_t, \theta_t^\tau) \big)  \exp\big( - \beta \sum_{t \in [T] \setminus \calS} \tilde{\eta}_t f_t(s_t, \theta_t^\tau) \big) \Big) - \log N\\
        \label{eq: eta f_t bound}&\geq \log\Big( \exp\big( - \beta \sum_{t \in \calS} \tilde{\eta}_t f_t(s_t, \theta_t^\tau) \big)\Big) + \log \Big(\exp(-\beta k \conF)\Big) - \log N \\
        \label{eq: lower bound log Z_T+1/Z_1}&\geq  - \beta  \sum_{t \in \calS} \tilde{\eta}_t  f_t(s_t, \theta_t^\tau) -\beta k \conF - \log N \; ,
\end{align}

where $\conF \coloneqq \frac{1}{b} \max(a, \frac{1}{a}) $. The inequality~\eqref{eq: eta f_t bound} follows from Lemma~\ref{lem: bound on eta f}.

Now, consider
\begin{align}
    \log \frac{Z_{t+1}}{Z_t} &= \log \frac{\sum_{\tau=1}^N \rho^\tau_{t+1}}{\sum_{\tau=1}^N \rho^\tau_t} \\
    &= \log \frac{\sum_{\tau=1}^N \exp\big(-\beta \tilde{\eta}_t f_t(s_t, \theta_t^\tau)\big) \rho^\tau_t}{\sum_{\tau=1}^N \rho^\tau_t}
\end{align}

Note that the following inequality holds for any random variable $X \in [a, b]$ and $s > 0$:
\begin{align}
    \log \E[\exp(s X)] \leq s\E[X] + \frac{s^2 (b - a)^2}{8}
\end{align}

Consider $X = - \tilde{\eta}_t f_t(s_t, \theta_t^\tau), s = \beta$. Also, observe that $X \in [-\conF, 0]$. Thus,
\begin{align}
    \log \frac{Z_{t+1}}{Z_t} &\leq - \beta \frac{ \sum_{\tau=1}^N \tilde{\eta}_t f_t(s_t, \theta_t^\tau) \rho_{t}^\tau}{Z_t} + \beta^2 \frac{\conF^2}{8}
\end{align}
Using the convexity of $f_t(s_t, \theta_t^\tau)$,
\begin{align}
    \log \frac{Z_{t+1}}{Z_t} &\leq - \beta \tilde{\eta}_t  f_t(s_t, \theta_t) + \beta^2 \frac{\conF^2}{8}
\end{align}

If round $t$ is corrupted, then
\begin{align}
    \log \frac{Z_{t+1}}{Z_t} &\leq  \beta^2 \frac{\conF^2}{8}
\end{align}

Summing through $t=1$ to $T$, we get
\begin{align}
    \label{eq: upper bound log Z_T+1/Z_1}
    \log \frac{Z_{T+1}}{Z_1} &\leq - \beta \sum_{t \in \calS} \tilde{\eta}_t  f_t(s_t, \theta_t) + \beta^2 T \frac{\conF^2}{8}
\end{align}

Using equations~\eqref{eq: lower bound log Z_T+1/Z_1} and \eqref{eq: upper bound log Z_T+1/Z_1}:
\begin{align}
    - \beta  \sum_{t \in \calS} \tilde{\eta}_t  f_t(s_t, \theta_t^\tau) -\beta k \conF - \log N \leq - \beta \sum_{t \in \calS} \tilde{\eta}_t  f_t(s_t, \theta_t) + \beta^2 T \frac{\conF^2}{8}
\end{align}

It follows that,
\begin{align}
   \regret_{\rm{E}}^T(\btheta, \btheta^{\tau}, \calS) \leq  \Big(1 + b \exp\Big(\frac{B}{a}\Big)\Big) \Big( k \conF + \frac{\log N}{\beta}   + \beta T \frac{\conF^2}{8} \Big)
\end{align}

By picking $\beta = \sqrt{\frac{ 8 \log N }{T \conF^2}}$, we get
\begin{align}
  \label{eq: expert regret}
  \regret_{\rm{E}}^T(\btheta, \btheta^{\tau}, \calS) \leq  \Big(1 + b \exp\Big(\frac{B}{a}\Big)\Big)  \Big( k \conF + \conF \sqrt{\frac{T\log N}{2}}    \Big)
\end{align}

\end{proof}

\subsection{Proof of Lemma~\ref{lem: regret bound large comparator}}
\label{subsec: proof of lemma lem: regret bound large comparator}
\begin{proof}
\label{proof: lem: regret bound large comparator}
In this setting, we provide a trivial bound on dynamic regret (similar to \citet{jacobsen2023unconstrained}).
\begin{align}
    \sum_{t \in \calS} f_t(s_t, \theta_t) - \sum_{t \in \calS} f_t(s_t, \theta_t^*) &\leq \sum_{t \in \calS} \| \nabla f_t(\cdot, \theta_t) \| \| \theta_t - \theta_t^* \| \\
    &\leq \sum_{t \in \calS} \| \nabla f_t(s_t, \theta_t) \|  (D_{\max} + \frakD) \\
    &\leq 2 \frakD (G + 2 \frakD) T
\end{align}

Note that due to our construction of set $\calE_2$,  $D_{\max} = \frac{\epsilon 2^T}{T}$. Clearly, $T \leq \log_2 \frac{\frakD T}{\epsilon}$. Thus,
\begin{align}
\label{eq: dynamic regret with big theta*t}
     \regret_{\rd}^T(\btheta, \calS) &\leq  2 \frakD (G + 2 \frakD)  \log_2 \frac{\frakD T}{\epsilon}
\end{align}
\end{proof}

\subsection{Proof of Lemma~\ref{lem: regret bound small comparator}}
\label{subsec: proof of lemma lem: regret bound small comparator}

\begin{proof}
\label{proof: lem: regret bound small comparator}
Now we consider the setting when $\frakD \leq D_{\min}$. Note that since for all $t \in [T]$ we have $\| \theta_t^* \| \leq D_{\min}$, we can use the analysis of Lemma~\ref{lem: dynamic regret bounded domain} along with Lemma~\ref{lem: bound on expert regret} for the expert with parameters $(\alpha_{\min}, D_{\min})$. Formally,
\begin{align}
    \regret_{\rd}^T(\btheta, \calS)  &\leq \xi \Big( \frac{ (D_{\min} + \| \theta^*_1 \|)^2}{\alpha_{\min}} + \frac{\alpha_{\min}}{2} \sum_{t=1}^T \big( {\eta_t^\tau} \| \nabla f_t(s_t, \theta_t^\tau) \| \big)^2 + k \big(G \conD +  L \conE  \notag \\
    &\quad  + \conC \max_{t \in [T] \setminus \calS} \|\omega_t^*-\theta^*_t\|\big) + \frac{6D_{\min} V_T}{\alpha_{\min}}  + k \conF + \conF \sqrt{\frac{T\log N}{2}}  \Big)
\end{align}

Recall that,
\begin{align}
    \label{eq: bound on N}
    N \leq T\log_2 A_{\max}
\end{align}
and
\begin{align}
    \label{eq: bound on D_min/alpha_min}
    \frac{D_{\min}}{\alpha_{\min}} = \frac{\epsilon}{\sqrt{T}}
\end{align}

Using the above inequalities, we bound the dynamic regret as
\begin{align}
    \regret_{\rd}^T(\btheta, \calS) &\leq  \xi \Big( 8 \frac{\epsilon^2}{T\sqrt{T}} + \sqrt{T} \conC^2 + k \big(G \conD +  L \conE    + \conC \max_{t \in [T] \setminus \calS} \|\omega_t^*-\theta^*_t\|\big) + \frac{6 \epsilon V_T}{\sqrt{T}}  + k \conF \notag \\
    &\quad + \conF \sqrt{\frac{T\log (T\log_2 A_{\max} )}{2}}  \Big)
\end{align}
\end{proof}

\section{Useful bounds}
\label{sec: useful bounds}
Within this section, we derive some critical bounds integral to our analytical framework.

\begin{lemma}
\label{lem: exp trumps poly}
    For all $c,r,s> 0$ and $x \geq c^{\frac{1}{r}}$, $\exp(-cx^s)x^r \leq \frac{1}{x^s}\leq \frac{1}{c^\frac{s}{r}}$.
\end{lemma}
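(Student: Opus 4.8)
The plan is to reduce the entire statement to elementary one–variable calculus, treating $c,r,s$ as fixed positive parameters and $x$ as the single variable ranging over $[c^{1/r},\infty)$. The two inequalities in the chain are handled separately, and the right-hand one is by far the easier; I would dispose of it first and then spend the real effort on the left-hand one.

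For the right inequality $\tfrac{1}{x^{s}}\le\tfrac{1}{c^{s/r}}$, I would invoke only the monotonicity of $t\mapsto t^{s}$ on $(0,\infty)$, which holds since $s>0$. From the hypothesis $x\ge c^{1/r}$ we get $x^{s}\ge(c^{1/r})^{s}=c^{s/r}$, and since both sides are strictly positive, taking reciprocals reverses the inequality to yield $\tfrac{1}{x^{s}}\le\tfrac{1}{c^{s/r}}$. This uses nothing beyond $x>0$ and $c>0$.

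For the left inequality $\exp(-cx^{s})x^{r}\le\tfrac{1}{x^{s}}$, I would clear denominators by multiplying through by the positive quantity $x^{s}\exp(cx^{s})$, turning the claim into the equivalent form $x^{r+s}\le\exp(cx^{s})$, i.e.\ into $g(x)\ge0$ where $g(x):=cx^{s}-(r+s)\ln x$. The natural tool is the scalar fact $\ln w\le w$ for $w>0$ (equivalently $we^{-w}\le1$, the sharpest ``exponential beats polynomial'' statement), and the goal is to show $\exp(cx^{s})$ dominates $x^{r+s}$ throughout the region $x\ge c^{1/r}$. I would study $g$ by differentiation: $g'(x)=\tfrac{1}{x}\big(csx^{s}-(r+s)\big)$, so $g$ is convex-like with a unique interior minimizer $x_{0}=\big(\tfrac{r+s}{cs}\big)^{1/s}$, and the whole question collapses to controlling the single value $g(x_{0})$ together with the position of $x_{0}$ relative to the boundary $c^{1/r}$.

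The main obstacle is exactly this interior-minimum analysis, and I would split it into two regimes. When $x_{0}\le c^{1/r}$, the function $g$ is increasing on the hypothesis region, so its minimum there is the left-endpoint value $g(c^{1/r})=c^{(r+s)/r}-\tfrac{r+s}{r}\ln c=w-\ln w$ with $w:=c^{(r+s)/r}$, which is nonnegative by the scalar fact $\ln w\le w$; this regime should go through cleanly. The delicate regime is $x_{0}\ge c^{1/r}$, where the minimizer sits strictly inside the region and one must certify $g(x_{0})=\tfrac{r+s}{s}\big(1-\ln\tfrac{r+s}{cs}\big)\ge0$; I expect this final quantitative comparison between $c$ and $\tfrac{r+s}{s}$ to be the crux of the argument, so I would concentrate the technical effort precisely on bounding $g(x_{0})$ from below in that case.
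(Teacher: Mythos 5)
Your reduction of the left-hand inequality to $g(x) \coloneqq cx^{s}-(r+s)\ln x \ge 0$, your treatment of the right-hand inequality, and your analysis of the endpoint regime $x_{0}\le c^{1/r}$ are all correct. But the interior regime you deferred is not a technical crux that more effort can close: it is impossible, because the lemma as stated is false there. By your own computation the minimum value is $g(x_{0})=\tfrac{r+s}{s}\bigl(1-\ln\tfrac{r+s}{cs}\bigr)$, which is strictly negative whenever $c<\tfrac{r+s}{es}$, and for such $c$ the minimizer $x_{0}=\bigl(\tfrac{r+s}{cs}\bigr)^{1/s}$ does lie inside the region $[c^{1/r},\infty)$ (as $c\downarrow 0$, $x_{0}\to\infty$ while $c^{1/r}\to 0$). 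A concrete counterexample to the lemma: take $c=\tfrac{1}{100}$, $r=s=1$, $x=10$; then $x\ge c^{1/r}=\tfrac{1}{100}$, yet $\exp(-cx^{s})x^{r}=10e^{-0.1}\approx 9.05$ while $\tfrac{1}{x^{s}}=0.1$. In fact, combining your two regimes shows the chain of inequalities holds for all $x\ge c^{1/r}$ if and only if the additional condition $c\ge\tfrac{r+s}{es}$ is imposed, which the hypotheses $c,r,s>0$ do not guarantee.

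For comparison, the paper's own proof rearranges the claim to $cx^{s}\ge\log x^{r+s}$ and then asserts without justification that this holds once $x^{r}\ge c$ --- which is exactly the false step your calculus isolates, so your proposal carried to completion exposes an error in the paper rather than a gap in your method. What is true, and what the paper's downstream applications (the bounds on $\eta_{t}\|\nabla f_{t}(s_{t},\theta_{t})\|$ and on $\eta_{t}f_{t}(s_{t},\theta_{t})$) actually need, is the unconditional estimate $\sup_{x>0}x^{r}e^{-cx^{s}}=\bigl(\tfrac{r}{ecs}\bigr)^{r/s}$, obtained from the same critical-point computation applied to $x^{r}e^{-cx^{s}}$ directly; for instance $xe^{-cx}\le\tfrac{1}{ec}$ and $xe^{-cx^{2}}\le\tfrac{1}{\sqrt{2ec}}$ for all $x>0$. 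Substituting this bound for the lemma repairs those results with modified (and in places smaller) constants; note that the paper's experiments use parameter values (e.g.\ $c=\tfrac{1}{a}$ with $a=10^{4}$) that fall squarely in the regime where the lemma as stated fails, so this correction is not vacuous.
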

\begin{proof}
      We will show that for $x \geq c^{\frac{1}{r}}$, $x^r \exp(-c x^s) \leq \frac{1}{x^s}$ holds. Below, we assume $x^r \exp(-c x^s) \leq \frac{1}{x^s}$ and find a range of $x$ that satisfies this assumption.
    \begin{align*}
        x^r \exp(-c x^s) \leq \frac{1}{x^s}
        \implies r \log x - cx^s \leq -s \log x
        \implies cx^s \geq \log x^{r + s}
    \end{align*}
    Take $x^r \geq c$. As long as $x > 0$, the above inequality is valid.
\end{proof}

\begin{lemma}
    \label{lem: bound on eta grad^r}
    Let $f_t(s_t, \theta)$ be an $m$-strongly convex function whose gradient follows inequality~\eqref{eq: grad inequality}. Then,
    \begin{align}
        \label{eq: bound on eta grad^r}
        \eta_t \| \nabla f_t(s_t, \theta_t) \| \leq \conC \coloneqq  G + \max( \frac{mL}{2ab}, \frac{4a^2L}{m^2 b} ) \;,
    \end{align}
    where $\eta_t$ is defined according to Equation~\eqref{eq: eta definition}.
\end{lemma}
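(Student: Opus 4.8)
The plan is to begin from the relaxed gradient assumption~\eqref{eq: grad inequality}, namely $\|\nabla f_t(s_t,\theta_t)\| \leq G + L\|\theta_t - \omega_t^*\|$, and to split the target quantity multiplicatively as $\eta_t\|\nabla f_t(s_t,\theta_t)\| \leq \eta_t G + L\,\eta_t\|\theta_t - \omega_t^*\|$. Since $\eta_t = \frac{\exp(-f_t(s_t,\theta_t)/a)}{b+\exp(-f_t(s_t,\theta_t)/a)}$ is of the form $\frac{x}{b+x}$ with $x>0$, it lies in $(0,1)$, so the first term is immediately bounded by $G$. Observing that $\conC = G + L\conD$ with $\conD = \frac{1}{b}\max\bigl(\frac{m}{2a},\frac{4a^2}{m^2}\bigr)$, the whole problem reduces to establishing the single estimate $\eta_t\|\theta_t - \omega_t^*\| \leq \conD$.

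To control this remaining term, I would first weaken the denominator of $\eta_t$ using $\frac{x}{b+x}\leq \frac{x}{b}$, giving $\eta_t \leq \frac{1}{b}\exp\bigl(-\tfrac{1}{a}f_t(s_t,\theta_t)\bigr)$. Next I would convert the loss value in the exponent into a distance to the minimizer via $m$-strong convexity: the quadratic lower bound at $\omega_t^*$, combined with the first-order optimality of $\omega_t^*$ on $\Theta$ (so that $\inner{\nabla f_t(s_t,\omega_t^*)}{\theta_t - \omega_t^*}\geq 0$) and the non-negativity of $f_t$, yields $f_t(s_t,\theta_t)\geq \frac{m}{2}\|\theta_t - \omega_t^*\|^2$. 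Substituting this bound into the exponent produces
\[
\eta_t\|\theta_t - \omega_t^*\| \leq \tfrac{1}{b}\exp\Bigl(-\tfrac{m}{2a}\|\theta_t - \omega_t^*\|^2\Bigr)\|\theta_t - \omega_t^*\|.
\]

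Finally, I would invoke Lemma~\ref{lem: exp trumps poly} with $x = \|\theta_t - \omega_t^*\|$, $c=\frac{m}{2a}$, $r=1$, and $s=2$. For $x\geq c^{1/r}=\frac{m}{2a}$ the lemma bounds the right-hand side above by $\frac{1}{b}\cdot\frac{1}{c^{s/r}} = \frac{1}{b}\cdot\frac{4a^2}{m^2}$, while for $x < \frac{m}{2a}$ the trivial estimate $\exp(-cx^2)x\leq x < \frac{m}{2a}$ applies. Taking the maximum over the two regimes gives exactly $\eta_t\|\theta_t - \omega_t^*\| \leq \frac{1}{b}\max\bigl(\frac{m}{2a},\frac{4a^2}{m^2}\bigr)=\conD$, and hence $\eta_t\|\nabla f_t(s_t,\theta_t)\| \leq G + L\conD = \conC$.

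The main obstacle is the case split in this last step: Lemma~\ref{lem: exp trumps poly} only controls the polynomial-times-exponential product once $x$ exceeds the threshold $c^{1/r}$, so the small-$x$ regime must be treated separately, and it is precisely the union of the two regimes that gives rise to the $\max(\cdot,\cdot)$ appearing in $\conD$ (and therefore in $\conC$). A secondary point deserving care is the strong-convexity step when $\omega_t^*$ sits on the boundary of $\Theta$, where one must rely on the variational inequality $\inner{\nabla f_t(s_t,\omega_t^*)}{\theta_t-\omega_t^*}\geq 0$ rather than on $\nabla f_t(s_t,\omega_t^*)=0$.
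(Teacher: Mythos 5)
Your proposal is correct and follows essentially the same route as the paper's proof: split off the $G$ term using $\eta_t\le 1$, use $m$-strong convexity together with the variational inequality at $\omega_t^*$ (and non-negativity of $f_t$) to get $\eta_t\le \frac{1}{b}\exp\bigl(-\frac{m}{2a}\|\theta_t-\omega_t^*\|^2\bigr)$, and then case-split on $\|\theta_t-\omega_t^*\|$ against the threshold $\frac{m}{2a}$ via Lemma~\ref{lem: exp trumps poly} to obtain the $\max$ in $\conD$. Your observation that $\conC=G+L\conD$ and that the key intermediate estimate is exactly the bound on $\eta_t\|\theta_t-\omega_t^*\|$ is consistent with how the paper reuses that quantity elsewhere.
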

\begin{proof}
\label{proof: bound on eta grad^r}

We begin our proof by analyzing the following quantity.
\begin{align}
    \eta_t \| \nabla f_t(s_t, \theta_t) \| &=  \frac{ \exp( - \frac{1}{a} f_t(s_t, \theta_t) )  }{\exp( - \frac{1}{a} f_t(s_t, \theta_t) ) + b} \| \nabla f_t(s_t, \theta_t) \|
\end{align}

Note that due to $m$-strong convexity of $f_t(s_t, \cdot)$, we have
\begin{align}
    f_t(s_t, \omega_t^*) + \inner{\nabla f_t(s_t, \omega_t^*)}{ \theta_t - \omega_t^* } + \frac{m}{2} \| \theta_t - \omega_t^*  \|^2 \leq f_t(s_t, \theta_t)
\end{align}
where $\omega_t^* = \arg\min_{\theta \in \Theta} f_t(s_t, \theta)$. Observe that $\inner{\nabla f_t(s_t, \omega_t^*)}{\theta_t - \omega_t^*} \geq 0$. Thus,
\begin{align}
    f_t(s_t, \omega_t^*) + \frac{m}{2} \| \theta_t - \omega_t^*  \|^2 \leq f_t(s_t, \theta_t)
\end{align}

This leads to the following inequality:
\begin{align}
\begin{split}
    \eta_t \| \nabla f_t(s_t, \theta_t) \| &\leq \eta_t G + L  \frac{\exp(-\frac{1}{a} ( f_t(s_t, w_t^*) )) \exp(-\frac{1}{a} \frac{m}{2} \| \theta_t - \omega_t^* \|^2) \| \theta_t - \omega_t^* \|}{\exp( - \frac{1}{a} f_t(s_t, \theta_t) ) + b} \\
    &\leq G +  \frac{L}{\exp( - \frac{1}{a} f_t(s_t, \theta_t) ) + b}  \exp(-\frac{1}{a} \frac{m}{2} \| \theta_t - \omega_t^* \|^2) \| \theta_t - \omega_t^* \|\\
    &\leq G +  \frac{L}{b}  \exp(-\frac{1}{a} \frac{m}{2} \| \theta_t - \omega_t^* \|^2) \| \theta_t - \omega_t^* \|
\end{split}
\end{align}

As the first case, let $\| \theta_t - \omega_t^* \| < \frac{m}{2a}$, then
\begin{align}
\begin{split}
    \eta_t \| \nabla f_t(s_t, \theta_t) \| &\leq G + \frac{mL}{2ab} \; .
\end{split}
\end{align}
Now, let $\| \theta_t - \omega_t^* \| \geq \frac{m}{2a}$, then using the results from Lemma~\ref{lem: exp trumps poly}, we have
\begin{align}
    \begin{split}
        \eta_t \| \nabla f_t(s_t, \theta_t) \| &\leq G + \frac{4a^2L}{m^2b}
    \end{split}
\end{align}

It follows that,
\begin{align}
    \eta_t \| \nabla f_t(s_t, \theta_t) \| &\leq \conC \coloneqq G + \max( \frac{mL}{2ab}, \frac{4a^2L}{m^2 b} ) \; .
\end{align}
\end{proof}

\begin{lemma}
    \label{lem: bound on eta f}
    Let $f_t(s_t, \theta)$ be a function whose gradient follows inequality~\eqref{eq: grad inequality}. Then,
    \begin{align}
        \label{eq: bound on eta f}
        \eta_t f_t(s_t, \theta_t) \leq \frac{1}{b} \max(a, \frac{1}{a}) \coloneqq \conF  \;,
    \end{align}
    where $\eta_t$ is defined according to Equation~\eqref{eq: eta definition}.
\end{lemma}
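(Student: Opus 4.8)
The plan is to exploit exactly the two ingredients already used in the proof of Lemma~\ref{lem: bound on eta grad^r}: the trivial lower bound $b + \exp(-\frac{1}{a} f_t(s_t,\theta_t)) \geq b$ on the denominator of $\eta_t$, together with the ``exponential beats polynomial'' estimate of Lemma~\ref{lem: exp trumps poly}. First I would invoke the standing assumption that the losses are non-negative, so that $f_t(s_t,\theta_t) \geq 0$, and discard the positive term $\exp(-\frac{1}{a} f_t(s_t,\theta_t))$ from the denominator to write
\begin{align}
\eta_t f_t(s_t,\theta_t) = \frac{\exp(-\frac{1}{a} f_t(s_t,\theta_t))}{b + \exp(-\frac{1}{a} f_t(s_t,\theta_t))}\, f_t(s_t,\theta_t) \leq \frac{1}{b}\, f_t(s_t,\theta_t)\exp\Big(-\frac{1}{a} f_t(s_t,\theta_t)\Big)~.
\end{align}
This reduces the lemma to a one-dimensional bound on the scalar map $y \mapsto y\exp(-y/a)$ over $y \geq 0$.

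The core step is to bound $y\exp(-y/a)$ by $\max(a, \frac{1}{a})$ by splitting into two regimes, exactly as was done for $\|\theta_t - \omega_t^*\|$ in Lemma~\ref{lem: bound on eta grad^r}. For $y \geq \frac{1}{a}$, I would apply Lemma~\ref{lem: exp trumps poly} with $x = y$, $c = \frac{1}{a}$, and $r = s = 1$; since then $x \geq c^{1/r} = \frac{1}{a}$, the lemma yields $y\exp(-y/a) \leq \frac{1}{y} \leq \frac{1}{c^{s/r}} = a$. For $y < \frac{1}{a}$, using only $\exp(-y/a) \leq 1$ gives $y\exp(-y/a) \leq y < \frac{1}{a}$. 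Combining the two regimes, which together cover all of $[0,\infty)$, produces $y\exp(-y/a) \leq \max(a, \frac{1}{a})$.

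Substituting this back yields $\eta_t f_t(s_t,\theta_t) \leq \frac{1}{b}\max(a, \frac{1}{a}) = \conF$, which is the claim. I do not expect any genuine obstacle here; the only points requiring care are (i) using non-negativity of $f_t(s_t,\theta_t)$ so that both the denominator manipulation is valid and the two-case split is exhaustive, and (ii) instantiating the constants in Lemma~\ref{lem: exp trumps poly} correctly so that the threshold $c^{1/r}$ and the resulting bound $c^{-s/r}$ collapse to $\frac{1}{a}$ and $a$, respectively. The argument deliberately mirrors the proof of Lemma~\ref{lem: bound on eta grad^r}, with $f_t(s_t,\theta_t)$ playing the role that $\|\theta_t - \omega_t^*\|$ played there.
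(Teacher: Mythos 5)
Your proposal is correct and matches the paper's own proof essentially line for line: both drop the $\exp(-\frac{1}{a}f_t)$ term from the denominator to get the factor $\frac{1}{b}$, then split on whether $f_t(s_t,\theta_t)$ is below or above $\frac{1}{a}$, using $\exp(\cdot)\leq 1$ in the first regime and Lemma~\ref{lem: exp trumps poly} (with $c=\frac{1}{a}$, $r=s=1$) in the second. Your instantiation of the constants and the appeal to non-negativity of $f_t$ are both accurate, so there is nothing to correct.
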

\begin{proof}
    \label{proof: bound on eta f}
    We follow a similar approach as Lemma~\ref{lem: bound on eta grad^r}:
    \begin{align}
        \eta_t f_t(s_t, \theta_t) &= \frac{\exp( -\frac{1}{a} f_t(s_t, \theta_t) )}{\exp( -\frac{1}{a} f_t(s_t, \theta_t) ) + b} f_t(s_t, \theta_t) \\
        &\leq \frac{1}{b} \exp( -\frac{1}{a} f_t(s_t, \theta_t) ) f_t(s_t, \theta_t)
    \end{align}
    Note that if $f_t(s_t, \theta_t) \leq \frac{1}{a}$, then
    \begin{align}
        \eta_t f_t(s_t, \theta_t) \leq \frac{1}{ab} \;.
    \end{align}
    Using the results from Lemma~\ref{lem: exp trumps poly}, if $f_t(s_t, \theta_t) \geq \frac{1}{a}$, then
    \begin{align}
        \eta_t f_t(s_t, \theta_t) \leq \frac{a}{b} \;.
    \end{align}
    Thus,
    \begin{align}
        \eta_t f_t(s_t, \theta_t) \leq \frac{1}{b} \max(a, \frac{1}{a}) \coloneqq \conF \; .
    \end{align}
\end{proof}

\section{Experimental Validation}
\label{sec: experimental validation}
In this section, we perform numerical experiments to substantiate our theoretical findings. The experiments were conducted on a MacBook Pro running macOS 14.4.1, equipped with 32 GB of memory and an Apple M2 Max chip.

\subsection{Baseline Algorithms}
\label{subsec:baselines}

We conduct a comparative analysis, pitting \Learn{} against several baseline algorithms, each serving as a benchmark in our experiments.

\begin{itemize}
\item Top-k Filter Algorithm: Developed by~\citet{van2021robust}, this algorithm operates under the assumption that the domain of the Online Convex Optimization (OCO) problem and the gradients of the uncorrupted loss functions $f_t$ are bounded. The core concept involves filtering out $k$ rounds with the largest gradients, assuming they might be outliers. If any outlier round is not filtered, it does not influence the update too much due to the bounded gradient assumption on the uncorrupted loss. The algorithm requires precise knowledge of $k$ and provides an $\calO(\sqrt{T} + k)$ bound on clean static regret under the bounded domain and Lipschitz gradient assumption. While this algorithm is analyzed for static regret, it can be extended easily to the setting of dynamic regret.
\item Uncertain Top-k Filter Algorithm: We devise a variant of the Top-k filter algorithm equipped with an estimate of the number of outliers. This estimate was fixed at $0.75k$. It is important to emphasize that this manufactured algorithm does not come with any accompanying theoretical guarantees.
\item Online Gradient Descent (OGD) Algorithm: As our third baseline, we select OGD, a workhorse for many machine learning problems. OGD operates without explicit consideration of the presence of outliers.
\end{itemize}

In the absence of outliers, all baselines converge to Online Gradient Descent (OGD). It is imperative to note that none of these methods provide theoretical guarantees in an unbounded domain with outliers. Despite the absence of theoretical guarantees, we selected these algorithms as baselines due to a shortage of alternatives that effectively address both unbounded domains and robustness to outliers.


\subsection{Online Support Vector Machine}
\label{subsec:online-svm}

\begin{figure*}[!ht]
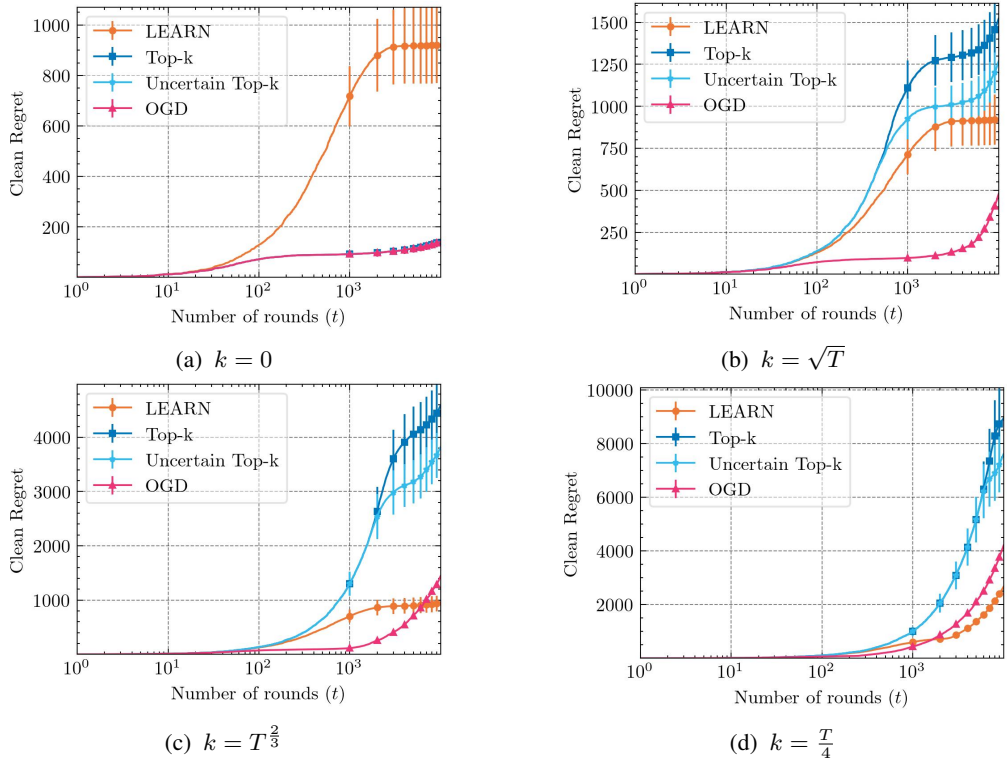

    \centering
    \begin{subfigure}{0.45\textwidth}
    \centering
    \includegraphics[scale=0.7]{svm_clean_kcfr_log.jpg}
        \caption{\label{fig: svm a} $k = 0$ }
    \end{subfigure}%
    \begin{subfigure}{0.45\textwidth}
    \centering
    \includegraphics[scale=0.7]{svm_sqrt_kcfr_log.jpg}
        \caption{\label{fig: svm b} $k = \sqrt{T}$ }
    \end{subfigure}
    \begin{subfigure}{0.45\textwidth}
    \centering
    \includegraphics[scale=0.7]{svm_twothird_kcfr_log.jpg}
        \caption{\label{fig: svm c} $k = T^{\frac{2}{3}}$ }
    \end{subfigure}%
    \begin{subfigure}{0.45\textwidth}
    \centering
    \includegraphics[scale=0.7]{svm_constant_kcfr_log.jpg}
        \caption{\label{fig: svm d} $k = \frac{T}{4}$ }
    \end{subfigure}
    \caption{\label{fig:  Clean regret svm} Clean Regret for Online SVM in Presence of Varying Number of Outliers ($k$).}
\end{figure*}

In our initial experimental configuration, we conducted a comparative analysis of \Learn{} against other baseline methods for a classification problem, employing soft-margin Support Vector Machines (SVM) in an online setting. The specifics of the experimental setup are detailed below:

\paragraph{Uncorrupted Data Generation}

In our experiments, the data for uncorrupted rounds ($t \in \calS$) was generated following a specific data generation process:
\begin{align}
\label{eq: data generation process}
y_t = \text{sign}\big( \inner{\theta^*}{x_t} \big)
\end{align}

Here, the entries of $\theta^* \in \real^2$ were uniformly drawn at random from the interval $[1, 11]$. Note that $s_t = (x_t, y_t)$ is the side information in this case. The entries of $x_t$ for uncorrupted rounds were independently drawn from a normal distribution $\mathcal{N}(0, 100)$. To allow for some mislabeling, the label of a round was flipped with a probability of $0.05$ if $\mid \inner{\theta^*}{x_t} \mid \leq 0.1$.

\paragraph{Corrupted Data Generation}

In the experimental setup, we executed a total of $10^4$ rounds, allowing the adversary the flexibility to corrupt any $k$ out of the $T$ rounds, with the choice made uniformly at random. The experiments encompassed scenarios where $k$ took on values from the set $\Big\{0, \sqrt{T}, T^{\frac{2}{3}}, \frac{T}{4}\Big\}$.

For the corrupted rounds, the labeling process involved flipping the label, specifically by setting $y_t = - \text{sign}\big( \inner{\theta^*}{x_t} \big)$.

\paragraph{Strongly Convex Loss Function}

Drawing inspiration from~\citet{shalev2007pegasos}, we adopted a fixed loss function $f_t$ for each round, defined as:
\begin{align}
\label{eq:online svm loss}
f_t\big( (x_t, y_t), \theta \big) = \frac{\lambda}{2} \| \theta \|^2 + \max(0, 1 - y_t \inner{x_t}{\theta})~.
\end{align}

Similar to \citet{shalev2007pegasos}, we set the parameter $\lambda$ to $10^{-4}$. To maintain an unbounded domain for $\theta$, we considered it as belonging to $\real^2$. It should be noted that in a noiseless setting with sufficiently small $\lambda$, dynamic optimal actions $\theta_t^*$ match the ground truth $\theta^*$.

\paragraph{Choice of Parameters}

In the case of \Learn{}, the parameter $a$ was set to $10^4$, while $b$ was maintained at $10$. This selection was determined through a grid search across a small range of values. We did not undertake specific tuning efforts for optimality. Across all methods, including both baselines and \Learn{}, a consistent step size of $\alpha = \frac{1}{\sqrt{T}}$ was employed.

\paragraph{Results}

The experiments were executed across 30 independent runs, and the outcomes are presented in Figure~\ref{fig:  Clean regret svm}. Referring back to Theorem~\ref{thm: clean dynamic regret bound}, we anticipate the clean regret to vary in the order of $\calO(\sqrt{T} + k)$. This implies sublinear regret for \Learn{} until $k = T^{\frac{2}{3}}$ and constant regret when $k = \frac{T}{4}$. The same behavior can be observed from the plots in Figure~\ref{fig:  Clean regret svm}. The empirical observations derived from the experiment are outlined below:
\begin{itemize}
    \item \Learn{} adopts a cautious updating strategy in the uncorrupted regime: This is illustrated by the observed gap in clean regret in Figure \ref{fig: svm a}. In this context, the clean regret for other baselines flattens at a significantly lower value compared to \Learn{}. This behavior stems from \Learn{}'s inherent cautious update mechanism, specifically designed to anticipate the presence of corrupted rounds. Consequently, \Learn{} experiences a substantial accumulation of regret initially, navigating a careful trajectory until it converges to the optimal action, leading to the eventual flattening of the regret curve.
    \item \Learn{} exhibits robustness to outliers: With an increasing value of $k$, the performance of the baselines markedly deteriorates, while \Learn{} remains relatively unaffected until Figure~\ref{fig: svm d}. This is evident when inspecting the maximum regret values across various curves. For \Learn{}, the maximum regret hovers around $900$ until Figure~\ref{fig: svm d}, whereas other baselines experience a steep surge in regret in the presence of outliers.
    \item \Learn{} is good at capturing the underlying ground truth: In cases where the data adheres to a specific generating process, the presence of a flat curve indicates convergence to the optimal action ($\theta^*$). Remarkably, \Learn{} maintains a flat region in Figures~\ref{fig: svm a} to \ref{fig: svm c} even in the presence of outliers, showcasing its ability to learn and adapt to the true underlying dynamics. In contrast, other methods falter in achieving this sustained flatness. Upon closer examination of the rate of growth, it becomes apparent that OGD is particularly susceptible to the disruptive influence of outliers.
    \item A constant proportion of outliers proves detrimental for all methods: As anticipated by Theorem~\ref{thm: clean dynamic regret bound}, \Learn{} begins to accumulate increasing regret when $k = \frac{T}{4}$. This phenomenon is evident in Figure~\ref{fig: svm d}. In fact, all the baseline methods exhibit poor performance in this regime.
\end{itemize}

\paragraph{Visualizing the Decision Boundary}

\begin{figure*}[!ht]
    \centering
    \begin{subfigure}{0.45\textwidth}
    \includegraphics[scale=0.85]{svm_clean_decision.jpg}
        \caption{\label{fig: svm db a} $k = 0$ }
    \end{subfigure}%
        \begin{subfigure}{0.45\textwidth}
    \includegraphics[scale=0.85]{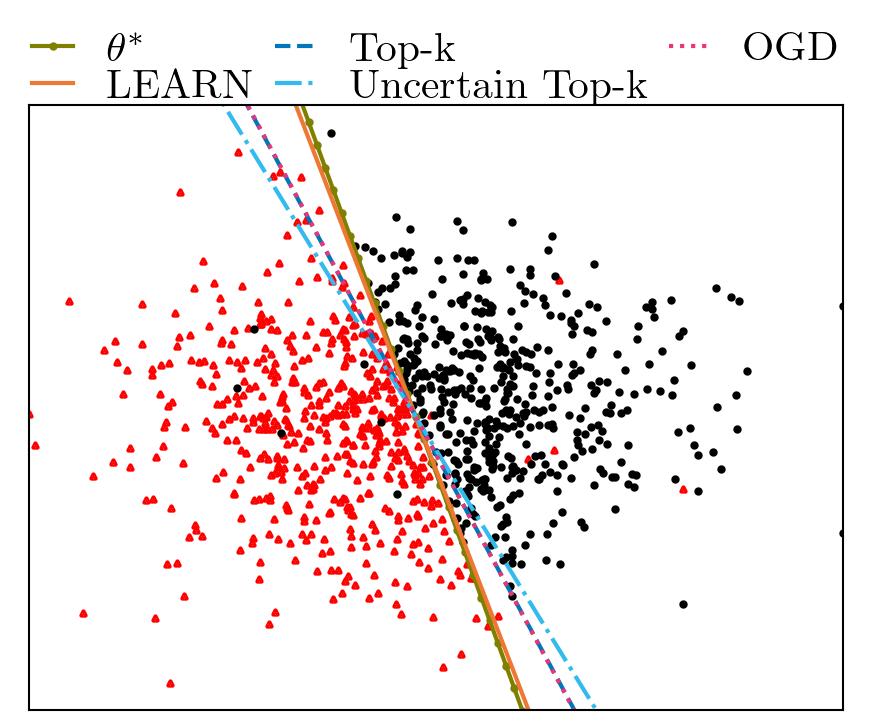}
        \caption{\label{fig: svm db b} $k = \sqrt{T}$ }
    \end{subfigure}
    \begin{subfigure}{0.45\textwidth}
    \includegraphics[scale=0.85]{svm_twothird_decision.jpg}
        \caption{\label{fig: svm db c} $k = T^{\frac{2}{3}}$ }
    \end{subfigure}%
    \begin{subfigure}{0.45\textwidth}
    \includegraphics[scale=0.85]{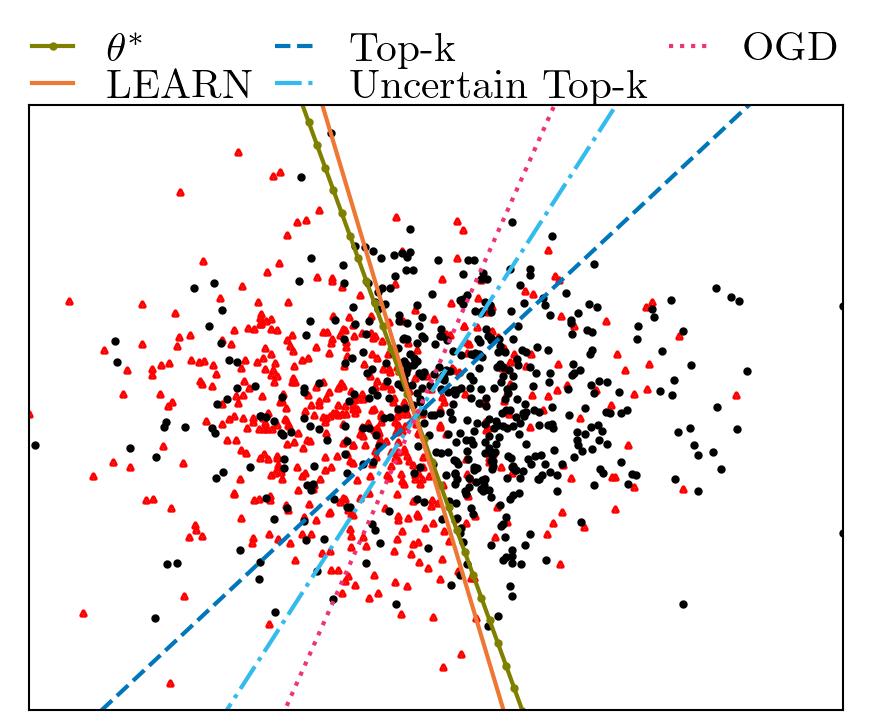}
        \caption{\label{fig: svm db d} $k = \frac{T}{4}$ }
    \end{subfigure}
    \caption{\label{fig:  decision boundary for svm} Visualization of Decision Boundary in Presence of Varying Number of Outliers ($k$).}
\end{figure*}

As the data (side information) adheres to a noiseless generative model, the online SVM problem can be thought of as a learning task where the objective is to learn the classification parameter $\theta^*$. The decision boundaries learned by different methods with varying numbers of outliers are illustrated in Figure \ref{fig: decision boundary for svm} for a single run. The decision boundary for $\theta^*$ (plotted as $\inner{\theta^*}{x} = 0$) is also depicted. To enhance clarity, only $500$ rounds out of the total $T=10^4$ rounds are randomly selected for plotting.

In scenarios with uncorrupted data, all methods accurately learn $\theta^*$. However, as the number of corrupted rounds increases, the decision boundaries for other baselines gradually deviate from the optimal boundary. In contrast, \Learn{} exhibits robustness against outliers, maintaining a decision boundary close to the optimal configuration even in the presence of corrupted data.

\subsection{Online Linear Regression}
\label{subsec:online-linear-regression}

\begin{figure*}[!ht]
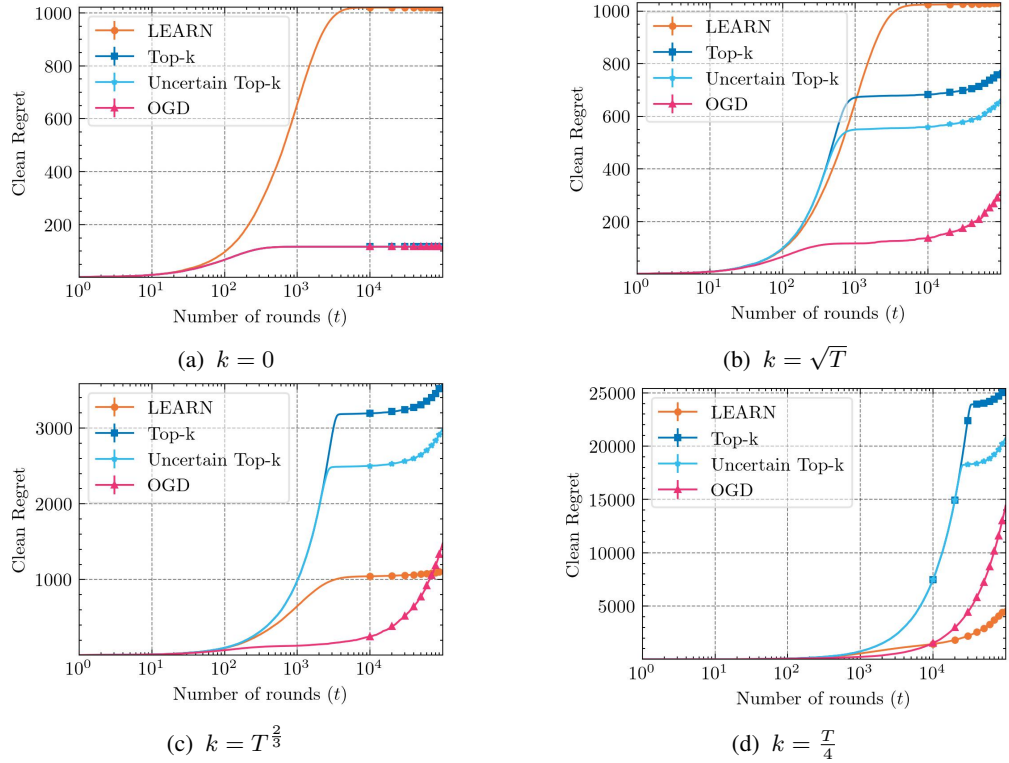

    \centering
    \begin{subfigure}{0.45\textwidth}
    \centering
    \includegraphics[scale=0.7]{lin_reg_clean_kcfr_log.jpg}
        \caption{\label{fig: lin reg a} $k = 0$ }
    \end{subfigure}%
        \begin{subfigure}{0.45\textwidth}
    \centering
    \includegraphics[scale=0.7]{lin_reg_sqrt_kcfr_log.jpg}
        \caption{\label{fig: lin reg b} $k = \sqrt{T}$ }
    \end{subfigure}
    \begin{subfigure}{0.45\textwidth}
    \centering
    \includegraphics[scale=0.7]{lin_reg_twothird_kcfr_log.jpg}
        \caption{\label{fig: lin reg c} $k = T^{\frac{2}{3}}$ }
    \end{subfigure}%
    \begin{subfigure}{0.45\textwidth}
    \centering
    \includegraphics[scale=0.7]{lin_reg_constant_kcfr_log.jpg}
        \caption{\label{fig: lin reg d} $k = \frac{T}{4}$ }
    \end{subfigure}
    \caption{\label{fig:  Clean regret for lin reg} Clean Dynamic Regret for Online Linear Regression in Presence of Varying Number of Outliers ($k$).}
\end{figure*}

In our subsequent experimental setup, we undertook a comparative analysis, juxtaposing the performance of \Learn{} against other baseline methods for a regression problem. Specifically, we explored the online ridge regression setting. The details of this experimental configuration are elaborated below:

\paragraph{Uncorrupted Data Generation}

The data for uncorrupted rounds ($t \in \calS$) adhered to a specific data generation process:
\begin{align}
\label{eq:lin-reg-data-generation-process}
y_t = \inner{\theta^*}{x_t} + e_t
\end{align}

In this context, the entries of $\theta^* \in \real^{100}$ were uniformly drawn at random from the interval $[-1, 1]$, followed by normalization to a unit norm. Note that $s_t = (x_t, y_t)$ is again the side information. The entries of $x_t$ for uncorrupted rounds were independently drawn from a normal distribution $\mathcal{N}(0, 1)$. Additionally, a small additive noise $e_t$ was independently chosen from $\mathcal{N}(0, 10^{-6})$.

\paragraph{Corrupted Data Generation}

We conducted a total of $10^5$ rounds, providing the adversary with the flexibility to corrupt any $k$ out of the $T$ rounds, with the choice made uniformly at random. The experiments covered scenarios where $k$ assumed values from the set $\Big\{0, \sqrt{T}, T^{\frac{2}{3}}, \frac{T}{4}\Big\}$. In the context of corrupted rounds, $y_t$ was  chosen uniformly at random from the interval $[0, 1]$.

\paragraph{Strongly Convex Loss Function}

We employed the following fixed loss function $f_t$ for each round:
\begin{align}
\label{eq:online lin reg loss}
f_t\big( (x_t, y_t), \theta \big) = \frac{\lambda}{2} \| \theta \|^2 +  (y_t - \inner{x_t}{\theta})^2~.
\end{align}

The parameter $\lambda$ was set to $10^{-4}$. To enforce an unbounded domain for $\theta$, we considered it to be an element of $\real^{100}$. In each round, the comparator $\theta_t^*$ was selected by minimizing the function $f_t((x_t, y_t), \theta)$ as defined in Equation \eqref{eq:online lin reg loss}. Although it may not precisely match the ground truth $\theta^*$ due to the low influence of additive noise, our experimental results indicate its proximity to $\theta^*$.

\paragraph{Choice of Parameters}

In the context of \Learn{}, the parameter $a$ was set to $10$, and $b$ was held at $10$. Once again, this choice resulted from a grid search across a limited range of values, with no dedicated tuning efforts for optimality. Consistently, across all methods, including both baselines and \Learn{}, a uniform step size of $\alpha = \frac{1}{\sqrt{T}}$ was utilized.

\paragraph{Results}

The experiments were executed across 30 independent runs, and the outcomes are presented in Figure~\ref{fig:  Clean regret for lin reg}. The results show the similar behavior as in the case of online SVM. We recall them here in the context of ridge regression:

\begin{itemize}
    \item Similar to Section~\ref{subsec:online-svm}, \Learn{}  again employs a cautious update strategy, evident in the gap in clean regret in Figure \ref{fig: lin reg a}. Unlike other baselines, \Learn{}  accumulates initial regret due to its cautious update mechanism designed for anticipating corrupted rounds, leading to slow flattening of the regret curve.
    \item In the presence of outliers (until Figure~\ref{fig: lin reg d}), \Learn{} remains robust, maintaining a maximum regret around $1000$, while other baselines experience a sharp surge in regret.
    \item In our experiments, due to low noise levels, \Learn{} captures the underlying ground truth, sustaining a flat region in Figures~\ref{fig: lin reg a} to \ref{fig: lin reg c} even with outliers. This contrasts with other methods struggling to achieve sustained flatness. OGD, in particular, proves susceptible to outlier influence.
    \item A constant proportion of outliers adversely affects all methods, as expected from Theorem~\ref{thm: clean dynamic regret bound}. \Learn{} starts accumulating increasing regret when $k = \frac{T}{4}$, depicted in Figure~\ref{fig: lin reg d}, with all baseline methods exhibiting poor performance in this regime.
\end{itemize}

\paragraph{Discussion on the choice of the step size for baselines}

To clarify our choice of step size in the experiments: while it is theoretically possible to set $\alpha_t = \frac{1}{mt}$ given that the losses are $m$-strongly convex, we encountered a situation where the value of $m$ was extremely small ($m = \lambda = 10^{-4}$). This led to an excessively large step size, causing the anticipated regret bound of $\calO(\frac{\log T}{m})$ to become quite substantial. Upon numerical evaluation, we found that using $\alpha = \frac{1}{\sqrt{T}}$ produced a significantly smaller regret for the baselines. As a result, we adopted this more favorable step size in our experimental framework.

\newpage

\end{document}